\newcommand{\squishlisttwo}{
 \begin{list}{$\bullet$}
  { \setlength{\itemsep}{1pt}
     \setlength{\parsep}{0pt}
    \setlength{\topsep}{0pt}
    \setlength{\partopsep}{0pt}
    \setlength{\leftmargin}{1em}
    \setlength{\labelwidth}{1.5em}
    \setlength{\labelsep}{0.5em} } }
\newcommand{\squishend}{
  \end{list}  }
\theoremstyle{plain}
\newtheorem{theorem}{Theorem}[section]
\newtheorem{lemma}[theorem]{Lemma}
\theoremstyle{definition}
\newtheorem{assumption}[theorem]{Assumption}
\theoremstyle{remark}
\theoremstyle{plain}
\theoremstyle{definition}
\theoremstyle{remark}
\newcommand{\norm}[1]{\left\| #1 \right\|}
\newcommand{\gtclusters}{\textit{ground-truth clusters}}
\newcommand{\gtcluster}{\textit{ground-truth cluster}}
\newcommand{\EE}{\mathbb{E}}
\newcommand{\RR}{\mathbb{R}}
\newcommand{\bA}{\boldsymbol{A}}
\newcommand{\bM}{\boldsymbol{M}}
\newcommand{\bx}{\boldsymbol{x}}
\newcommand{\bV}{\boldsymbol{V}}
\newcommand{\bzero}{\boldsymbol{0}}
\newcommand{\alglinelabel}{%
  \addtocounter{ALC@line}{-1}
  \refstepcounter{ALC@line}
  \label
}
\newcommand{\btheta}{\boldsymbol{\theta}}
\newcommand{\cX}{\mathcal{X}}
\def\eqref#1{equation~(\ref{#1})}
\def\eps
\icmltitlerunning{Online Clustering of Dueling Bandits}
\begin{document}

\twocolumn[
\icmltitle{Online Clustering of Dueling Bandits}



\icmlsetsymbol{equal}{*}

\begin{icmlauthorlist}
\icmlauthor{Zhiyong Wang}{cuhk}
\icmlauthor{Jiahang Sun}{tj}
\icmlauthor{Mingze Kong}{cuhksz}
\icmlauthor{Jize Xie}{hkust}
\icmlauthor{Qinghua Hu}{tju}
\icmlauthor{John C.S. Lui}{cuhk}
\icmlauthor{Zhongxiang Dai}{cuhksz}
\end{icmlauthorlist}

\icmlaffiliation{cuhk}{The Chinese University of Hong Kong}
\icmlaffiliation{tj}{Tongji University}
\icmlaffiliation{cuhksz}{The Chinese University of Hong Kong, Shenzhen}
\icmlaffiliation{hkust}{Hong Kong University of Science and Technology}
\icmlaffiliation{tju}{Tianjin University}

\icmlcorrespondingauthor{Zhongxiang Dai}{daizhongxiang@cuhk.edu.cn}

\icmlkeywords{Multi-armed bandits, dueling bandits, clustering of bandits}

\vskip 0.3in
]



\printAffiliationsAndNotice{}  

\begin{abstract}
The contextual multi-armed bandit (MAB) is a widely used framework for problems requiring sequential decision-making under uncertainty, such as recommendation systems. In applications involving a large number of users, the performance of contextual MAB can be significantly improved by facilitating collaboration among multiple users. This has been achieved by the clustering of bandits (CB) methods, which adaptively group the users into different clusters and achieve collaboration by allowing the users in the same cluster to share data. However, classical CB algorithms typically rely on numerical reward feedback, which may not be practical in certain real-world applications.  For instance, in recommendation systems, it is more realistic and reliable to solicit \textit{preference feedback} between pairs of recommended items rather than absolute rewards. To address this limitation, we introduce the first "clustering of dueling bandit algorithms" to enable collaborative decision-making based on preference feedback. We propose two novel algorithms: (1) Clustering of Linear Dueling Bandits (COLDB) which models the user reward functions as linear functions of the context vectors, and (2) Clustering of Neural Dueling Bandits (CONDB) which uses a neural network to model complex, non-linear user reward functions. Both algorithms are supported by rigorous theoretical analyses, demonstrating that user collaboration leads to improved regret bounds. Extensive empirical evaluations on synthetic and real-world datasets further validate the effectiveness of our methods, establishing their potential in real-world applications involving multiple users with preference-based feedback. 

\end{abstract}

\section{Introduction}

The contextual multi-armed bandit (MAB) is a widely used method in real-world applications requiring sequential decision-making under uncertainty, such as recommendation systems, computer networks, among others \cite{li2010contextual}.
In a contextual MAB problem, a user faces a set of $K$ arms (i.e., context vectors) in every round, selects one of these $K$ arms, and then observes a corresponding numerical reward \cite{lattimore2020bandit}.
In order to select the arms to maximize the cumulative reward (or equivalently minimize the cumulative regret), we often need to consider the trade-off between the \emph{exploration} of the arms whose unknown rewards are associated with large uncertainty and \emph{exploitation} of the available observations collected so far.
To carefully handle this trade-off, we often model the reward function using a surrogate model, such as a linear model \cite{chu2011contextual} or a neural network \cite{zhou2020neural}.

Some important applications of contextual MAB, such as recommendation systems, often involve a large number (e.g., in the scale of millions) of users, which opens up the possibility of further improving the performance of contextual MAB via user collaboration.
To this end, the method of \emph{online Clustering of Bandits} (CB) has been proposed, which adaptively partitions the users into a number of clusters and leverages the collaborative effect of the users in the same cluster to achieve improved performance \cite{gentile2014online,wang2024onlinea,10.5555/3367243.3367445}.

Classical CB algorithms usually require an absolute real-valued numerical reward as feedback for each arm \cite{wang2024onlinea}. However, in some crucial applications of contextual MAB, it is often more realistic and reliable to request the users for \emph{preference feedback}.
For example, in recommendation systems, it is often preferable to recommend a pair of items to a user and then ask the user for relative feedback (i.e., which item is preferred) \cite{JCSS12_yue2012k}.
As another example, contextual MAB has been successfully adopted to optimize the input prompt for large language models (LLMs), which is often referred to as \emph{prompt optimization} \cite{lin2024prompt,lin2023instinct}.
In this application, instead of requesting an LLM user for a numerical score as feedback, it is more practical to show the user a pair of LLM responses generated by two candidate prompts and ask the user which response is preferred \cite{lin2024prompt,verma2024neural}.

A classical and principled approach to account for preference feedback in contextual MAB is the framework of contextual \emph{dueling bandit} 
\cite{NeurIPS21_saha2021optimal,ICML22_bengs2022stochastic,ALT22_saha2022efficient,arXiv24_li2024feelgood}.
In every round of contextual dueling bandits, a pair of arms are selected, after which a binary observation is collected reflecting which arm is preferred.
However, classical dueling bandit algorithms are not able to leverage the collaboration of multiple users, which leaves significant untapped potential to further improve the performance in these applications involving preference feedback.
In this work, we bring together the merits of both approaches, and hence introduce the first \emph{clustering of dueling bandit} algorithms, enabling multi-user collaboration in scenarios involving preference feedback.

We firstly proposed our \emph{Clustering Of Linear Dueling Bandits} (COLDB) algorithm (Sec.~\ref{subsec:algo:coldb}), which assumes that the latent reward function of each user is a linear function of the context vectors (i.e., the arm features). In addition, to handle challenging real-world scenarios with complicated non-linear reward functions, we extend our COLDB algorithm to use a \emph{neural network to model the reward function}, hence introducing our \emph{Clustering Of Neural Dueling Bandits} (CONDB) algorithm (Sec.~\ref{subsec:algo:condb}).
Both algorithms adopt a graph to represent the estimated clustering structure of all users, and adaptively update the graph to iteratively refine the estimate. After receiving a user in every round, our both algorithms firstly assign the user to its estimated cluster, and then leverage the data from all users in the estimated cluster to learn a linear model (COLDB) or a neural network (CONDB), which is then used to select a pair of arms for the user to query for preference feedback. After that, we update the reward function estimate for the user based on the newly observed feedback, and then update the graph to remove its connection with users who are estimated to belong to a different cluster.

We conduct rigorous theoretical analysis for both our COLDB and CONDB algorithms, and our theoretical results demonstrate that the regret upper bounds of both algorithms are sub-linear and that a larger degree of user collaboration (i.e., when a larger number of users belong to the same cluster on average) leads to theoretically guaranteed improvement (Sec.~\ref{sec:theory}).
In addition, we also perform both synthetic and real-world experiments to demonstrate the practical advantage of our algorithms and the benefit of user collaboration in contextual MAB problems with preference feedback (Sec.~\ref{sec:experiments}).

\section{Problem Setting}\label{sec: setting}

This section formulates the problem of \emph{clustering of dueling bandits}. In the following, we use boldface lowercase letters for vectors and boldface uppercase letters for matrices. The number of elements in a set \( \mathcal{A} \) is denoted as \( |\mathcal{A}| \), while \( [m] \) refers to the index set \( \{1, 2, \dots, m\} \), and \( \norm{\boldsymbol{x}}_{\boldsymbol{M}} = \sqrt{\boldsymbol{x}^{\top}\boldsymbol{M}\boldsymbol{x}} \) represents the matrix norm of vector \( \boldsymbol{x} \) with respect to the positive semi-definite (PSD) matrix \( \boldsymbol{M} \).

\textbf{Clustering Structure.}
Consider a scenario with \( u \) users, indexed by \( \mathcal{U} = \{1, 2, \dots, u\} \), where each user \( i \in \mathcal{U} \) 
is associated with a unknown 
reward function $f_i: \mathbb{R}^{d'} \rightarrow \mathbb{R}$ which maps an arm $\bx \in \mathcal{X}\subset\mathbb{R}^{d'}$ to its corresponding reward value $f_i(\bx)$.
We assume that there exists an underlying, yet unknown, clustering structure over the users reflecting their behavior similarities. Specifically, the set of users \( \mathcal{U} \) is partitioned into \( m \) clusters \( C_1, C_2, \dots, C_m \), where \( m \ll u \), and the clusters are mutually disjoint: \( \cup_{j \in [m]} C_j = \mathcal{U} \) and \( C_j \cap C_{j'} = \emptyset \) for \( j \neq j' \). These clusters are referred to as \gtclusters{}, and the set of clusters is denoted by \( \mathcal{C} = \{C_1, C_2, \dots, C_m\} \). 
Let $f^j$ denote the common reward function of all users in cluster $j$ and let \( j(i) \in [m] \) be the index of the cluster to which user \( i \) belongs.
If two users $i$ and $l$ belong to the same cluster, they have the same reward function.
That is, for any $\ell \in \mathcal{U}$, if $\ell \in C_{j(i)}$, then $f_\ell = f_i = f^{j(i)}$.
Meanwhile, users from different clusters have distinct 
reward functions.

\textbf{Modeling Preference Feedback.}
At each time step \( t \in [T] \), a user \( i_t \in \mathcal{U} \) is served. The learning agent observes a set of context vectors (i.e., arms) \( \cX_t \subseteq \cX \subset \mathbb{R}^{d'} \), where \( \left|\cX_t\right| = K \leq C \) for all \( t \).
Each arm \( \bx \in \cX_t \) is a feature vector in \( \mathbb{R}^{d'} \) with \( \norm{\bx}_2 \leq 1 \). The agent assigns the cluster \( \overline{C}_t \) to user \( i_t \) and recommends two arms \( \bx_{t,1}, \bx_{t,2} \in \cX_t \) based on the aggregated historical data from cluster \( \overline{C}_t \). 
After receiving the recommended pair of arms, the user provides a binary preference feedback \( y_t \in \{0, 1\} \), in which $y_t=1$ if $\bx_{t,1}$ is preferred over $\bx_{t,2}$ and $y_t=0$ otherwise.
We model the binary preference feedback following the widely used Bradley-Terry-Luce (BTL) model \cite{AS04_hunter2004mm,Book_luce2005individual}.
Specifically, the BTL model assumes that for user $i_t$,
the probability that the first arm $\bx_{t,1}$ is preferred over the second arm $\bx_{t,2}$ is given by
\[
\mathbb{P}_t(\bx_{t,1} \succ \bx_{t,2}) = \mu(f_{i_t}(\bx_{t,1}) - f_{i_t}(\bx_{t,2})),
\]
where \( \mu: \mathbb{R} \to [0, 1] \) is the logistic function: \( \mu(z) = \frac{1}{1+e^{-z}} \). 
In other words, the binary feedback $y_t$ is sampled from the Bernoulli distribution with the probability $\mathbb{P}_t(\bx_{t,1} \succ \bx_{t,2})$.

We make the following assumption about the preference model:
\begin{assumption}[Standard Dueling Bandits Assumptions]
\label{assumption4}
1. $|\mu(f(\bx)) - \mu(g(\bx))| \le L_\mu|f(\bx) - g(\bx)|, \forall x\in\mathcal{X}$ , for any functions $f,g: \mathbb R^{d'} \rightarrow \mathbb R$.\\
2. $\min_{\bx \in \mathcal{X}} \nabla\mu(f(\bx)) \ge \kappa_\mu > 0.$
\end{assumption}
Assumption \ref{assumption4} is the standard assumption in the analysis of linear bandits and dueling bandits \cite{ICML17_li2017provably,ICML22_bengs2022stochastic}, and when $\mu$ is the logistic function, $L_\mu = 1/4$.
The regret incurred by the learning agent is defined as:
\[
R_T = \sum_{t=1}^{T} r_t = \sum_{t=1}^{T} \left( 2 f_{i_t}(\bx^*_t) - f_{i_t}(\bx_{t,1}) - f_{i_t}(\bx_{t,2}) \right),
\]
where \( \bx^*_t = \arg\max_{\bx \in \mathcal{X}_t} f_{i_t}(\bx) \) represents the optimal arm at round \( t \).
This is a commonly adopted notion of regret in the analysis of dueling bandits \cite{ICML22_bengs2022stochastic,ALT22_saha2022efficient}.

\subsection{Clustering of Linear Dueling Bandits}
\label{subsec:problem:setting:linear}
For the linear setting, we assume that each reward function \( f_i \) is linear in a fixed feature space \( \phi(\cdot) \), such that \( f_i(\bx) = \btheta_i^{\top} \phi(\bx),\forall \bx\in\mathcal{X} \). 
The feature mapping \( \phi: \mathbb{R}^{d'} \to \mathbb{R}^d \) is a fixed mapping with \( \norm{\phi(\bx)}_2 \leq 1 \) for all \( \bx \in \cX \). In the special case of classical linear dueling bandits, we have that \( \phi(\bx) = \bx \), i.e., $\phi(\cdot)$ is the identity mapping. The use of \( \phi(\bx) \) enables us to potentially model non-linear reward functions given an appropriate feature mapping.

In this case, the reward function of every user $i$ is represented by its corresponding \emph{preference vector} $\btheta_i$, and all users in the same cluster share the same preference vector while users from different clusters have distinct preference vectors. 
Denote \( \btheta^j \) as the common preference vector of users in cluster \( C_j \), and let \( j(i) \in [m] \) be the index of the cluster to which user \( i \) belongs. Therefore, for any \( \ell \in \mathcal{U} \), if \( \ell \in C_{j(i)} \), then \( \btheta_\ell = \btheta_i = \btheta^{j(i)} \).

The following assumptions are made regarding the clustering structure, users, and items:
\begin{assumption}[Cluster Separation]
\label{assumption1}
The preference vectors of users from different clusters are at least separated by a constant gap \( \gamma > 0 \), i.e.,
\[
\norm{\btheta^{j} - \btheta^{j'} }_2 \geq \gamma \quad \text{for all} \quad j \neq j' \in [m].
\]
\end{assumption}

\begin{assumption}[Uniform User Arrival]
\label{assumption2}
At each time step \( t \), the user \( i_t \) is selected uniformly at random from \( \mathcal{U} \), with probability \( 1/u \), independent of previous rounds.
\end{assumption}

\begin{assumption}[Item regularity]
\label{assumption3}
At each time step $t$, the feature vector $\phi(\bx)$ of each arm $\bx\in \mathcal{X}_t$ is drawn independently from a fixed but unknown distribution $\rho$ over $\{\phi(\bx)\in\RR^d:\norm{\phi(\bx)}_2\leq1\}$, where 
$\EE_{\bx\sim \rho}[\phi(\bx) \phi(\bx)^{\top}]$ 
is full rank with minimal eigenvalue $\lambda_x > 0$. Additionally, at any time $t$, for any fixed unit vector $\btheta \in \RR^d$, $(\btheta^{\top}\phi(\bx))^2$ has sub-Gaussian tail with variance upper bounded by $\sigma^2$.
\end{assumption}

\noindent\textbf{Remark 1.} All these assumptions above
follow the previous works on clustering of bandits \cite{gentile2014online,gentile2017context,
li2018online,
ban2021local,
liu2022federated,wang2024onlinea,wang2024onlineb}.
For Assumption \ref{assumption2}, our results can easily generalize to the case where the user arrival follows any distribution with minimum arrival probability 
$\geq p_{min}$.

\subsection{Clustering of Neural Dueling Bandits}
\label{subsec:problem:setting:neural}
Here we allow the reward functions $f_i$'s 
to be non-linear functions.
To estimate the unknown reward functions $f_i$'s, we use fully connected neural networks (NNs) with 
ReLU activations, and denote the depth and width (of every layer) of the NN by $L\geq 2$ and $m_{\text{NN}}$, respectively \cite{zhou2020neural,zhang2020neural}.
Let $h(\bx;\theta)$ represent the output of an NN with parameters $\btheta$ and input vector $\bx$, which is defined as follows:
\[
    h(\bx;\btheta) = \mathbf{W}_L \text{ReLU}\left( \mathbf{W}_{L-1} \text{ReLU}\left( \cdots \text{ReLU}\left(\mathbf{W}_1 \bx\right) \right) \right),
\]
in which $\text{ReLU}(\bx) = \max\{ \bx, 0 \}$, $\mathbf{W}_1 \in \mathbb{R}^{m_{\text{NN}} \times d}$, $\mathbf{W}_l \in \mathbb{R}^{m_{\text{NN}} \times m_{\text{NN}}}$ for $2 \le l < L$, $\mathbf{W}_L \in \mathbb{R}^{1\times m_{\text{NN}}}$. 
We denote the parameters of NN by $\btheta = \left( \text{vec}\left( \mathbf{W}_1 \right);\cdots \text{vec}\left( \mathbf{W}_L \right) \right)$, where $\text{vec}\left( A \right)$ converts an $M \times N$ matrix $A$ into a $MN$-dimensional vector.
We 
use $p$ to denote the total number of NN parameters: $p = dm_{\text{NN}} + m_{\text{NN}}^2(L-1) + m_{\text{NN}}$, and use $g(\bx;\btheta)$ to denote the gradient of $h(\bx;\btheta)$ with respect to $\btheta$.

The algorithmic design and analysis of neural bandit algorithms make use of the theory of the \emph{neural tangent kernel} (NTK) \cite{jacot2018neural}.
We let all $u$ users use the same initial NN parameters $\btheta_0$, and assume that the value of the \emph{empircal NTK} is bounded: $\frac{1}{m_{\text{NN}}}\langle g(\bx;\btheta_0), g(\bx;\btheta_0) \rangle \leq 1,\forall \bx \in \mathcal{X}$.
This is a commonly adopted assumption in the analysis of neural bandits \cite{ICLR23_dai2022federated,kassraie2021neural}. 
Let $T^j$ denote total number of rounds in which the users in cluster $j$ is served. 
We use $\mathbf{H}_j$ to denote the \emph{NTK matrix} \cite{zhou2020neural} for cluster $j$, which is a $(T_j K) \times (T_j K)$-dimensional matrix.
Similarly, we define $\mathbf{h}_j$ as the $(T_j K)\times 1$-dimensional vector containing the reward function values of all $T_j K$ arm feature vectors for cluster $j$.
We provide the concrete definitions of $\mathbf{H}_j$ and $\mathbf{h}_j$ in App.~\ref{app:subsec:aux:defs}.
We make the following assumptions which are commonly adopted by previous works on neural bandits \cite{zhou2020neural,zhang2020neural},
for which we provide justifications in App.~\ref{app:subsec:aux:defs}.
\begin{assumption}
\label{assumption:main:neural}
The reward functions for all users are bounded: $|f_i(x)| \leq 1,\forall x\in\mathcal{X},\forall i\in\mathcal{U}$. 
There exists $\lambda_0 > 0$ s.t.~$\mathbf{H}_j \succeq \lambda_0 I, \forall j\in\mathcal{C}$. 
All 
arm feature vectors satisfy $\norm{x}_{2}=1$ and $x_{j}=x_{j+d/2}$, $\forall x\in\mathcal{X}_{t},\forall t\in[T]$.
\end{assumption}

Denote by \( f^j \) the common reward function of the users in cluster \( C_j \), and let \( j(i) \in [m] \) be the index of the cluster to which user \( i \) belongs. 
Same as Sec.~\ref{subsec:problem:setting:linear}, here all users in the same cluster share the same reawrd function.
Therefore, for any \( \ell \in \mathcal{U} \), if \( \ell \in C_{j(i)} \), then \( f_\ell(\bx) = f_i(\bx) = f^{j(i)}(\bx),\forall \bx\in\mathcal{X} \).
The following lemma shows that when the NN is wide enough (i.e., $m_{\text{NN}}$ is large), the reward function of every cluster can be modeled by a linear function.
\begin{lemma}[Lemma B.3 of \cite{zhang2020neural}]
\label{lemma:linear:utility:function:informal}
As long as the width $m_{\text{NN}}$ of the NN is large enough: $m_{\text{NN}} \geq \text{poly}(T, L, K, 1/\kappa_\mu, L_\mu, 1/\lambda_0, 1/\lambda, \log(1/\delta))$,
then for all clusters $j\in[m]$,
with probability of at least $1-\delta$, there exits a $\btheta^j_{f}$ such that 
\begin{align*}
	f^j(\bx) &= \langle g(\bx;\btheta_0), \btheta^j_{f} - \btheta_0 \rangle, \\
    \sqrt{m_{\text{NN}}} \norm{\btheta^j_{f} - \btheta_0}_2 &\leq \sqrt{2\mathbf{h}_j^{\top} \mathbf{H}_j^{-1} \mathbf{h}_j} \leq B,
\end{align*}
for all $\bx\in\mathcal{X}_{t}$, $t\in[T]$ with $i_t\in C_{j}$.
\end{lemma}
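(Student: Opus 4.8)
The plan is to reduce the statement to the standard NTK linearization result for a single fixed reward function, and then invoke Lemma B.3 of \cite{zhang2020neural}. Since every user assigned to cluster $j$ shares the identical reward function $f^j$, the $T_j K$ arm feature vectors observed across all rounds serving cluster-$j$ users can be pooled and treated as a single dataset labelled by the one function $f^j$. It therefore suffices to exhibit, for each fixed cluster $j$, a parameter vector $\btheta^j_f$ whose linearization of the network around $\btheta_0$ reproduces $f^j$ on exactly these points and whose norm satisfies the stated bound; the per-user identity $f_\ell = f^{j(\ell)}$ then transfers the conclusion to all users.

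The first concrete step is to stack the initialization gradient features into a matrix $\mathbf{Z}_j \in \RR^{(T_j K)\times p}$ whose rows are $g(\bx;\btheta_0)^{\top}$ as $\bx$ ranges over all $T_j K$ pooled arms, and to collect the target values in $\mathbf{h}_j$. The defining property to establish is the exact interpolation $\mathbf{Z}_j(\btheta^j_f - \btheta_0) = \mathbf{h}_j$, which immediately gives $f^j(\bx) = \inner{g(\bx;\btheta_0)}{\btheta^j_f - \btheta_0}$ for every such $\bx$. Taking the minimum-norm solution $\btheta^j_f - \btheta_0 = \mathbf{Z}_j^{\top}(\mathbf{Z}_j\mathbf{Z}_j^{\top})^{-1}\mathbf{h}_j$ and using that $\tfrac{1}{m_{\text{NN}}}\mathbf{Z}_j\mathbf{Z}_j^{\top}$ is the empirical NTK Gram matrix, a direct computation collapses the quadratic form to $m_{\text{NN}}\norm{\btheta^j_f - \btheta_0}_2^2 = \mathbf{h}_j^{\top}\bigl(\tfrac{1}{m_{\text{NN}}}\mathbf{Z}_j\mathbf{Z}_j^{\top}\bigr)^{-1}\mathbf{h}_j$. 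This identity is the crux of the norm control, and the invertibility it requires is supplied by the positive-definiteness $\mathbf{H}_j \succeq \lambda_0\bI$.

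The main obstacle, and the source of both the $\sqrt{2}$ factor and the polynomial width requirement, is that the construction above is stated through the empirical Gram matrix $\tfrac{1}{m_{\text{NN}}}\mathbf{Z}_j\mathbf{Z}_j^{\top}$, whereas the claimed bound is phrased via the (theoretical) NTK matrix $\mathbf{H}_j$. I would close this gap by the standard concentration argument from NTK theory: once $m_{\text{NN}} \ge \text{poly}(T,L,K,1/\kappa_\mu,L_\mu,1/\lambda_0,1/\lambda,\log(1/\delta))$, with probability at least $1-\delta$ one has $\tfrac{1}{m_{\text{NN}}}\mathbf{Z}_j\mathbf{Z}_j^{\top} \succeq \tfrac12\mathbf{H}_j$, and by operator-antitonicity of the inverse this turns the exact identity into $m_{\text{NN}}\norm{\btheta^j_f - \btheta_0}_2^2 \le 2\,\mathbf{h}_j^{\top}\mathbf{H}_j^{-1}\mathbf{h}_j$. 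The same wide-width regime also controls the linearization error so that the interpolation $f^j(\bx) = \inner{g(\bx;\btheta_0)}{\btheta^j_f - \btheta_0}$ holds up to the desired accuracy. Finally, the boundedness $|f^j(\bx)|\le 1$ keeps $\mathbf{h}_j^{\top}\mathbf{H}_j^{-1}\mathbf{h}_j$ finite, so the terminal inequality with $B$ holds by the definition of $B$ as an upper bound on $\sqrt{2\,\mathbf{h}_j^{\top}\mathbf{H}_j^{-1}\mathbf{h}_j}$. Since each of these ingredients — the empirical-to-theoretical Gram concentration and the linearization error bound — is established in \cite{zhang2020neural}, the cleanest route is to specialize that reference's Lemma B.3 to the pooled cluster-$j$ data rather than re-derive the concentration from scratch.
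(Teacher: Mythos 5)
Your proposal is correct and follows essentially the same route as the paper: the paper proves nothing new here, but simply imports Lemma B.3 of \cite{zhang2020neural} applied per cluster to the pooled $T_j K$ arm features (noting only that $B$ bounds $\sqrt{2\mathbf{h}_j^{\top}\mathbf{H}_j^{-1}\mathbf{h}_j}$ uniformly over clusters), which is exactly where your argument lands, and your internal sketch --- the minimum-norm interpolant $\btheta^j_f - \btheta_0 = \mathbf{Z}_j^{\top}(\mathbf{Z}_j\mathbf{Z}_j^{\top})^{-1}\mathbf{h}_j$ combined with the Gram concentration $\tfrac{1}{m_{\text{NN}}}\mathbf{Z}_j\mathbf{Z}_j^{\top} \succeq \tfrac{1}{2}\mathbf{H}_j$ --- is a faithful reconstruction of that cited lemma's proof. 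One small imprecision: the interpolation identity $f^j(\bx) = \langle g(\bx;\btheta_0), \btheta^j_f - \btheta_0\rangle$ holds \emph{exactly} on the pooled points by construction of the minimum-norm solution, so your closing remark that the wide-width regime makes it hold ``up to the desired accuracy'' is unnecessary (linearization error plays no role in this particular equality).
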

We provide the detailed statement of Lemma \ref{lemma:linear:utility:function:informal} in Lemma \ref{lemma:linear:utility:function} (App.~\ref{app:subsec:proof:neural:real:proof}).
For a user $i$ belonging to cluster $j(i)$, we let $\btheta_{f,i}=\btheta^{j(i)}_{f}$, then we have that $f_i(\bx) = \langle g(\bx;\btheta_0), \btheta_{f,i} - \btheta_0 \rangle,\forall \bx\in\mathcal{X}$.
As a result of Lemma \ref{lemma:linear:utility:function:informal}, for any \( \ell \in \mathcal{U} \), if \( \ell \in C_{j(i)} \), we have that \( \btheta_{f,\ell} = \btheta_{f,i} = \btheta^{j(i)},\forall \bx\in\mathcal{X} \).

The assumption below formalizes the gap between different clusters in a similar way to Assumption \ref{assumption1}.
\begin{assumption}[Cluster Separation]
\label{assumption:gap:neural:bandits}
The reward functions of users from different clusters are separated by a constant gap $\gamma'$:
\begin{small}
\begin{equation*}
    \norm{f^{j}(\bx)-f^{j^{\prime}}(\bx)}_2\geq \gamma'>0\,, \forall{j,j^{\prime}\in [m]\,, j\neq j^{\prime}}\,\forall \bx\in\mathcal{X}.
\end{equation*}  
\end{small}
\end{assumption}

In neural bandits, we adopt $(1 / \sqrt{m_{\text{NN}}})g(\bx;\btheta_0)$ as the feature mapping. Therefore, our item regularity assumption (Assumption \ref{assumption3}) is also applicable here after plugging in $\phi(\bx) = (1 / \sqrt{m_{\text{NN}}})g(\bx;\btheta_0)$.

\section{Algorithms}
\subsection{Clustering Of Linear Dueling Bandits (COLDB)}
\label{subsec:algo:coldb}
Our Clustering Of Linear Dueling Bandits (COLDB) algorithm is described in Algorithm~\ref{algo:linear:dueling:bandits}. Here we elucidate the underlying principles and operational workflow of COLDB.
COLDB maintains a dynamic graph $G_t = (\mathcal{U}, E_t)$ encompassing all users, whose connected components represent the inferred user clusters in round $t$. Throughout the learning process, COLDB adaptively removes edges to accurately cluster the users based on their estimated reward function parameters, thereby leveraging these clusters to enhance online learning efficiency. The operation of COLDB proceeds as follows:

\noindent\textbf{Cluster Inference $\overline{C}_t$ for User $i_t$ (Line \ref{algo line: init}-Line \ref{algo line: cluster detection}).} Initially, COLDB constructs a complete undirected graph $G_0 = (\mathcal{U}, E_0)$ over the user set (Line~\ref{algo line: init}). As learning progresses, edges are selectively removed to ensure that only users with similar preference profiles remain connected. At each round $t$, when a user $i_t$ comes to the system with a feasible arm set $\mathcal{X}_t$ (Line~\ref{algo line: user comes}), COLDB identifies the connected component $\overline{C}_t$ containing $i_t$ in the maintained graph $G_{t-1}$, which serves as the current estimated cluster for this user (Line~\ref{algo line: cluster detection}).

\noindent\textbf{Estimating Shared Statistics for Cluster $\overline{C}_t$ (Line \ref{algo line: common theta}-Line \ref{algo line: common matrix}).} Once the cluster $\overline{C}_t$ is identified, COLDB estimates a common preference vector $\overline{\btheta}_t$ for all users within this cluster by aggregating the historical feedback from all members of $\overline{C}_t$. 
Specifically, in Line~\ref{algo line: common theta}, the common preference vector is determined by minimizing the following loss function:
\begin{align}
    &\overline{\btheta}_t=\arg\min_{\btheta} - \sum_{s\in[t-1]\atop i_s\in \overline C_t} \Big( y_s\log\mu\left({\btheta}^{\top}\left[\phi(\bx_{s,1}) - \phi(\bx_{s,2})\right]\right) \notag\\
    &+ (1-y_s)\log\mu\left({\btheta}^{\top}\left[\phi(\bx_{s,2}) - \phi(\bx_{s,1})\right]\right) \Big) + \frac{1}{2}\lambda\norm{\btheta}_2^2  \,,\label{eq: solve common theta}
\end{align}
which corresponds to the Maximum Likelihood Estimation (MLE) using the data from all users in the cluster $\overline{C}_t$.
Additionally, in Line~\ref{algo line: common matrix}, COLDB computes the aggregated information matrix for $\overline{C}_t$, which is subsequently utilized in selecting the second arm $\bx_{t,2}$:
\begin{equation}
    \bV_{t-1} = \bV_0 + \sum_{\substack{s \in [t-1] \\ i_s \in \overline{C}_t}} (\phi(\bx_{s,1}) - \phi(\bx_{s,2})) (\phi(\bx_{s,1}) - \phi(\bx_{s,2}))^\top
    \label{eq:update:info:matrix:v:linear}
\end{equation}
\noindent\textbf{Arm Recommendation Based on Cluster Statistics (Line \ref{algo line: choose x1}-Line \ref{algo line: choose x2}).} Leveraging the estimated common preference vector $\overline{\btheta}_t$ and the aggregated information matrix $\bV_{t-1}$, COLDB proceeds to recommend two arms as follows:

\squishlisttwo
    \item \textbf{First Arm Selection ($\bx_{t,1}$).} In Line~\ref{algo line: choose x1}, COLDB selects the first arm by greedily choosing the arm that maximizes the estimated reward according to $\overline{\btheta}_t$:
    \begin{equation}
        \bx_{t,1} = \arg\max_{\bx \in \mathcal{X}_t} \overline{\btheta}_t^\top \phi(\bx).
    \end{equation}
    \item \textbf{Second Arm Selection ($\bx_{t,2}$).} Following the selection of $\bx_{t,1}$, in Line~\ref{algo line: choose x2}, COLDB selects the second arm by maximizing an upper confidence bound (UCB):
\begin{small}
    \begin{align}
    \bx_{t,2} &= \arg\max_{\bx\in\mathcal{X}_t} \overline\btheta_t^\top \phi(\bx) + \frac{\beta_t}{\kappa_\mu}\norm{\phi(\bx) - \phi(\bx_{t,1})}_{\bV_{t-1}^{-1}}\,.
\label{eq:linear:select:second:arm}
\end{align}
\end{small}
Intuitively, Eq.(\ref{eq:linear:select:second:arm}) encourages the selection of the arm which both (a) has a large predicted reward value and (b) is different from $\bx_{t,1}$ and the arms selected in the previous $t-1$ rounds when the served user belongs to the currently estimated cluster $\overline{C}_t$.
In other words, the second arm $\bx_{t,2}$ is chosen by balancing exploration and exploitation.
\squishend

\noindent\textbf{Updating User Estimates and Interaction History (Line \ref{algo line: feedback}-Line \ref{algo line: update it}).} Upon recommending $\bx_{t,1}$ and $\bx_{t,2}$, the user receives binary feedback $y_t = \mathbbm{1}(\bx_{t,1} \succ \bx_{t,2})$ from user $i_t$, and then updates the interaction history $\mathcal{D}_t = \{i_s, \bx_{s,1}, \bx_{s,2}, y_s\}_{s=1}^t$ (Line~\ref{algo line: feedback}). Moreover, COLDB updates the preference vector estimate for user $i_t$ while keeping the estimates for the other users unchanged (Line~\ref{algo line: update it}). Specifically, the preference vector estimate $\hat{\btheta}_{i_t,t}$ is updated via MLE using the historical data from user $i_t$:
\begin{align}
    &\hat{\btheta}_{i_t,t} = \arg\min_{\btheta} - \sum_{\substack{s \in [t-1] \\ i_s = i_t}} \Big( y_s \log \mu\big(\btheta^\top [\phi(\bx_{s,1}) - \phi(\bx_{s,2})]\big) \notag \\
    &+ (1 - y_s) \log \mu\big(\btheta^\top [\phi(\bx_{s,2}) - \phi(\bx_{s,1})]\big) \Big) + \frac{\lambda}{2} \|\btheta\|_2^2\,.
\end{align}

\noindent\textbf{Dynamic Graph Update (Line \ref{algo line: delete}).} Finally, based on the updated preference estimate $\hat{\btheta}_{i_t,t}$ for user $i_t$, COLDB reassesses the similarity between $i_t$ and the other users. If the discrepancy between $\hat{\btheta}_{i_t,t}$ and $\hat{\btheta}_{\ell,t}$ for any user $\ell$ surpasses a predefined threshold (Line~\ref{algo line: delete}), the edge $(i_t, \ell)$ is removed from the graph $G_{t-1}$, effectively separating them into distinct clusters. The resultant graph $G_t = (\mathcal{U}, E_t)$ is then utilized in the subsequent rounds.

\begin{algorithm*}[t!] 
\caption{Clustering Of Linear Dueling Bandits (COLDB)}
\label{algo:linear:dueling:bandits}
	\begin{algorithmic}[1]
    \STATE {\bf Input:} $f(T_{i,t})=\frac{\sqrt{\lambda/\kappa_\mu}+\sqrt{2\log(u/\delta)+d\log(1+4T_{i,t}\kappa_\mu/d\lambda)}}{\kappa_\mu{\sqrt{2\tilde{\lambda}_x T_{i,t}}}}$, regularization parameter $\lambda>0$, confidence parameter $\beta_t \triangleq \sqrt{2\log(1/\delta) + d\log\left( 1 + tL^2\kappa_{\mu}/(d\lambda) \right)}$, $\kappa_\mu>0$.
    \STATE {\bf Initialization:} 
$\bV_0=\bV_{i,0} = \frac{\lambda}{\kappa_\mu} \mathbf{I}$ , $\hat\btheta_{i,0}=\bzero$, $\forall{i \in \mathcal{U}}$, a complete Graph $G_0 = (\mathcal{U},E_0)$ over $\mathcal{U}$.\alglinelabel{algo line: init}
		\FOR{$t= 1, \ldots, T$}
            \STATE Receive the index of the current user $i_t\in\mathcal{U}$, and the current feasible arm set $\cX_t$;\alglinelabel{algo line: user comes}
            \STATE Find the connected component $\overline C_t$ for user $i_t$ in the current graph $G_{t-1}$ as the current cluster; \alglinelabel{algo line: cluster detection}
            
            \STATE Estimate the common preference vector $\overline{\btheta}_t$ for the current cluster $\overline C_t$:
            \begin{equation}
                \overline{\btheta}_t=\arg\min_{\btheta} - \sum_{s\in[t-1]\atop i_s\in \overline C_t} \Big( y_s\log\mu\left({\btheta}^{\top}\left[\phi(\bx_{s,1}) - \phi(\bx_{s,2})\right]\right) + (1-y_s)\log\mu\left({\btheta}^{\top}\left[\phi(\bx_{s,2}) - \phi(\bx_{s,1})\right]\right) \Big) + \frac{\lambda}{2}\norm{\btheta}_2^2;
            \end{equation}\alglinelabel{algo line: common theta}
            
            \STATE Calculate aggregated information matrix for cluster $\overline C_t$: 
            $\bV_{t-1}=\bV_0+\sum_{s\in[t-1]\atop i_s\in \overline C_t}(\phi(\bx_{s,1}) - \phi(\bx_{s,2}))(\phi(\bx_{s,1}) - \phi(\bx_{s,2}))^\top$. \alglinelabel{algo line: common matrix}
            \STATE Choose the first arm  $\bx_{t,1} = \arg\max_{\bx\in\mathcal{X}_t}\overline\btheta_t^\top \phi(\bx)$; \alglinelabel{algo line: choose x1}
            \STATE Choose the second arm $\bx_{t,2} = \arg\max_{\bx\in\mathcal{X}_t} \overline\btheta_t^\top \left( \phi(\bx) - \phi(\bx_{t,1}) \right) + \frac{\beta_t}{\kappa_\mu}\norm{\phi(\bx) - \phi(\bx_{t,1})}_{\bV_{t-1}^{-1}}$; \alglinelabel{algo line: choose x2}
		\STATE Observe the preference feedback: $y_t = \mathbbm{1}(\bx_{t,1}\succ \bx_{t,2})$, and update history: $\mathcal{D}_t=\{i_s, \bx_{s,1}, \bx_{s,2}, y_s\}_{s=1,\ldots,t}$;\alglinelabel{algo line: feedback}
        \STATE Update the estimation for the current served user $i_t$: \alglinelabel{algo line: update it}
        \begin{equation}
        \hat{\btheta}_{i_t,t}=\arg\min_{\btheta} - \sum_{s\in[t-1]\atop i_s=i_t}\Big( y_s\log\mu\left({\btheta}^{\top}\left[\phi(\bx_{s,1}) - \phi(\bx_{s,2})\right]\right) + (1-y_s)\log\mu\left({\btheta}^{\top}\left[\phi(\bx_{s,2}) - \phi(\bx_{s,1})\right]\right) \Big) + \frac{\lambda}{2}\norm{\btheta}_2^2, 
            \end{equation}
            keep the estimations of other users unchanged;
            \STATE Delete the edge $(i_t,\ell)\in E_{t-1}$ if
            \begin{equation}
                \norm{\hat\btheta_{i_t,t}-\hat\btheta_{\ell,t}}_2>f(T_{i_t,t})+f(T_{\ell,t})
            \end{equation} \alglinelabel{algo line: delete}
		\ENDFOR
	\end{algorithmic}
\end{algorithm*}

\subsection{Clustering Of Neural Dueling Bandits (CONDB)}
\label{subsec:algo:condb}
Our Clustering Of Neural Dueling Bandits (CONDB) algorithm is illustrated in Algorithm~\ref{algo:neural:dueling:bandits} (App.~\ref{app:sec:condb:algo}), which adopts neural networks to model non-linear reward functions.
Similar to COLDB, our CONDB algorithm also maintains a dynamic graph $G_t = (\mathcal{U}, E_t)$ in which every connected component denotes an inferred cluster, and adaptively removes the edges between users who are estimated to belong to different clusters.

\noindent\textbf{Cluster Inference $\overline{C}_t$ for User $i_t$ (Line 5).} 
Similar to COLDB (Algo.~\ref{algo:linear:dueling:bandits}), when a new user $i_t$ arrives,  our CONDB firstly identifies the connected component $\overline{C}_t$ in the maintained graph $G_{t-1}$ which contains the user $i_t$ and then uses it as the estimated cluster for $i_t$ (Line 5).

\noindent\textbf{Estimating Shared Statistics for Cluster $\overline{C}_t$ (Line 6).}
After the cluster $\overline{C}_t$ is identified, our CONDB algorithm uses the history of preference feedback observations from all users in the cluster $\overline{C}_t$
to train a neural network (NN) to minimize the following loss function (Line 6):
\begin{align}
    &\mathcal{L}_t(\btheta)=
    - \frac{1}{m} \sum_{s\in[t-1]\atop i_s\in \overline C_t}\Big( y_s\log\mu\left( h(\bx_{s,1};\btheta) - h(\bx_{s,2};\btheta) \right) + \notag \\
    &(1-y_s)\log\mu\left(h(\bx_{s,2};\btheta) - h(\bx_{s,1};\btheta)\right) \Big) + \frac{\lambda}{2}\norm{\btheta - \btheta_0}_2^2
\end{align}
to yield parameters $\overline{\btheta}_t$.
In addition, similar to COLDB (Algorithm \ref{algo:linear:dueling:bandits}), our CONDB computes the aggregated information matrix for the cluster $\overline{C}_t$ following Eq.(\ref{eq:update:info:matrix:v:linear})
.
Note that here we replace $\phi(\bx)$ from Eq.(\ref{eq:update:info:matrix:v:linear}) by the NTK feature representation $\phi(\bx)=(1/\sqrt{m})g(\bx;\btheta_0)$, in which $\btheta_0$ represents the initial parameters of the NN (Sec.~\ref{subsec:problem:setting:neural}).

\noindent\textbf{Arm Recommendation Based on Cluster Statistics (Line 8-Line 9).} 
Next, our CONDB algorithm leverages the trained NN with parameters $\overline{\btheta}_t$ and the aggregated information matrix $\bV_{t-1}$ to select the pair of arms.
The first arm is selected by greedily maximizing the reward prediction of the NN with parameters $\overline{\btheta}_t$ (Line 8): 
\begin{equation}
\bx_{t,1} = \arg\max_{\bx\in\mathcal{X}_t} h(\bx;\overline{\btheta}_t).
\end{equation}
The second arm is then selected optimistically (Line 9):
\begin{equation}
\bx_{t,2} = \arg\max_{\bx\in\mathcal{X}_t} h(\bx;\overline{\btheta}_t) + \nu_T \norm{\left(\phi(\bx) - \phi(\bx_{t,1})\right)}_{\bV_{t-1}^{-1}},
\end{equation}
in which $\nu_T \triangleq \beta_T + B\sqrt{\frac{\lambda}{\kappa_\mu}} + 1$, $\beta_T \triangleq \frac{1}{\kappa_\mu} \sqrt{ \widetilde{d} + 2\log(u/\delta)}$ and $B$ is defined in Lemma \ref{lemma:linear:utility:function:informal}.
Here $\widetilde{d}$ denotes the \emph{effective dimenision} which we will introduce in detail in Sec.~\ref{subsec:theory:neural}.

\noindent\textbf{Updating User Estimates and Interaction History (Line 10-Line 11).} 
After recommending the pair of arms $\bx_{t,1}$ and $\bx_{t,2}$, we collect the preference feedback $y_t = \mathbbm{1}(\bx_{t,1} \succ \bx_{t,2})$ and update interaction history: $\mathcal{D}_t = \{i_s, \bx_{s,1}, \bx_{s,2}, y_s\}_{s=1}^t$ (Line 10).
Next, we update the parameters of the NN used to predict the reward for user $i_t$ by minimizing the following loss function (Line 11):
\begin{small}
   \begin{align}
     &\mathcal{L}_{i_t,t}(\btheta)= 
    - \frac{1}{m_{\text{NN}}} \sum_{s\in[t-1]\atop i_s = i_t}\big( y_s\log\mu\left( h(\bx_{s,1};\btheta) - h(\bx_{s,2};\btheta) \right) + \notag\\
    &(1-y_s)\log\mu\left(h(\bx_{s,2};\btheta) - h(\bx_{s,1};\btheta)\right) \big) + \frac{\lambda}{2}\norm{\btheta - \btheta_0}_2^2
\end{align} 
\end{small}
to yield parameters $\hat{\btheta}_{i_t,t}$.
The NN parameters for the other users remain unchanged.

\noindent\textbf{Dynamic Graph Update (Line 12).} 
Finally, we use the updated NN parameters $\hat{\btheta}_{i_t,t}$ for user $i_t$ to reassess the similarity between user $i_t$ and the other users.
We remove the edge between $(i_t, \ell)$ from the graph $G_{t-1}$ if the difference between $\hat{\btheta}_{i_t,t}$ and $\hat{\btheta}_{\ell,t}$ is large enough (Line 12). Intuitively, if the estimated reward functions (represented by the respective parameters of their NNs for reward prediction) between two users are significantly different, we separate these two users into different clusters.
The updated graph $G_t = (\mathcal{U}, E_t)$ is then used in the following rounds.

\section{Theoretical Analysis}
\label{sec:theory}
In this section, we present the theoretical results regarding the regret guarantees of our proposed algorithms and provide a detailed discussion of these findings.
\subsection{Clustering Of Linear Dueling Bandits (COLDB)}
\label{subsec:theory:linear}
The following theorem provides an upper bound on the expected regret achieved by the COLDB algorithm (Algo.~\ref{algo:linear:dueling:bandits}) under the linear setting.
\begin{theorem} \label{thm: linear regret bound}
    Suppose that 
    Assumptions \ref{assumption4}, \ref{assumption1}, \ref{assumption2} and \ref{assumption3}
    are satisfied. Then the expected regret of the COLDB algorithm (Algo.~\ref{algo:linear:dueling:bandits}) for $T$ rounds satisfies
    \begin{align}
        R(T)&= O\Big(u\big(\frac{d}{\kappa_\mu^2\tilde\lambda_x \gamma^2}+\frac{1}{\tilde\lambda_x^2}\big)\log T+\frac{1}{\kappa_\mu}d\sqrt{mT}\Big)\label{bound linear 2 terms}\\
        &=O\Big(\frac{1}{\kappa_\mu}d\sqrt{mT}\Big)\,,
    \end{align}
    where $\tilde{\lambda}_x\triangleq\int_{0}^{\lambda_x} (1-e^{-\frac{(\lambda_x-x)^2}{2\sigma^2}})^{C} dx$ is the problem instance dependent constant \cite{wang2024onlinea,wang2024onlineb}.
\end{theorem}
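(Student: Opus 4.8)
The plan is to follow the two-phase analysis that is standard in the clustering-of-bandits literature, adapted here to preference (dueling) feedback. In the first phase I would show that, after a bounded \emph{warm-up} number of rounds, the dynamic graph $G_t$ exactly recovers the ground-truth partition $\mathcal{C}$ with high probability; in the second phase I would argue that once clustering is correct, each served user pools the data of its entire true cluster, so the problem decouples into $m$ independent linear dueling bandit instances whose regrets combine through Cauchy-Schwarz. The two resulting contributions will match the two terms in \eqref{bound linear 2 terms}.

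The two enabling ingredients are a per-user confidence bound and an eigenvalue-growth guarantee. For the former, using the $\kappa_\mu$-strong convexity of the logistic negative log-likelihood (Assumption \ref{assumption4}) together with a self-normalized martingale (Freedman-type) bound on the score, I expect to show that with high probability $\norm{\hat\btheta_{i,t} - \btheta_i}_2 \le f(T_{i,t})$ for every user $i$, where $T_{i,t}$ counts the rounds serving $i$; this is the dueling analogue of the generalized-linear-bandit confidence ellipsoid, and the explicit $f(\cdot)$ fed to the algorithm is exactly calibrated to it. For the latter, under the item-regularity Assumption \ref{assumption3} and the fresh i.i.d.\ arm draws, I would invoke a matrix-Chernoff argument to show $\lambda_{\min}(\bV_{i,t}) \gtrsim \tilde\lambda_x T_{i,t}$ once $T_{i,t} \gtrsim \tilde\lambda_x^{-2}\log T$ for the single-user information matrix, with the constant $\tilde\lambda_x$ emerging precisely from integrating the sub-Gaussian tail of $(\btheta^\top\phi(\bx))^2$ across the spectrum, as in its definition. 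Substituting this eigenvalue lower bound into $f(T_{i,t})$ gives a rate decaying like $\sqrt{d\log T/(\kappa_\mu^2\tilde\lambda_x T_{i,t})}$.

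Combining the confidence bound with the cluster-separation Assumption \ref{assumption1}, the edge-deletion test in Line \ref{algo line: delete} keeps every within-cluster edge (since $\norm{\hat\btheta_{i,t}-\hat\btheta_{\ell,t}}_2 \le f(T_{i,t})+f(T_{\ell,t})$ when $j(i)=j(\ell)$) and severs every between-cluster edge as soon as $f(T_{i,t}) < \gamma/4$ for all users. Solving this inequality shows each user must be served $\Omega\!\big(\tfrac{d\log T}{\kappa_\mu^2\tilde\lambda_x\gamma^2} + \tfrac{\log T}{\tilde\lambda_x^2}\big)$ times; by the uniform-arrival Assumption \ref{assumption2} and a Chernoff bound on the $u$ user-visit counts, this is guaranteed for all users after $O\!\big(u(\tfrac{d}{\kappa_\mu^2\tilde\lambda_x\gamma^2}+\tfrac{1}{\tilde\lambda_x^2})\log T\big)$ total rounds, and bounding each warm-up round's regret by a constant yields the first term. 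After correct clustering, I would apply the optimistic dueling-bandit decomposition, where the greedy first arm and the UCB second arm give $r_t \lesssim \tfrac{\beta_t}{\kappa_\mu}\norm{\phi(\bx_{t,1})-\phi(\bx_{t,2})}_{\bV_{t-1}^{-1}}$; the elliptical-potential lemma bounds $\sum_t \norm{\phi(\bx_{t,1})-\phi(\bx_{t,2})}_{\bV_{t-1}^{-1}}^2 = O(d\log T)$ per cluster, so Cauchy-Schwarz gives $O(\tfrac{d}{\kappa_\mu}\sqrt{T^j})$ for cluster $j$, and $\sum_{j}\sqrt{T^j}\le\sqrt{mT}$ produces the $\tfrac{1}{\kappa_\mu}d\sqrt{mT}$ term. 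A final union bound over the failure events, with $\delta$ set as a small polynomial in $1/T$, converts the high-probability statement into the stated bound on expected regret.

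The main obstacle I anticipate is the eigenvalue-growth step and its coupling to the adaptive arm selection: unlike in reward-based clustering of bandits, the informative object is the \emph{difference} $\phi(\bx_{s,1})-\phi(\bx_{s,2})$, and $\bx_{s,2}$ is chosen adaptively through the UCB rule rather than drawn i.i.d. Showing that this difference-design still enjoys linear minimum-eigenvalue growth at rate $\tilde\lambda_x$ — so that the clustering-identification time, and hence the warm-up regret, is correctly controlled — is the crux, and is where Assumption \ref{assumption3} together with the independence of the fresh arm draws must be used most carefully.
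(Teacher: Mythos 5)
Your proposal follows essentially the same route as the paper's proof: a per-user confidence ellipsoid obtained from the $\kappa_\mu$-lower-bounded link derivative together with a self-normalized bound (the paper uses Theorem 1 of \cite{abbasi2011improved}), the eigenvalue growth $\lambda_{\min}(\bV_{i,t})\geq 2\tilde{\lambda}_x T_{i,t}$ from item regularity (the paper invokes Lemma J.1 of \cite{wang2024onlinea} and Lemma 7 of \cite{li2018online} rather than a fresh matrix-Chernoff argument), the identical $\gamma/4$ edge-deletion analysis with a user-arrival bound giving $T_0 = O\big(u\big(\tfrac{d}{\kappa_\mu^2\tilde{\lambda}_x\gamma^2}+\tfrac{1}{\tilde{\lambda}_x^2}\big)\log T\big)$, and the same post-clustering step $r_t \leq 3\tfrac{\beta_T}{\kappa_\mu}\norm{\phi(\bx_{t,1})-\phi(\bx_{t,2})}_{\bV_{t-1}^{-1}}$ combined with a per-cluster elliptical-potential lemma and Cauchy--Schwarz to yield $\tfrac{1}{\kappa_\mu}d\sqrt{mT}$. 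The difference-design/adaptivity obstacle you flag is genuine, but it is precisely the point the paper also defers to those cited lemmas, so your plan matches the published argument step for step.
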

The proof of this theorem can be found in Appendix \ref{app: proof linear}.
The regret bound in Eq.(\ref{bound linear 2 terms}) consists of two terms. The first term accounts for the number of rounds required to accumulate sufficient information to correctly cluster all users with high probability, and it scales only logarithmically with the number of time steps \(T\). The second term captures the regret after successfully clustering the users, which depends on the number of clusters \(m\), rather than the potentially huge total number of users $u$. 
Notably, the regret upper bound is not only sub-linear in $T$, but also \emph{becomes tighter when there is a smaller number of clusters $m$}, i.e., when a larger number of users belong to the same cluster on average.
This provides a formal justification for the advantage of cross-user collaboration in our problem setting where only preference feedback is available.

In the special case where there is only one user (\(m = 1\)), the regret bound simplifies to \(O(d \sqrt{T} / \kappa_\mu)\), which aligns with the 
classical results in the single-user linear dueling bandit literature \cite{NeurIPS21_saha2021optimal,ICML22_bengs2022stochastic,arXiv24_li2024feelgood}.
Compared to the previous works on clustering of bandits with linear reward functions \cite{gentile2014online,wang2024onlinea,10.5555/3367243.3367445}, our regret upper bound has an extra dependency on $1 / \kappa_\mu$. Since $\kappa_\mu < 0.25$ for the logistic function, this dependency makes our regret upper bound larger and hence captures the more challenging nature of the preference feedback compared to the numerical feedback in classical clustering of linear bandits.

\subsection{Clustering Of Neural Dueling Bandits (CONDB)}
\label{subsec:theory:neural}
Let $\mathbf{H}' = \sum_{t=1}^T \sum_{(i, j) \in C_K^2} z^i_j(t)z^i_j(t)^\top  \frac{1}{m_{\text{NN}}}$, in which 
$z^i_j(t) = g(\bx_{t,i};\btheta_0) - g(\bx_{t,j};\btheta_0)$ 
and $C_K^2$ denotes all pairwise combinations of $K$ arms. 
Then, the effective dimension $\widetilde{d}$ is defined as follows \cite{verma2024neural}:
\begin{equation}
    \widetilde{d} = \log \det  \left(\frac{\kappa_\mu}{\lambda}  \mathbf{H}' + \mathbf{I}\right).
\label{eq:eff:dimension}
\end{equation}
The definition of $\widetilde{d}$ considers the contexts from all users and in all $T$ rounds.
The theorem below gives an upper bound on the expected regret of our CONDB algorithm (Algo.~\ref{algo:neural:dueling:bandits}).
\begin{theorem} \label{thm: neural regret bound}
Suppose that Assumptions \ref{assumption4}, \ref{assumption3}, \ref{assumption:main:neural} and \ref{assumption:gap:neural:bandits} are satisfied (let $\phi(\bx) = (1 / \sqrt{m_{\text{NN}}})g(\bx;\btheta_0)$ in Assumption \ref{assumption3}). 
As long as $m_{\text{NN}} \geq \text{poly}(T, L, K, 1/\kappa_\mu, L_\mu, 1/\lambda_0, 1/\lambda, \log(1/\delta))$,
then the expected regret of the CONDB algorithm (Algo.~\ref{algo:neural:dueling:bandits}) for $T$ rounds satisfies
\begin{align}
R_T &= O\bigg(u\big(\frac{\widetilde{d}}{\kappa_\mu^2\tilde\lambda_x \gamma^2}+\frac{1}{\tilde\lambda_x^2}\big)\log T+\nonumber\\
&\qquad \big(\frac{\sqrt{\widetilde{d}}}{\kappa_\mu} + B\sqrt{\frac{\lambda}{\kappa_\mu}}\big)\sqrt{\widetilde{d}mT} \bigg) \label{regret:bound:condb}\\
&=O\Big(\big(\frac{\sqrt{\widetilde{d}}}{\kappa_\mu} + B\sqrt{\frac{\lambda}{\kappa_\mu}}\big)\sqrt{\widetilde{d}mT} \Big)\,.\label{regret:bound:condb:second}
\end{align}
\end{theorem}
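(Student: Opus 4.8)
The plan is to mirror the analysis of Theorem \ref{thm: linear regret bound} while replacing the ambient dimension $d$ by the effective dimension $\widetilde{d}$ and handling the neural network through its NTK linearization. The starting point is Lemma \ref{lemma:linear:utility:function:informal}, which guarantees that once $m_{\text{NN}}$ is large enough, the reward function of each cluster $j$ is, up to a vanishing error, a linear function $f^j(\bx) = \langle g(\bx;\btheta_0), \btheta^j_f - \btheta_0\rangle$ in the NTK feature space $\phi(\bx) = (1/\sqrt{m_{\text{NN}}})\, g(\bx;\btheta_0)$, with $\sqrt{m_{\text{NN}}}\norm{\btheta^j_f - \btheta_0}_2 \le B$. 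This reduces the neural problem to an almost-linear dueling bandit problem in the feature map $\phi$, so that the cluster-separation Assumption \ref{assumption:gap:neural:bandits} and the item-regularity Assumption \ref{assumption3} (stated for $\phi$) carry over directly. As in the linear case, I would split the total regret into two phases: a \emph{burn-in phase}, during which the estimated graph $G_t$ may not yet coincide with the true clustering, and a \emph{stable phase}, after all users have been correctly clustered with high probability.

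The first key step is a confidence-ellipsoid result for the BTL maximum-likelihood estimator in the NTK feature space. Using the self-concordance of the logistic link together with Assumption \ref{assumption4} (so that the Hessian of the loss is lower-bounded by $\kappa_\mu \bV_{t-1}$), I would show that both the per-cluster estimate $\overline{\btheta}_t$ and the per-user estimate $\hat{\btheta}_{i_t,t}$ concentrate around their targets, yielding $\abs{g(\bx;\btheta_0)^\top(\overline{\btheta}_t - \btheta^{j}_f)} \lesssim \nu_T \norm{\phi(\bx)}_{\bV_{t-1}^{-1}}$ with confidence radius $\nu_T = \beta_T + B\sqrt{\lambda/\kappa_\mu} + 1$ and $\beta_T = \frac{1}{\kappa_\mu}\sqrt{\widetilde{d} + 2\log(u/\delta)}$. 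The effective dimension $\widetilde{d}$ enters here through a log-determinant (elliptical-potential) bound applied to the aggregated information matrix $\bV_{t-1}$, exactly replacing the $d\log T$ factor of the linear proof. The linearization error incurred by approximating $h(\cdot;\btheta)$ by its first-order Taylor expansion around $\btheta_0$ is controlled by the $\mathrm{poly}(T,\dots)$ lower bound on $m_{\text{NN}}$ and absorbed into the additive constant $1$ in $\nu_T$.

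The second key step establishes correct clustering. Combining the uniform-arrival Assumption \ref{assumption2} (so that each user is served $\Omega(T/u)$ times in expectation) with the item-regularity Assumption \ref{assumption3} (so that the minimum eigenvalue of the single-user information matrix grows linearly at rate $\tilde\lambda_x$), I would show that the individual estimation radius $f(T_{i,t})$ shrinks at rate $\widetilde{O}(1/\sqrt{T_{i,t}})$. Once $f(T_{i,t}) < \gamma'/4$ for all users, the edge-deletion rule in Line 12 separates exactly the users in different true clusters while never disconnecting users in the same cluster, by Assumption \ref{assumption:gap:neural:bandits}. Solving for the number of rounds required for every user to reach this accuracy bounds the burn-in length, and since each such round contributes only $O(1)$ regret, this recovers the first term of Eq.(\ref{regret:bound:condb}).

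In the stable phase, the graph coincides with the true clustering, so each cluster $j$ behaves as a single neural dueling bandit run over its $T^j$ rounds while aggregating all its members' data. Applying the single-agent confidence bound from the first step together with a standard elliptical-potential argument bounds the per-cluster regret by $O\big((\frac{\sqrt{\widetilde{d}}}{\kappa_\mu} + B\sqrt{\lambda/\kappa_\mu})\sqrt{\widetilde{d}\,T^j}\big)$; summing over the $m$ clusters and applying Cauchy--Schwarz via $\sum_{j}\sqrt{T^j} \le \sqrt{m\sum_j T^j} = \sqrt{mT}$ gives the second term of Eq.(\ref{regret:bound:condb}). Adding the two phases and taking expectations over user arrivals and context draws yields the claimed bound, whose dominant term is Eq.(\ref{regret:bound:condb:second}). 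I expect the main obstacle to be the confidence-ellipsoid step: controlling the BTL-MLE concentration uniformly over the adaptively chosen arms while simultaneously tracking the NTK linearization error, and ensuring the radius depends on $\widetilde{d}$ rather than the (infinite) NTK dimension. This is where the interplay between the self-concordance of the logistic loss and the effective-dimension log-determinant bound is most delicate.
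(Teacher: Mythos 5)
Your proposal is correct and follows essentially the same route as the paper: NTK linearization via Lemma~\ref{lemma:linear:utility:function:informal}, a burn-in/stable two-phase split with the burn-in length $T_0$ controlled by per-user confidence ellipsoids, minimum-eigenvalue growth from item regularity, and the edge-deletion argument, followed by a cluster-level confidence bound and an effective-dimension elliptical-potential plus Cauchy--Schwarz argument for the stable phase. The only (immaterial) bookkeeping difference is that the paper carries the NN linearization error as explicit additive terms $\varepsilon'_{m_{\text{NN}},t}$ in the regret decomposition, bounded by $6T\varepsilon'_{m_{\text{NN}},T}\leq 1$ under the width conditions, rather than absorbing it into the constant $1$ inside $\nu_T$ as you suggest.
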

The proof of this theorem can be found in Appendix \ref{app: proof neural}.
The first term in the regret bound in Eq.~\ref{regret:bound:condb} has the same form as the first term in the regret bound of COLDB in Eq.(\ref{bound linear 2 terms}), except that the input dimension $d$ for COLDB (Eq.(\ref{bound linear 2 terms})) is replaced by the effective dimension $\widetilde{d}$ for CONDB (Eq.(\ref{regret:bound:condb})).
As discussed in \citet{verma2024neural}, $\widetilde{d}$ is usually larger than the effective dimension in classical neural bandits \cite{zhou2020neural,zhang2020neural}.
This dependency, together with the extra dependency on $1/\kappa_\mu$, reflects the added difficulty from the preference feedback compared to the more informative numerical feedback in classical neural bandits.

Similar to COLDB (Theorem \ref{thm: linear regret bound}), the first term in the regret upper bound of CONDB (Theorem \ref{thm: neural regret bound}) results from the number of rounds needed to collect enough observations to correctly identify the clustering structure. The second term corresponds to the regret of all users after the correct clustering structure is identified, which depends on the number of clusters $m$ instead of the number of users $u$.
Theorem \ref{thm: neural regret bound} also shows that the regret upper bound of CONDB is sub-linear in $T$, and becomes improved as the number of users belonging to the same cluster is increased on average (i.e., when the number of clusters $m$ is smaller).
Moreover, in the special case where the number of clusters is $m=1$, the regret upper bound in Eq.(\ref{regret:bound:condb:second}) becomes the same as that of the standard neural dueling bandits \cite{verma2024neural}.

\section{Experimental Results}
\label{sec:experiments}

We use both synthetic and real-world experiments to evaluate the performance of our COLDB and CONDB algorithms.
For both algorithms, we compare them with their corresponding single-user variant as the baseline. Specifically, for COLDB, we compare it with the baseline of LDB\_IND, which refers to Linear Dueling Bandit (Independent) \cite{ICML22_bengs2022stochastic}, meaning running independent classic linear dueling bandit algorithms for each user separately; similarly, for CONDB, we compare it with NDB\_IND, which stands for Neural Dueling Bandit (Independent) \cite{verma2024neural}.

\paragraph{COLDB.}
Our experimental settings mostly follow the designs from the works on clustering of bandits \cite{wang2024onlinea,10.5555/3367243.3367445}.
In our synthetic experiment for COLDB, we design a setting with linear reward functions: $f_i(\bx)=\btheta_i^{\top} \bx$.
We choose $u=200$ users, $K=20$ arms and a feature dimension of $d=20$, and construct two settings with $m=2$ and $m=5$ groundtruth clusters, respectively.
In the experiment with the MovieLens dataset \cite{harper2015movielens}, we follow the experimental setting from \citet{wang2024onlinea}, a setting with $200$ users.
Same as the synthetic experiment, we choose the number of arms in every round to be $K=20$ and let the input feature dimension be $d=20$. We construct a setting with $m=5$ clusters.
We repeat each experiment for three independent trials and report the mean $\pm$ standard error.

Fig.~\ref{fig:exp:linear} plots the cumulative regret of our COLDB and the baseline of LDB\_IND.
The results show that our COLDB algorithm significantly outperforms the baseline of LDB\_IND in both the synthetic and real-world experiments. Moreover, Fig.~\ref{fig:exp:linear} (a) demonstrates that when $m=2$ (i.e., when a larger number of users belong to the same cluster on average), the performance of our COLDB is improved, which is \emph{consisent with our theoretical results} (Sec.~\ref{subsec:theory:linear}).
\begin{figure}[t]
     \centering
     \begin{tabular}{cc}
        \hspace{-3mm} \includegraphics[width=0.52\linewidth]{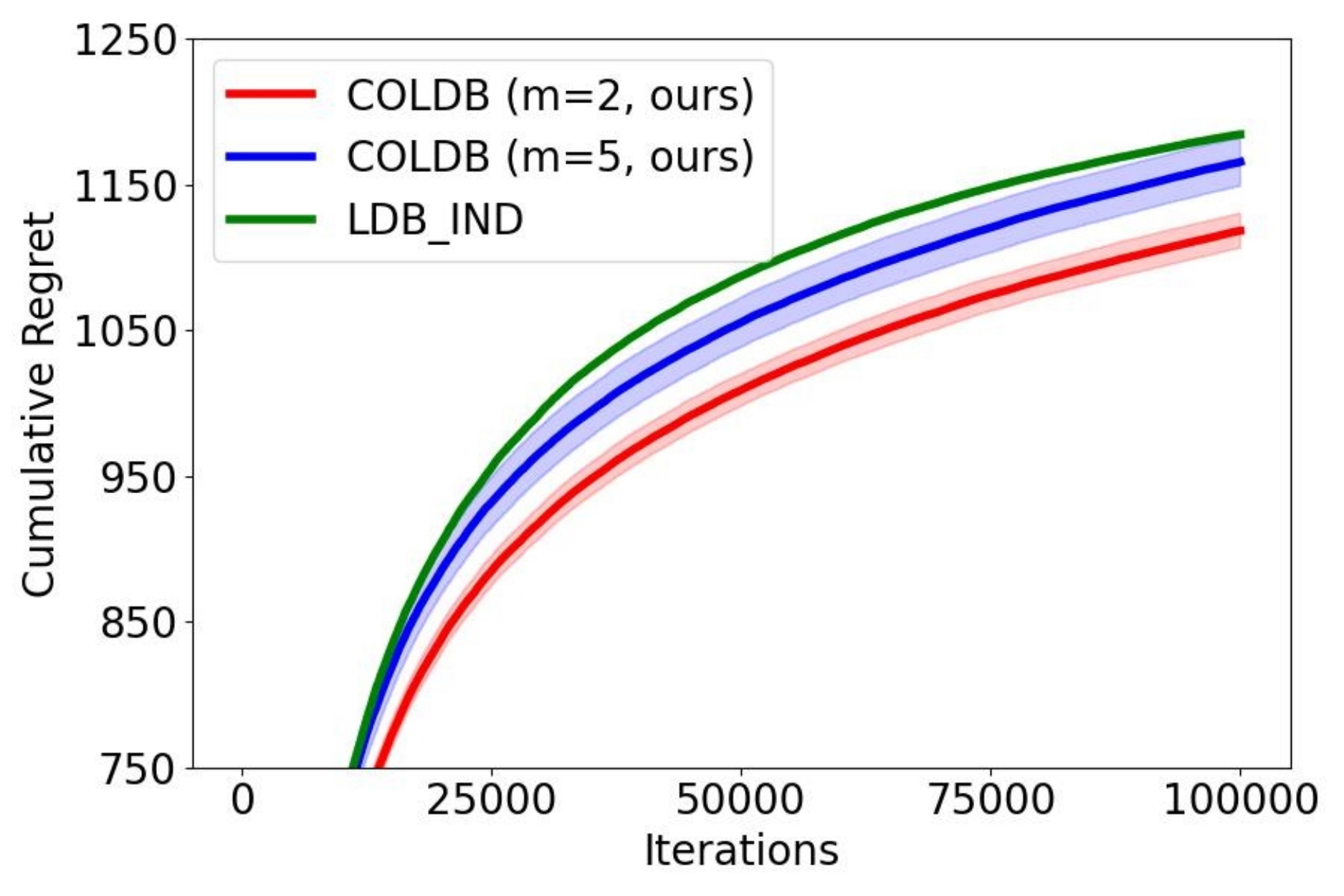} &\hspace{-5mm} 
         \includegraphics[width=0.52\linewidth]{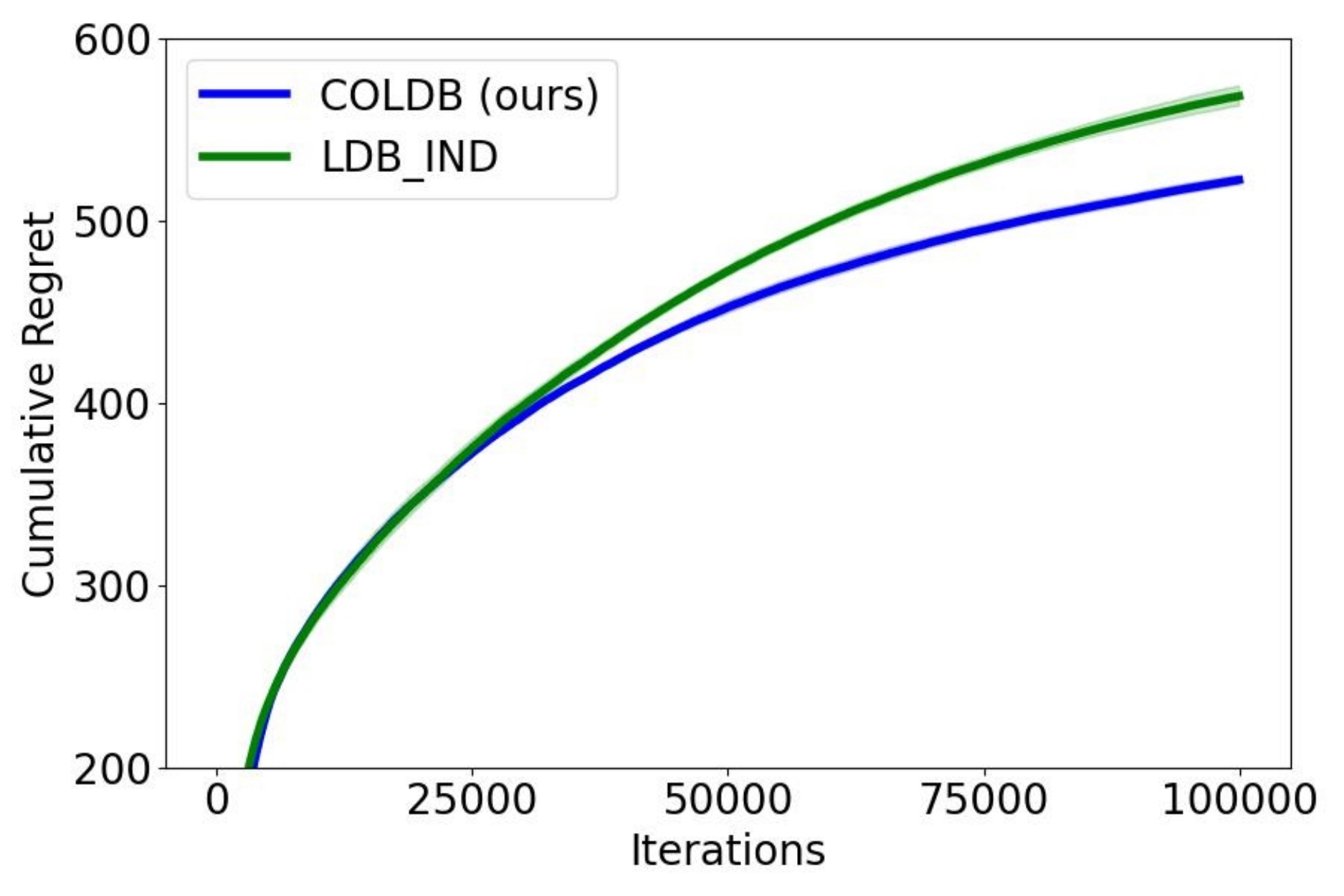} \\
         {\hspace{-3mm}(a) Synthetic} & {(b) MovieLens} 
     \end{tabular}
     \caption{
     Experimental results for our COLDB algorithm with a linear reward function.
     }
     \label{fig:exp:linear}
\end{figure}
\begin{figure}[h]
     \centering
     \begin{tabular}{cc}
        \hspace{-3mm} \includegraphics[width=0.52\linewidth]{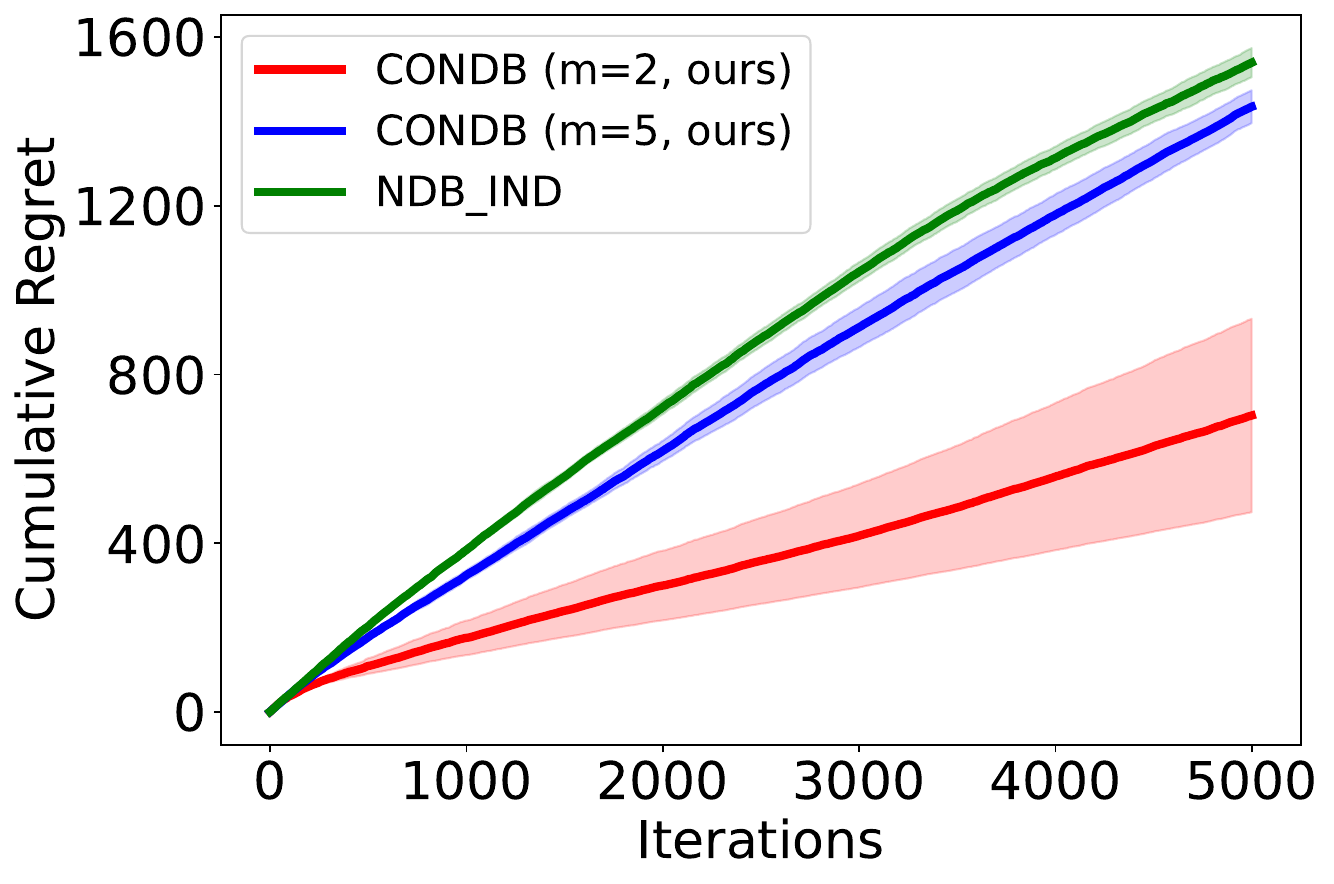} &\hspace{-5mm} 
         \includegraphics[width=0.52\linewidth]{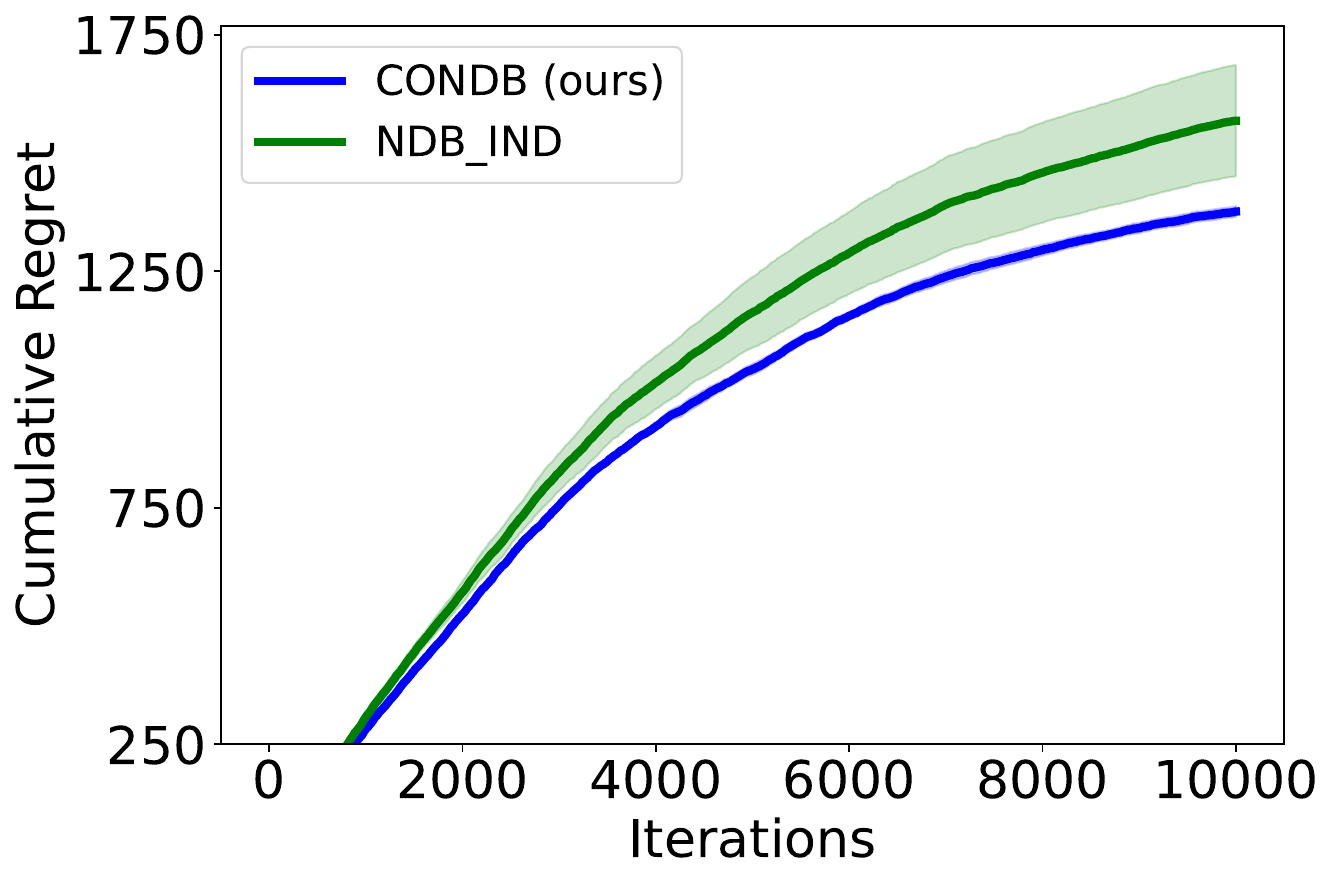} \\
         {\hspace{-3mm}(a) Synthetic} & {(b) MovieLens} 
     \end{tabular}
     \caption{
     Experimental results for our CONDB algorithm with a non-linear (square) reward function.
     }
     \label{fig:exp:neural}
\end{figure}

\paragraph{CONDB.}
We also construct both a synthetic and real-world experiment to evaluate our CONDB algorithm.
Most of the experimental settings are the same as those of the COLDB algorithm described above. The major difference is that instead of using linear reward functions, here we adopt a non-linear reward function, i.e., a square function: $f_i(\bx)=(\btheta_i^{\top} \bx)^2$. 
The results in this setting are plotted in Fig.~\ref{fig:exp:neural}.
Our CONDB algorithm achieves significantly smaller cumulative regrets than the baseline algorithm of NDB\_IND in both the synthetic and real-world experiments. Moreover, Fig.~\ref{fig:exp:neural} (a) shows that the performance of our CONDB is improved when a larger number of users are in the same cluster on average, i.e., when $m=2$.
These results demonstrate the potential of our CONDB algorithm to excel in problems with complicated non-linear reward functions.

\vspace{-1.5mm}
\section{Related Work}
\vspace{-1.5mm}
Our work is closely related to: online clustering of bandits (CB), dueling bandits, and neural bandits.

\subsection{Clustering of Bandits}
The concept of clustering bandits (CB) was first introduced in \cite{gentile2014online}, where a graph-based approach was proposed for solving the problem. In subsequent work, \cite{li2016collaborative} explored the incorporation of collaborative effects among items to aid in the clustering of users. Further extending this idea, \cite{li2018online} tackled the CB problem in the context of cascading bandits, where feedback is provided through random prefixes. Another direction of this research, presented in \cite{10.5555/3367243.3367445}, investigates the scenario where users have varying arrival frequencies. In \cite{liu2022federated}, a federated setting for CB is proposed, which addresses both privacy concerns and the communication overhead in distributed environments. More recently, two papers by \cite{wang2024onlinea} and \cite{wang2024onlineb} examine the design of robust CB algorithms in the presence of model mis-specifications and adversarial data corruptions, respectively.

All these works in CB assume the agent recommends a single arm per round, with a real-valued reward reflecting user satisfaction. However, this does not apply to scenarios such as large language models seeking user preference feedback to improve the model, where users provide binary feedback comparing two responses. To the best of our knowledge, this paper is the first to consider dueling binary feedback in the CB problem.

\subsection{Dueling Bandits and Neural Bandits}
Dueling bandits has been receiving growing attention over the years since its introduction \cite{ICML09_yue2009interactively,ICML11_yue2011beat,JCSS12_yue2012k} due to the prevelance of preference or relative feedback in real-world applications.
Many earlier works on dueling bandits have focused on MAB problems with a finte number of arms \cite{WSDM14_zoghi2014relative,ICML14_ailon2014reducing,ICML14_zoghi2014relative,COLT15_komiyama2015regret,ICML15_gajane2015relative,UAI18_saha2018battle,AISTATS19_saha2019active,ALT19_saha2019pac,AISTATS22_saha2022exploiting,ICML23_zhu2023principled}.
More recently, contextual dueing bandits, which model the reward function using a parametric function of the features of the arms, have attracted considerable attention \cite{NeurIPS21_saha2021optimal,ALT22_saha2022efficient,ICML22_bengs2022stochastic,arXiv23_di2023variance,arXiv24_li2024feelgood,verma2024neural}.

To apply MABs to complicated real-world applications with non-linear reward functions, neural bandits have been proposed which use a neural network to model the reward function \cite{zhou2020neural,zhang2020neural}.
Recently, we have witnessed a significant growing interest in further improving the theoretical and empirical performance of neural bandits and applying it to solve real-world problems \cite{xu2020neural,kassraie2021neural,gu2021batched,nabati2021online,lisicki2021empirical,ban2021ee,ban2021convolutional,jia2021learning,nguyen2021offline,zhu2021pure,kassraie2022graph,salgia2022provably,dai2022sample,hwang2023combinatorial,qi2023graph,qi2024meta}.
In particular, the work of \citet{ban2024meta} has adopted a neural network as a meta-learner for adapting to users in different clusters within the framework of clustering of bandits, and the work of \citet{verma2024neural} has combined neural bandits with dueling bandits.

\section{Conclusion}
In this work, we introduce the first clustering of dueling bandit algorithms for both linear and non-linear latent reward functions, which enhance the performance of MAB with preference feedback via cross-user collaboraiton.
Our algorithms estimates the clustering structure online 
based on the
estimated reward function parameters, and employs the data from all users within the same cluster to select the pair of arms to query for preference feedback.
We derive upper bounds on the cumulative regret of our algorithms, which show that our algorithms enjoy theoretically guaranteed improvement when a larger number of users belong to the same cluster on average. We also use synthetic and real-world experiments to validate our theoretical findings.

\newpage
\section*{Impact Statements}
This paper presents work whose goal is to advance the field of Machine Learning. There are many potential societal consequences of our work, none which we feel must be specifically highlighted here.

\bibliography{example_paper}

\begin{thebibliography}{59}
\providecommand{\natexlab}[1]{#1}
\providecommand{\url}[1]{\texttt{#1}}
\expandafter\ifx\csname urlstyle\endcsname\relax
  \providecommand{\doi}[1]{doi: #1}\else
  \providecommand{\doi}{doi: \begingroup \urlstyle{rm}\Url}\fi

\bibitem[Abbasi-Yadkori et~al.(2011)Abbasi-Yadkori, P{\'a}l, and Szepesv{\'a}ri]{abbasi2011improved}
Abbasi-Yadkori, Y., P{\'a}l, D., and Szepesv{\'a}ri, C.
\newblock Improved algorithms for linear stochastic bandits.
\newblock \emph{Advances in neural information processing systems}, 24, 2011.

\bibitem[Ailon et~al.(2014)Ailon, Karnin, and Joachims]{ICML14_ailon2014reducing}
Ailon, N., Karnin, Z., and Joachims, T.
\newblock Reducing dueling bandits to cardinal bandits.
\newblock In \emph{Proc. ICML}, pp.\  856--864, 2014.

\bibitem[Ban \& He(2021{\natexlab{a}})Ban and He]{ban2021convolutional}
Ban, Y. and He, J.
\newblock Convolutional neural bandit: Provable algorithm for visual-aware advertising.
\newblock {arXiv}:2107.07438, 2021{\natexlab{a}}.

\bibitem[Ban \& He(2021{\natexlab{b}})Ban and He]{ban2021local}
Ban, Y. and He, J.
\newblock Local clustering in contextual multi-armed bandits.
\newblock In \emph{Proceedings of the Web Conference 2021}, pp.\  2335--2346, 2021{\natexlab{b}}.

\bibitem[Ban et~al.(2022)Ban, Yan, Banerjee, and He]{ban2021ee}
Ban, Y., Yan, Y., Banerjee, A., and He, J.
\newblock Ee-net: Exploitation-exploration neural networks in contextual bandits.
\newblock In \emph{Proc. ICLR}, 2022.

\bibitem[Ban et~al.(2024)Ban, Qi, Wei, Liu, and He]{ban2024meta}
Ban, Y., Qi, Y., Wei, T., Liu, L., and He, J.
\newblock Meta clustering of neural bandits.
\newblock In \emph{Proceedings of the 30th ACM SIGKDD Conference on Knowledge Discovery and Data Mining}, pp.\  95--106, 2024.

\bibitem[Bengs et~al.(2022)Bengs, Saha, and H{\"u}llermeier]{ICML22_bengs2022stochastic}
Bengs, V., Saha, A., and H{\"u}llermeier, E.
\newblock Stochastic contextual dueling bandits under linear stochastic transitivity models.
\newblock In \emph{Proc. ICML}, pp.\  1764--1786, 2022.

\bibitem[Chu et~al.(2011)Chu, Li, Reyzin, and Schapire]{chu2011contextual}
Chu, W., Li, L., Reyzin, L., and Schapire, R.
\newblock Contextual bandits with linear payoff functions.
\newblock In \emph{Proceedings of the Fourteenth International Conference on Artificial Intelligence and Statistics}, pp.\  208--214. JMLR Workshop and Conference Proceedings, 2011.

\bibitem[Dai et~al.(2022)Dai, Shu, Low, and Jaillet]{dai2022sample}
Dai, Z., Shu, Y., Low, B. K.~H., and Jaillet, P.
\newblock Sample-then-optimize batch neural {Thompson} sampling.
\newblock In \emph{Proc. {NeurIPS}}, 2022.

\bibitem[Dai et~al.(2023)Dai, Shu, Verma, Fan, Low, and Jaillet]{ICLR23_dai2022federated}
Dai, Z., Shu, Y., Verma, A., Fan, F.~X., Low, B. K.~H., and Jaillet, P.
\newblock Federated neural bandits.
\newblock In \emph{Proc. ICLR}, 2023.

\bibitem[Di et~al.(2023)Di, Jin, Wu, Zhao, Farnoud, and Gu]{arXiv23_di2023variance}
Di, Q., Jin, T., Wu, Y., Zhao, H., Farnoud, F., and Gu, Q.
\newblock Variance-aware regret bounds for stochastic contextual dueling bandits.
\newblock \emph{arXiv:2310.00968}, 2023.

\bibitem[Gajane et~al.(2015)Gajane, Urvoy, and Cl{\'e}rot]{ICML15_gajane2015relative}
Gajane, P., Urvoy, T., and Cl{\'e}rot, F.
\newblock A relative exponential weighing algorithm for adversarial utility-based dueling bandits.
\newblock In \emph{Proc. ICML}, pp.\  218--227, 2015.

\bibitem[Gentile et~al.(2014)Gentile, Li, and Zappella]{gentile2014online}
Gentile, C., Li, S., and Zappella, G.
\newblock Online clustering of bandits.
\newblock In \emph{International Conference on Machine Learning}, pp.\  757--765. PMLR, 2014.

\bibitem[Gentile et~al.(2017)Gentile, Li, Kar, Karatzoglou, Zappella, and Etrue]{gentile2017context}
Gentile, C., Li, S., Kar, P., Karatzoglou, A., Zappella, G., and Etrue, E.
\newblock On context-dependent clustering of bandits.
\newblock In \emph{International Conference on machine learning}, pp.\  1253--1262. PMLR, 2017.

\bibitem[Gu et~al.(2021)Gu, Karbasi, Khosravi, Mirrokni, and Zhou]{gu2021batched}
Gu, Q., Karbasi, A., Khosravi, K., Mirrokni, V., and Zhou, D.
\newblock Batched neural bandits.
\newblock {arXiv}:2102.13028, 2021.

\bibitem[Harper \& Konstan(2015)Harper and Konstan]{harper2015movielens}
Harper, F.~M. and Konstan, J.~A.
\newblock The movielens datasets: History and context.
\newblock \emph{Acm transactions on interactive intelligent systems (tiis)}, 5\penalty0 (4):\penalty0 1--19, 2015.

\bibitem[Hunter(2004)]{AS04_hunter2004mm}
Hunter, D.~R.
\newblock Mm algorithms for generalized bradley-terry models.
\newblock \emph{Annals of Statistics}, pp.\  384--406, 2004.

\bibitem[Hwang et~al.(2023)Hwang, Chai, and Oh]{hwang2023combinatorial}
Hwang, T., Chai, K., and Oh, M.-h.
\newblock Combinatorial neural bandits.
\newblock In \emph{International Conference on Machine Learning}, pp.\  14203--14236. PMLR, 2023.

\bibitem[Jacot et~al.(2018)Jacot, Gabriel, and Hongler]{jacot2018neural}
Jacot, A., Gabriel, F., and Hongler, C.
\newblock Neural tangent kernel: Convergence and generalization in neural networks.
\newblock In \emph{Proc. {NeurIPS}}, 2018.

\bibitem[Jia et~al.(2021)Jia, Zhang, Zhou, Gu, and Wang]{jia2021learning}
Jia, Y., Zhang, W., Zhou, D., Gu, Q., and Wang, H.
\newblock Learning neural contextual bandits through perturbed rewards.
\newblock In \emph{Proc. ICLR}, 2021.

\bibitem[Kassraie \& Krause(2022)Kassraie and Krause]{kassraie2021neural}
Kassraie, P. and Krause, A.
\newblock Neural contextual bandits without regret.
\newblock In \emph{Proc. AISTATS}, pp.\  240--278, 2022.

\bibitem[Kassraie et~al.(2022)Kassraie, Krause, and Bogunovic]{kassraie2022graph}
Kassraie, P., Krause, A., and Bogunovic, I.
\newblock Graph neural network bandits.
\newblock In \emph{Proc. {NeurIPS}}, 2022.

\bibitem[Komiyama et~al.(2015)Komiyama, Honda, Kashima, and Nakagawa]{COLT15_komiyama2015regret}
Komiyama, J., Honda, J., Kashima, H., and Nakagawa, H.
\newblock Regret lower bound and optimal algorithm in dueling bandit problem.
\newblock In \emph{Proc. COLT}, pp.\  1141--1154, 2015.

\bibitem[Lattimore \& Szepesv{\'a}ri(2020)Lattimore and Szepesv{\'a}ri]{lattimore2020bandit}
Lattimore, T. and Szepesv{\'a}ri, C.
\newblock \emph{Bandit algorithms}.
\newblock Cambridge University Press, 2020.

\bibitem[Li et~al.(2010)Li, Chu, Langford, and Schapire]{li2010contextual}
Li, L., Chu, W., Langford, J., and Schapire, R.~E.
\newblock A contextual-bandit approach to personalized news article recommendation.
\newblock In \emph{Proceedings of the 19th international conference on World wide web}, pp.\  661--670, 2010.

\bibitem[Li et~al.(2017)Li, Lu, and Zhou]{ICML17_li2017provably}
Li, L., Lu, Y., and Zhou, D.
\newblock Provably optimal algorithms for generalized linear contextual bandits.
\newblock In \emph{Proc. ICML}, pp.\  2071--2080, 2017.

\bibitem[Li \& Zhang(2018)Li and Zhang]{li2018online}
Li, S. and Zhang, S.
\newblock Online clustering of contextual cascading bandits.
\newblock In \emph{Proceedings of the AAAI Conference on Artificial Intelligence}, volume~32, 2018.

\bibitem[Li et~al.(2016)Li, Karatzoglou, and Gentile]{li2016collaborative}
Li, S., Karatzoglou, A., and Gentile, C.
\newblock Collaborative filtering bandits.
\newblock In \emph{Proceedings of the 39th International ACM SIGIR conference on Research and Development in Information Retrieval}, pp.\  539--548, 2016.

\bibitem[Li et~al.(2019)Li, Chen, Li, and Leung]{10.5555/3367243.3367445}
Li, S., Chen, W., Li, S., and Leung, K.-S.
\newblock Improved algorithm on online clustering of bandits.
\newblock In \emph{Proceedings of the 28th International Joint Conference on Artificial Intelligence}, IJCAI'19, pp.\  2923–2929. AAAI Press, 2019.
\newblock ISBN 9780999241141.

\bibitem[Li et~al.(2024)Li, Zhao, and Gu]{arXiv24_li2024feelgood}
Li, X., Zhao, H., and Gu, Q.
\newblock Feel-good thompson sampling for contextual dueling bandits.
\newblock \emph{arXiv:2404.06013}, 2024.

\bibitem[Lin et~al.(2024{\natexlab{a}})Lin, Dai, Verma, Ng, Jaillet, and Low]{lin2024prompt}
Lin, X., Dai, Z., Verma, A., Ng, S.-K., Jaillet, P., and Low, B. K.~H.
\newblock Prompt optimization with human feedback.
\newblock \emph{arXiv preprint arXiv:2405.17346}, 2024{\natexlab{a}}.

\bibitem[Lin et~al.(2024{\natexlab{b}})Lin, Wu, Dai, Hu, Shu, Ng, Jaillet, and Low]{lin2023instinct}
Lin, X., Wu, Z., Dai, Z., Hu, W., Shu, Y., Ng, S.-K., Jaillet, P., and Low, B. K.~H.
\newblock Use your {INSTINCT}: Instruction optimization using neural bandits coupled with transformers.
\newblock In \emph{Proc. ICML}, 2024{\natexlab{b}}.

\bibitem[Lisicki et~al.(2021)Lisicki, Afkanpour, and Taylor]{lisicki2021empirical}
Lisicki, M., Afkanpour, A., and Taylor, G.~W.
\newblock An empirical study of neural kernel bandits.
\newblock In \emph{NeurIPS Workshop on Bayesian Deep Learning}, 2021.

\bibitem[Liu et~al.(2022)Liu, Zhao, Yu, Li, and Lui]{liu2022federated}
Liu, X., Zhao, H., Yu, T., Li, S., and Lui, J.
\newblock Federated online clustering of bandits.
\newblock In \emph{The 38th Conference on Uncertainty in Artificial Intelligence}, 2022.

\bibitem[Luce(2005)]{Book_luce2005individual}
Luce, R.~D.
\newblock \emph{Individual choice behavior: A theoretical analysis}.
\newblock Courier Corporation, 2005.

\bibitem[Nabati et~al.(2021)Nabati, Zahavy, and Mannor]{nabati2021online}
Nabati, O., Zahavy, T., and Mannor, S.
\newblock Online limited memory neural-linear bandits with likelihood matching.
\newblock In \emph{Proc. {ICML}}, 2021.

\bibitem[Nguyen-Tang et~al.(2022)Nguyen-Tang, Gupta, Nguyen, and Venkatesh]{nguyen2021offline}
Nguyen-Tang, T., Gupta, S., Nguyen, A.~T., and Venkatesh, S.
\newblock Offline neural contextual bandits: Pessimism, optimization and generalization.
\newblock In \emph{Proc. ICLR}, 2022.

\bibitem[Qi et~al.(2023)Qi, Ban, and He]{qi2023graph}
Qi, Y., Ban, Y., and He, J.
\newblock Graph neural bandits.
\newblock In \emph{Proceedings of the 29th ACM SIGKDD Conference on Knowledge Discovery and Data Mining}, pp.\  1920--1931, 2023.

\bibitem[Qi et~al.(2024)Qi, Ban, Wei, Zou, Yao, and He]{qi2024meta}
Qi, Y., Ban, Y., Wei, T., Zou, J., Yao, H., and He, J.
\newblock Meta-learning with neural bandit scheduler.
\newblock \emph{Advances in Neural Information Processing Systems}, 36, 2024.

\bibitem[Saha(2021)]{NeurIPS21_saha2021optimal}
Saha, A.
\newblock Optimal algorithms for stochastic contextual preference bandits.
\newblock In \emph{Proc. NeurIPS}, pp.\  30050--30062, 2021.

\bibitem[Saha \& Ghoshal(2022)Saha and Ghoshal]{AISTATS22_saha2022exploiting}
Saha, A. and Ghoshal, S.
\newblock Exploiting correlation to achieve faster learning rates in low-rank preference bandits.
\newblock In \emph{Proc. AISTATS}, pp.\  456--482, 2022.

\bibitem[Saha \& Gopalan(2018)Saha and Gopalan]{UAI18_saha2018battle}
Saha, A. and Gopalan, A.
\newblock Battle of bandits.
\newblock In \emph{Proc. UAI}, pp.\  805--814, 2018.

\bibitem[Saha \& Gopalan(2019{\natexlab{a}})Saha and Gopalan]{AISTATS19_saha2019active}
Saha, A. and Gopalan, A.
\newblock Active ranking with subset-wise preferences.
\newblock In \emph{Proc. AISTATS}, pp.\  3312--3321, 2019{\natexlab{a}}.

\bibitem[Saha \& Gopalan(2019{\natexlab{b}})Saha and Gopalan]{ALT19_saha2019pac}
Saha, A. and Gopalan, A.
\newblock Pac battling bandits in the plackett-luce model.
\newblock In \emph{Proc. ALT}, pp.\  700--737, 2019{\natexlab{b}}.

\bibitem[Saha \& Krishnamurthy(2022)Saha and Krishnamurthy]{ALT22_saha2022efficient}
Saha, A. and Krishnamurthy, A.
\newblock Efficient and optimal algorithms for contextual dueling bandits under realizability.
\newblock In \emph{Proc. ALT}, pp.\  968--994, 2022.

\bibitem[Salgia et~al.(2022)Salgia, Vakili, and Zhao]{salgia2022provably}
Salgia, S., Vakili, S., and Zhao, Q.
\newblock Provably and practically efficient neural contextual bandits.
\newblock {arXiv}:2206.00099, 2022.

\bibitem[Verma et~al.(2024)Verma, Dai, Lin, Jaillet, and Low]{verma2024neural}
Verma, A., Dai, Z., Lin, X., Jaillet, P., and Low, B. K.~H.
\newblock Neural dueling bandits.
\newblock \emph{arXiv preprint arXiv:2407.17112}, 2024.

\bibitem[Wang et~al.(2024{\natexlab{a}})Wang, Xie, Liu, Li, and Lui]{wang2024onlinea}
Wang, Z., Xie, J., Liu, X., Li, S., and Lui, J.
\newblock Online clustering of bandits with misspecified user models.
\newblock \emph{Advances in Neural Information Processing Systems}, 36, 2024{\natexlab{a}}.

\bibitem[Wang et~al.(2024{\natexlab{b}})Wang, Xie, Yu, Li, and Lui]{wang2024onlineb}
Wang, Z., Xie, J., Yu, T., Li, S., and Lui, J.
\newblock Online corrupted user detection and regret minimization.
\newblock \emph{Advances in Neural Information Processing Systems}, 36, 2024{\natexlab{b}}.

\bibitem[Xu et~al.(2020)Xu, Wen, Zhao, and Gu]{xu2020neural}
Xu, P., Wen, Z., Zhao, H., and Gu, Q.
\newblock Neural contextual bandits with deep representation and shallow exploration.
\newblock {arXiv}:2012.01780, 2020.

\bibitem[Yue \& Joachims(2009)Yue and Joachims]{ICML09_yue2009interactively}
Yue, Y. and Joachims, T.
\newblock Interactively optimizing information retrieval systems as a dueling bandits problem.
\newblock In \emph{Proc. ICML}, pp.\  1201--1208, 2009.

\bibitem[Yue \& Joachims(2011)Yue and Joachims]{ICML11_yue2011beat}
Yue, Y. and Joachims, T.
\newblock Beat the mean bandit.
\newblock In \emph{Proc. ICML}, pp.\  241--248, 2011.

\bibitem[Yue et~al.(2012)Yue, Broder, Kleinberg, and Joachims]{JCSS12_yue2012k}
Yue, Y., Broder, J., Kleinberg, R., and Joachims, T.
\newblock The k-armed dueling bandits problem.
\newblock \emph{Journal of Computer and System Sciences}, pp.\  1538--1556, 2012.

\bibitem[Zhang et~al.(2021)Zhang, Zhou, Li, and Gu]{zhang2020neural}
Zhang, W., Zhou, D., Li, L., and Gu, Q.
\newblock Neural {Thompson} sampling.
\newblock In \emph{Proc. ICLR}, 2021.

\bibitem[Zhou et~al.(2020)Zhou, Li, and Gu]{zhou2020neural}
Zhou, D., Li, L., and Gu, Q.
\newblock Neural contextual bandits with {UCB}-based exploration.
\newblock In \emph{Proc. ICML}, pp.\  11492--11502, 2020.

\bibitem[Zhu et~al.(2023)Zhu, Jordan, and Jiao]{ICML23_zhu2023principled}
Zhu, B., Jordan, M., and Jiao, J.
\newblock Principled reinforcement learning with human feedback from pairwise or k-wise comparisons.
\newblock In \emph{Proc. ICML}, pp.\  43037--43067, 2023.

\bibitem[Zhu et~al.(2021)Zhu, Zhou, Jiang, Gu, Willett, and Nowak]{zhu2021pure}
Zhu, Y., Zhou, D., Jiang, R., Gu, Q., Willett, R., and Nowak, R.
\newblock Pure exploration in kernel and neural bandits.
\newblock In \emph{Proc. {NeurIPS}}, volume~34, pp.\  11618--11630, 2021.

\bibitem[Zoghi et~al.(2014{\natexlab{a}})Zoghi, Whiteson, Munos, and Rijke]{ICML14_zoghi2014relative}
Zoghi, M., Whiteson, S., Munos, R., and Rijke, M.
\newblock Relative upper confidence bound for the k-armed dueling bandit problem.
\newblock In \emph{Proc. ICML}, pp.\  10--18, 2014{\natexlab{a}}.

\bibitem[Zoghi et~al.(2014{\natexlab{b}})Zoghi, Whiteson, De~Rijke, and Munos]{WSDM14_zoghi2014relative}
Zoghi, M., Whiteson, S.~A., De~Rijke, M., and Munos, R.
\newblock Relative confidence sampling for efficient on-line ranker evaluation.
\newblock In \emph{Proc. WSDM}, pp.\  73--82, 2014{\natexlab{b}}.

\end{thebibliography}
\bibliographystyle{icml2025}

\newpage
\appendix
\onecolumn
\allowdisplaybreaks

\section{Clustering Of Neural Dueling Bandits (CONDB) Algorithm}
\label{app:sec:condb:algo}
Here we provide the complete statement of our CONDB algorithm.

\begin{algorithm*}[h] 
\caption{Clustering Of Neural Dueling Bandits (CONDB)}
\label{algo:neural:dueling:bandits}
	\begin{algorithmic}[1]
    \STATE {\bf Input:} $f(T_{i,t}) \triangleq \frac{\beta_T + B \sqrt{\frac{\lambda}{\kappa_\mu}} + 1}{\sqrt{2\tilde{\lambda}_x T_{i,t}}}$, regularization parameter $\lambda>0$, confidence parameter $\beta_T \triangleq \frac{1}{\kappa_\mu} \sqrt{ \widetilde{d} + 2\log(u/\delta)}$.
    $\phi(\bx) = \frac{1}{\sqrt{m_{\text{NN}}}}g(\bx;\btheta_0)$ where $\btheta_0$ denotes the NN parameters at initialization.
    \STATE {\bf Initialization:} 
$\bV_0=\bV_{i,0} = \frac{\lambda}{\kappa_\mu} \mathbf{I}$ , $\hat\btheta_{i,0}=\bzero$, $\forall{i \in \mathcal{U}}$, a complete Graph $G_0 = (\mathcal{U},E_0)$ over $\mathcal{U}$.\alglinelabel{algo line: init}
		\FOR{$t= 1, \ldots, T$}
            \STATE Receive the index of the current user $i_t\in\mathcal{U}$, and the current feasible arm set $\cX_t$;\label{user serve neural}
            \STATE Find the connected component $\overline C_t$ for user $i_t$ in the current graph $G_{t-1}$ as the current cluster; \alglinelabel{detection:neural}

            \STATE Train the neural network using $\{(\bx_{s,1}, \bx_{s,2}, y_s)\}_{s\in[t-1], i_s\in \overline C_t}$ by minimizing the following loss function:
            \begin{equation}
                \overline{\btheta}_t=\arg\min_{\btheta} 
                - \frac{1}{m} \sum_{s\in[t-1]\atop i_s\in \overline C_t}\left( y_s\log\mu\left( h(\bx_{s,1};\btheta) - h(\bx_{s,2};\btheta) \right) + (1-y_s)\log\mu\left(h(\bx_{s,2};\btheta) - h(\bx_{s,1};\btheta)\right) \right) + \frac{\lambda}{2}\norm{\btheta - \btheta_0}_2^2;
            \end{equation}\alglinelabel{algo line: common theta:neural}
            
            \STATE Calculate the aggregated information matrix for cluster $\overline C_t$: $\bV_{t-1}=\bV_0+\sum_{s\in[t-1]\atop i_s\in \overline C_t} (\phi(\bx_{s,1}) - \phi(\bx_{s,2}))(\phi(\bx_{s,1}) - \phi(\bx_{s,2}))^\top$. \alglinelabel{algo line: common matrix:neural}
            \STATE Choose the first arm  $\bx_{t,1} = \arg\max_{\bx\in\mathcal{X}_t} h(\bx;\overline{\btheta}_t)$; \alglinelabel{algo line: choose x1:neural}
            \STATE Choose the second arm $\bx_{t,2} = \arg\max_{\bx\in\mathcal{X}_t} h(\bx;\overline{\btheta}_t) + \left( \beta_T + B\sqrt{\frac{\lambda}{\kappa_\mu}} + 1 \right) \norm{\left(\phi(\bx) - \phi(\bx_{t,1})\right)}_{\bV_{t-1}^{-1}}$; \alglinelabel{algo line: choose x2:neural}
		\STATE Observe the preference feedback: $y_t = \mathbbm{1}(\bx_{t,1}\succ \bx_{t,2})$, and update history: $\mathcal{D}_t=\{i_s, \bx_{s,1}, \bx_{s,2}, y_s\}_{s=1,\ldots,t}$;\alglinelabel{algo line: feedback:neural}
        \STATE Train the neural network using all data for user $i_t$: $\{(\bx_{s,1}, \bx_{s,2}, y_s)\}_{s\in[t], i_s = i_t}$ by minimizing the following loss function:\alglinelabel{algo line: update it:neural}
            \begin{equation}
                \hat{\btheta}_{i_t,t}=\arg\min_{\btheta} 
                - \frac{1}{m_{\text{NN}}} \sum_{s\in[t-1]\atop i_s = i_t}\left( y_s\log\mu\left( h(\bx_{s,1};\btheta) - h(\bx_{s,2};\btheta) \right) + (1-y_s)\log\mu\left(h(\bx_{s,2};\btheta) - h(\bx_{s,1};\btheta)\right) \right)+ \frac{\lambda}{2}\norm{\btheta - \btheta_0}_2^2;
                \label{eq:loss:func:individial}
            \end{equation}
            keep the estimations of other users unchanged;
            \STATE Delete the edge $(i_t,\ell)\in E_{t-1}$ if
            \begin{equation}
                \sqrt{m_{\text{NN}}}\norm{\hat\btheta_{i_t,t}-\hat\btheta_{\ell,t}}_2>f(T_{i_t,t})+f(T_{\ell,t})
            \end{equation} \alglinelabel{algo line: delete:neural}
		\ENDFOR
	\end{algorithmic}
\end{algorithm*}

\section{Proof of Theorem \ref{thm: linear regret bound}}
\label{app: proof linear}
First, we prove the following lemma.
\begin{lemma}\label{lemma:concentration:theta}
With probability at least $1-\delta$ for some $\delta\in(0,1)$, at any $t\in[T]$:
\begin{equation}
    \norm{\hat{\btheta}_{i,t}-\btheta^{j(i)}}_2\leq\frac{\sqrt{\lambda \kappa_\mu}+\sqrt{2\log(u/\delta)+d\log(1+T_{i,t}\kappa_\mu/d\lambda)}}{\kappa_\mu\sqrt{\lambda_{\text{min}}(\bV_{i,t-1})}}, \forall{i\in\mathcal{U}}\label{l2 norm difference bound}\,,
\end{equation}
where $\bV_{i,t-1}=\frac{\lambda}{\kappa_\mu} \mathbf{I}+\sum_{s\in[t-1]\atop i_s=i}(\phi(\bx_{s,1}) - \phi(\bx_{s,2}))(\phi(\bx_{s,1}) - \phi(\bx_{s,2}))^\top$, and $T_{i,t}$ denotes the number of rounds of seeing user $i$ in the first $t$ rounds.
\end{lemma}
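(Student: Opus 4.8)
The plan is to recognize $\hat{\btheta}_{i,t}$ as the $\ell_2$-regularized maximum-likelihood estimator of a logistic (Bradley--Terry--Luce) generalized linear model whose covariates are the arm-difference vectors, and then to run the standard self-normalized martingale argument used for GLM bandits. First I would abbreviate $\boldsymbol{z}_s \triangleq \phi(\bx_{s,1}) - \phi(\bx_{s,2})$ and, using the identities $\mu'(z) = \mu(z)\mu(-z)$ and $\mu(-z) = 1-\mu(z)$, simplify the gradient of the per-user regularized negative log-likelihood to $\nabla\mathcal{L}_{i,t}(\btheta) = -\sum_{s:\,i_s=i}\big(y_s - \mu(\btheta^\top\boldsymbol{z}_s)\big)\boldsymbol{z}_s + \lambda\btheta$. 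Since $\hat{\btheta}_{i,t}$ is the minimizer, its first-order optimality condition is $\sum_{s}\big(y_s - \mu(\hat{\btheta}_{i,t}^\top\boldsymbol{z}_s)\big)\boldsymbol{z}_s = \lambda\hat{\btheta}_{i,t}$.

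Next, defining $g_t(\btheta) \triangleq \sum_{s:\,i_s=i}\mu(\btheta^\top\boldsymbol{z}_s)\boldsymbol{z}_s + \lambda\btheta$ and subtracting its value at the truth $\btheta^{j(i)}$, the optimality condition rearranges to $g_t(\hat{\btheta}_{i,t}) - g_t(\btheta^{j(i)}) = \sum_s \epsilon_s\boldsymbol{z}_s - \lambda\btheta^{j(i)}$, where $\epsilon_s \triangleq y_s - \mu(\btheta^{j(i)\top}\boldsymbol{z}_s)$ is bounded and conditionally zero-mean under the BTL model. By the mean value theorem, $g_t(\hat{\btheta}_{i,t}) - g_t(\btheta^{j(i)}) = \bar{G}_t(\hat{\btheta}_{i,t} - \btheta^{j(i)})$ with $\bar{G}_t = \int_0^1 \nabla g_t\big(\btheta^{j(i)} + v(\hat{\btheta}_{i,t}-\btheta^{j(i)})\big)\,dv \succeq \kappa_\mu\,\bV_{i,t-1}$, where the lower bound uses the curvature condition $\mu' \ge \kappa_\mu$ from Assumption~\ref{assumption4} and the definition of $\bV_{i,t-1}$. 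A Cauchy--Schwarz step in the $\bar{G}_t$-norm, followed by $\bar{G}_t^{-1}\preceq \kappa_\mu^{-1}\bV_{i,t-1}^{-1}$, then yields $\norm{\hat{\btheta}_{i,t}-\btheta^{j(i)}}_{\bV_{i,t-1}} \le \frac{1}{\kappa_\mu}\big(\norm{\sum_s\epsilon_s\boldsymbol{z}_s}_{\bV_{i,t-1}^{-1}} + \lambda\norm{\btheta^{j(i)}}_{\bV_{i,t-1}^{-1}}\big)$.

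Finally I would bound the two pieces separately. The regularization piece is controlled by $\bV_{i,t-1}\succeq\frac{\lambda}{\kappa_\mu}\bI$ together with the normalization $\norm{\btheta^{j(i)}}_2\le 1$, giving $\lambda\norm{\btheta^{j(i)}}_{\bV_{i,t-1}^{-1}}\le\sqrt{\lambda\kappa_\mu}$, which is exactly the first term in the numerator. The stochastic piece is controlled by the self-normalized martingale tail bound (Theorem~1 of Abbasi-Yadkori et al., 2011) applied to the adaptively collected noise $\epsilon_s$, which after the standard determinant--trace estimate $\log\frac{\det\bV_{i,t-1}}{\det(\frac{\lambda}{\kappa_\mu}\bI)} \le d\log\big(1 + O(T_{i,t}\kappa_\mu/(d\lambda))\big)$ produces the $\sqrt{2\log(u/\delta) + d\log(1 + T_{i,t}\kappa_\mu/(d\lambda))}$ term, where the $\log(u/\delta)$ arises from a union bound over all $u$ users (replacing $\delta$ by $\delta/u$). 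Converting $\norm{\cdot}_{\bV_{i,t-1}}$ to the Euclidean norm via $\norm{\cdot}_2 \le \norm{\cdot}_{\bV_{i,t-1}}/\sqrt{\lambda_{\min}(\bV_{i,t-1})}$ then assembles the claimed inequality. The main obstacle is this stochastic step: verifying the conditional sub-Gaussianity of $\epsilon_s$ and correctly instantiating the self-normalized concentration inequality for the user-filtered, adaptively chosen covariates $\boldsymbol{z}_s$, so that the bound holds uniformly over every round $t$ and every user $i$ at once.
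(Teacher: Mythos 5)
Your proposal is correct and follows essentially the same route as the paper's proof: the first-order optimality condition of the regularized MLE, a mean-value-theorem argument giving $\bar{G}_t \succeq \kappa_\mu \bV_{i,t-1}$ (the paper's $\bM_{i,t-1}$), the split into the $\sqrt{\lambda\kappa_\mu}$ regularization term and the self-normalized noise term bounded via Theorem~1 of Abbasi-Yadkori et al.\ with the determinant--trace estimate, a union bound over the $u$ users, and the final conversion to the $\ell_2$ norm through $\lambda_{\min}(\bV_{i,t-1})$. The only cosmetic difference is that you apply the integral-form mean value theorem directly between $\hat{\btheta}_{i,t}$ and $\btheta^{j(i)}$, while the paper defines the auxiliary map $G_{i,t}$ and applies it between generic points; the resulting bound is identical, and your flagged obstacle (sub-Gaussianity of $\epsilon_s$ and the user-filtered adaptive design) is exactly what the paper handles by noting $\epsilon_s \in [-1,1]$ is $1$-sub-Gaussian.
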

\begin{proof}
    First, we prove the following result. 
    
    For a fixed user $i$, with probability at least $1-\delta$ for some $\delta\in(0,1)$, at any $t\in[T]$:
\begin{equation}
    \norm{\hat{\btheta}_{i,t}-\btheta^{j(i)}}_{\bV_{i,t-1}}\leq\frac{\sqrt{\lambda \kappa_\mu}+\sqrt{2\log(1/\delta)+d\log(1+4T_{i,t}\kappa_\mu/d\lambda)}}{\kappa_\mu}\label{V norm difference bound}\,,
\end{equation}
Recall that $f_i(\bx)=\btheta_i^\top\phi(\bx)$. In iteration $s$, define $\widetilde{\phi}_s = \phi(\bx_{s,1}) - \phi(\bx_{s,2})$.
And we define $\widetilde{f}_{i,s} = f_i(\bx_{s,1}) - f_i(\bx_{s,2}) =\btheta_i^{\top} \widetilde{\phi}_s$.

For any $\btheta_{f^\prime} \in\mathbb{R}^{d}$, define 
\[
G_{i,t}(\btheta_{f^\prime}) = \sum_{s\in[t-1]:\atop i_s=i}\left(\mu(\btheta_{f^\prime}^\top\widetilde{\phi}_s ) - \mu(\btheta_i^\top\widetilde{\phi}_s) \right) \widetilde{\phi}_s  + \lambda \btheta_{f'}.
\]
For $\lambda'\in(0, 1)$, setting $\btheta_{\bar{f}} = \lambda' \btheta_{f^\prime_1} + (1 - \lambda')\btheta_{f^\prime_2}$.
and using the mean-value theorem, we get:
\begin{align}
    \label{eqn:glb}
    G_{i,t}(\btheta_{f^\prime_1}) - G_{i,t}(\btheta_{f^\prime_2}) &= \left[\sum_{s\in[t-1]:\atop i_s=i} \nabla\mu(\btheta_{\bar{f}}^\top\widetilde{\phi}_s)\widetilde{\phi}_s \widetilde{\phi}_s^\top + \lambda \mathbf{I} \right](\btheta_{f^\prime_1} - \btheta_{f^\prime_2}) & \left( \btheta_i\text{ is constant} \right)  \nonumber \\
\end{align}
Define $\bM_{i,t-1} = \left[\sum_{s\in[t-1]:\atop i_s=i}\nabla\mu(\btheta_{\bar{f}}^\top\widetilde{\phi}_s)\widetilde{\phi}_s \widetilde{\phi}_s^\top + \lambda \mathbf{I} \right]$, and recall that $\bV_{i,t-1} = \sum_{s\in[t-1]:\atop i_s=i} \widetilde{\phi}_s \widetilde{\phi}_s^\top + \frac{\lambda}{\kappa_\mu} \mathbf{I}$.
Then we have that $\bM_{i,t-1} \succeq \kappa_\mu \bV_{i,t-1}$ and that $\bV^{-1}_{i,t-1} \succeq \kappa_\mu \bM^{-1}_{i,t-1}$, where we use the notation $\bM\succeq \bV$ to denote that $\bM-\bV$ is a positive semi-definite matrix. Then we have

\begin{align*}
    \norm{G_{i,t}(\hat\btheta_{i,t})-\lambda\btheta_i}_{\bV_{i,t-1}^{-1}}^2 &=  \norm{G_{i,t}(\btheta_i) - G_t(\hat\btheta_{i,t})}_{\bV_{i,t-1}^{-1}}^2 = \norm{\bM_{i,t-1} (\btheta_i - \hat\btheta_{i,t})}_{\bV_{i,t-1}^{-1}}^2 & \left( G_{i,t}(\btheta_i) = \lambda\btheta_i\text{ by definition} \right)\\
    & = (\btheta_i - \hat\btheta_{i,t})^{\top} \bM_{i,t-1} \bV_{i,t-1}^{-1} \bM_{i,t-1} (\btheta_i - \hat\btheta_{i,t})\\
    &\geq (\btheta_i - \hat\btheta_{i,t})^{\top} \bM_{i,t-1} \kappa_\mu \bM_{i,t-1}^{-1} \bM_{i,t-1} (\btheta_i - \hat\btheta_{i,t})\\
    & = \kappa_\mu(\btheta_i - \hat\btheta_{i,t})^{\top} \bM_{i,t-1} (\btheta_i - \hat\btheta_{i,t})\\
    & \geq \kappa_\mu(\btheta_i - \hat\btheta_{i,t})^{\top} \kappa_\mu \bV_{i,t-1} (\btheta_i - \hat\btheta_{i,t})\\
    & = \kappa_\mu^2 (\btheta_i - \hat\btheta_{i,t})^{\top} \bV_{i,t-1} (\btheta_i - \hat\btheta_{i,t})\\
    & = \kappa_\mu^2 \norm{\btheta_i - \hat\btheta_{i,t}}^2_{\bV_{i,t-1}}  & \left(\text{as } ||\bx||_{\bA}^2 = \bx^\top \bA \bx \right)
\end{align*}
The first inequality is because $\bV^{-1}_{i,t-1} \succeq \kappa_\mu \bM^{-1}_{i,t-1}$, and the second inequality follows from $\bM_{i,t-1} \succeq \kappa_\mu \bV_{i,t-1}$.

Note that $\frac{\kappa_\mu}{\lambda} \mathbf{I}\succeq\bV_{i,t-1}$, which allows us to show that
\begin{equation}
\begin{split}
\norm{\lambda \btheta_i}_{\bV_{i,t-1}^{-1}} = \lambda \sqrt{ \btheta_i^{\top} \bV_{i,t-1}^{-1} \btheta_i} \leq \lambda \sqrt{ \btheta_i^{\top} \frac{\kappa_\mu}{\lambda} \btheta_i} \leq \sqrt{\lambda\kappa_\mu} \norm{\btheta_i}_2 \leq \sqrt{\lambda\kappa_\mu}.
\end{split}
\end{equation}
Using the two equations above, we have that
\begin{equation}
\begin{split}
\norm{\btheta_i - \hat{\btheta}_{i,t}}_{\bV_{i,t-1}} \le \frac{1}{\kappa_\mu} \norm{G_{i,t}(\hat{\btheta}_{i,t}) - \lambda \btheta_i}_{\bV_{i,t-1}^{-1}} &\leq \frac{1}{\kappa_\mu} \norm{G_{i,t}(\hat{\btheta}_{i,t})}_{\bV_{i,t-1}^{-1}} + \frac{1}{\kappa_\mu}\norm{\lambda \btheta_i}_{\bV_{i,t-1}^{-1}} \\
&\leq \frac{1}{\kappa_\mu} \norm{G_{i,t}(\hat{\btheta}_{i,t})}_{\bV_{i,t-1}^{-1}} + \sqrt{\frac{\lambda}{\kappa_\mu}}
\end{split}
\end{equation}

Then, let $f^i_{t,s}=\hat\btheta_{i,t}^\top\tilde\phi_s$, we have:
\begin{align*}
\frac{1}{\kappa_\mu^2} \norm{G_{i,t}(\hat\btheta_{i,t})}_{\bV_{i,t-1}^{-1}}^2 &= \frac{1}{\kappa_\mu^2} \norm{\sum_{s\in[t-1]:\atop i_s=i} (\mu(\hat\btheta_{i,t}^\top \widetilde{\phi}_s ) - \mu(\btheta_i^\top \widetilde{\phi}_s) ) \widetilde{\phi}_s + \lambda \hat\btheta_{i,t}}_{\bV_{i,t-1}^{-1}}^2 & \left(\text{by definition of } G_{i,t}(\hat\btheta_{i,t}) \right) \\
    &= \frac{1}{\kappa_\mu^2} \norm{\sum_{s\in[t-1]:\atop i_s=i} (\mu(f^i_{t,s}) - \mu(\widetilde{f}_{i,s}) ) \widetilde{\phi}_s + \lambda \hat\btheta_{i,t}}_{\bV_{i,t-1}^{-1}}^2 & \left(\text{see definitions of } f^i_{t,s}\, \text{and } \widetilde{f}_{i,s} \right) \\
    &= \frac{1}{\kappa_\mu^2} \norm{\sum_{s\in[t-1]:\atop i_s=i} (\mu(f^i_{t,s}) - (y_s - \epsilon_s) ) \widetilde{\phi}_s + \lambda \hat\btheta_{i,t}}_{\bV_{i,t-1}^{-1}}^2 & \left(\text{as } y_s = \mu(\widetilde{f}_{i,s}) + \epsilon_s \text{if } i_s=i \right) \\
    &= \frac{1}{\kappa_\mu^2} \norm{\sum_{s\in[t-1]:\atop i_s=i} \left(\mu(f^i_{t,s}) - y_s\right) \widetilde{\phi}_s + \sum_{s\in[t-1]:\atop i_s=i}\epsilon_s \widetilde{\phi}_s  + \lambda \hat\btheta_{i,t}}_{\bV_{i,t-1}^{-1}}^2 \\
    &\leq \frac{1}{\kappa_\mu^2} \norm{\sum_{s\in[t-1]:\atop i_s=i}\epsilon_s \widetilde{\phi}_s}_{\bV_{i,t-1}^{-1}}^2.
\end{align*}
The last step holds due to the following reasoning. Recall that $\hat\btheta_{i,t}$ is computed using MLE by solving the following equation:
    \begin{equation}
         \hat{\btheta}_{i_t,t} = \arg\min_{\btheta} \Bigg[ - \sum_{\substack{s \in [t-1] \\ i_s = i_t}} \bigg( y_s \log \mu\big(\btheta^\top [\phi(\bx_{s,1}) - \phi(\bx_{s,2})]\big)
        + (1 - y_s) \log \mu\big(\btheta^\top [\phi(\bx_{s,2}) - \phi(\bx_{s,1})]\big) \bigg) + \frac{\lambda}{2} \|\btheta\|_2^2 \Bigg].
    \end{equation}
Setting its gradient to $0$, the following is satisfied:
\begin{equation}
    \sum_{s\in[t-1]:\atop i_s=i} \left(\mu\left( \hat\btheta_{i,t}^{\top} \widetilde{\phi}_s \right) - y_s\right) \widetilde{\phi}_s + \lambda \hat\btheta_{i,t} = 0,
\end{equation}
which is used in the last step.

Now we have
\begin{equation}
    \label{eqn:parUB}
    \frac{1}{\kappa_\mu^2} \norm{G_{i,t}(\hat\btheta_{i,t})}_{\bV_{i,t-1}^{-1}}^2 \le \frac{1}{\kappa_\mu^2} \norm{\sum_{s\in[t-1]:\atop i_s=i}\epsilon_s \widetilde{\phi}_s}_{\bV_{i,t-1}^{-1}}^2.
\end{equation}

Denote $\bV \triangleq \frac{\lambda}{\kappa_\mu} \mathbf{I}$.
Note that the sequence of observation noises $\{\epsilon_s\}$ is $1$-sub-Gaussian.

Next, we can apply Theorem 1 from \cite{abbasi2011improved}, to obtain
\begin{equation}
\norm{\sum_{s\in[t-1]:\atop i_s=i}\epsilon_s \widetilde{\phi}_s}_{\bV_{i,t-1}^{-1}}^2 \leq 2\log\left( \frac{\det(\bV_{i,t-1})^{1/2}}{\delta\det(\bV)^{1/2}} \right),
\end{equation}
which holds with probability of at least $1-\delta$.

Next, based on our assumption that $\norm{\widetilde{\phi}_s}_2 \leq 2$, according to Lemma 10 from \cite{abbasi2011improved}, we have that
\begin{equation}
\det(\bV_{i,t-1}) \leq \left( \lambda/\kappa_\mu + 4T_{i,t} / d \right)^d\,,
\end{equation}
where $T_{i,t}$ denotes the number of rounds of serving user $i$ in the first $t$ rounds.
Therefore, 
\begin{equation}
\sqrt{\frac{\det{\bV_{i,t-1}}}{\det(V)}} \leq \sqrt{\frac{\left( \lambda/\kappa_\mu + 4T_{i,t} / d \right)^d}{(\lambda/\kappa_\mu)^d}} = \left( 1 + 4T_{i,t}\kappa_{\mu}/(d\lambda) \right)^{\frac{d}{2}}
\label{eq:upper:bound:det:Vt:V}
\end{equation}
This gives us
\begin{equation}
\norm{\sum_{s\in[t-1]:\atop i_s=i}\epsilon_s \widetilde{\phi}_s}_{\bV_{i,t-1}^{-1}}^2 \leq 2\log\left( \frac{\det(\bV_{i,t-1})^{1/2}}{\delta\det(V)^{1/2}} \right) \leq 2\log(1/\delta) + d\log\left( 1 + 4T_{i,t}\kappa_{\mu}/(d\lambda) \right)
\label{eq:upper:bound:proof:theta:interm}
\end{equation}

Then, with the above reasoning, we have that with probability at least $1-\delta$ for some $\delta\in(0,1)$, at any $t\in[T]$:
\begin{equation}
    \norm{\hat{\btheta}_{i,t}-\btheta^{j(i)}}_{\bV_{i,t-1}}\leq\frac{\sqrt{\lambda \kappa_\mu}+\sqrt{2\log(1/\delta)+d\log(1+4T_{i,t}\kappa_\mu/d\lambda)}}{\kappa_\mu}\,,
\end{equation}

Taking a union bound over $u$ users, we have that with probability at least $1-\delta$ for some $\delta\in(0,1)$, at any $t\in[T]$:
\begin{equation}
    \norm{\hat{\btheta}_{i,t}-\btheta^{j(i)}}_{\bV_{i,t-1}}\leq\frac{\sqrt{\lambda \kappa_\mu}+\sqrt{2\log(u/\delta)+d\log(1+4T_{i,t}\kappa_\mu/d\lambda)}}{\kappa_\mu}\label{V norm difference bound union}\,, \forall i\in \mathcal{U}.
\end{equation}
Then we have that with probability at least $1-\delta$ for all $t\in[T]$ and all $i\in\mathcal{U}$
\begin{align}
    \norm{\hat{\btheta}_{i,t}-\btheta^{j(i)}}&\leq \frac{\norm{\hat{\btheta}_{i,t}-\btheta^{j(i)}}_{\bV_{i,t-1}}}{\sqrt{\lambda_{\text{min}(\bV_{i,t-1})}}}\notag\\
    &\leq \frac{\sqrt{\lambda \kappa_\mu}+\sqrt{2\log(u/\delta)+d\log(1+4T_{i,t}\kappa_\mu/d\lambda)}}{\kappa_\mu{\sqrt{\lambda_{\text{min}(\bV_{i,t-1})}}}}.
\end{align}
\end{proof}

Then, we prove the following lemma, which gives a sufficient time $T_0$ for the COLDB algorithm to cluster all the users correctly with high probability.

\begin{lemma}\label{T0 lemma}
    With the carefully designed edge deletion rule, after 
\begin{equation*}
    \begin{aligned}
        T_0&\triangleq 16u\log(\frac{u}{\delta})+4u\max\{
        \frac{128d}{\kappa_\mu^2\tilde{\lambda}_x\gamma^2}\log(\frac{u}{\delta}),\frac{16}{\tilde{\lambda}_x^2}\log(\frac{8ud}{\tilde{\lambda}_x^2\delta})\}\\
        &=O\bigg(u\left( \frac{d}{\kappa_\mu^2\tilde{\lambda}_x\gamma^2}+\frac{1}{\tilde{\lambda}_x^2}\right)\log \frac{1}{\delta}\bigg)
    \end{aligned}
\end{equation*}
rounds, with probability at least $1-3\delta$ for some $\delta\in(0,\frac{1}{3})$, COLDB can cluster all the users correctly.
\end{lemma}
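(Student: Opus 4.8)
The plan is to exhibit a high-probability event on which the connected components of $G_{T_0}$ coincide exactly with the ground-truth clusters $C_1,\dots,C_m$. Because $G_0$ is complete and edges are only ever removed, correct clustering is equivalent to two conditions holding at time $T_0$: (i) no edge joining two users of the \emph{same} cluster has been deleted, and (ii) every edge joining two users of \emph{different} clusters has been deleted. I would reduce both to the single per-user estimate $\norm{\hat\btheta_{i,t}-\btheta^{j(i)}}_2\le f(T_{i,t})$, which in turn rests on three high-probability events: the $\bV_{i,t-1}$-norm concentration of Lemma~\ref{lemma:concentration:theta}, a lower bound on $\lambda_{\min}(\bV_{i,t-1})$, and a lower bound on how many times each user is served.

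First I would bound the service counts. Under Assumption~\ref{assumption2} each user is served with probability $1/u$ per round, so $T_{i,T_0}=\sum_{t\le T_0}\mathbbm{1}(i_t=i)$ is a sum of independent Bernoulli$(1/u)$ variables with mean $T_0/u$; a multiplicative Chernoff bound and a union bound over the $u$ users give $T_{i,T_0}\ge T_0/(2u)$ for all $i$ with probability $\ge 1-\delta$ once $T_0\ge 16u\log(u/\delta)$, which is the first term of $T_0$. Next I would turn the $\bV_{i,t-1}$-weighted bound of Lemma~\ref{lemma:concentration:theta} into an $\ell_2$ bound by lower-bounding $\lambda_{\min}(\bV_{i,t-1})$. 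Writing $\widetilde\phi_s=\phi(\bx_{s,1})-\phi(\bx_{s,2})$, we have $\bV_{i,t-1}-\tfrac{\lambda}{\kappa_\mu}\bI=\sum_{s:i_s=i}\widetilde\phi_s\widetilde\phi_s^\top$, a sum of rank-one terms. Using Assumption~\ref{assumption3} one shows, as in \cite{wang2024onlinea,wang2024onlineb}, that the conditional second-moment matrices $\EE[\widetilde\phi_s\widetilde\phi_s^\top\mid\text{history}]$ have smallest eigenvalue bounded below by a constant multiple of $\tilde{\lambda}_x$, where $\tilde{\lambda}_x=\int_0^{\lambda_x}(1-e^{-(\lambda_x-x)^2/(2\sigma^2)})^{C}dx$ arises from an order-statistic computation over the $K\le C$ i.i.d.\ arms; a matrix Chernoff/Bernstein inequality then yields $\lambda_{\min}(\bV_{i,t-1})\ge 2\tilde{\lambda}_x T_{i,t}$ with probability $\ge 1-\delta$ once $T_{i,t}\gtrsim \tilde{\lambda}_x^{-2}\log(ud/\delta)$, the origin of the third term of $T_0$. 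Substituting this into Lemma~\ref{lemma:concentration:theta} gives exactly $\norm{\hat\btheta_{i,t}-\btheta^{j(i)}}_2\le f(T_{i,t})$.

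Granting $\norm{\hat\btheta_{i,t}-\btheta^{j(i)}}_2\le f(T_{i,t})$, the two clustering conditions follow from the triangle inequality. For (i), users $i,\ell$ in the same cluster satisfy $\btheta^{j(i)}=\btheta^{j(\ell)}$, hence $\norm{\hat\btheta_{i,t}-\hat\btheta_{\ell,t}}_2\le f(T_{i,t})+f(T_{\ell,t})$ at every round, so the test in Line~\ref{algo line: delete} never fires and intra-cluster edges survive. For (ii), users $i,\ell$ in different clusters satisfy $\norm{\btheta^{j(i)}-\btheta^{j(\ell)}}_2\ge\gamma$ by Assumption~\ref{assumption1}, hence $\norm{\hat\btheta_{i,t}-\hat\btheta_{\ell,t}}_2\ge\gamma-f(T_{i,t})-f(T_{\ell,t})$, which exceeds $f(T_{i,t})+f(T_{\ell,t})$ as soon as both $f$-values fall below $\gamma/4$. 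Since $f$ is decreasing and $T_{i,T_0}\ge T_0/(2u)$, the condition $f(T_{i,T_0})<\gamma/4$ holds once $T_0/(2u)\gtrsim \tfrac{d}{\kappa_\mu^2\tilde{\lambda}_x\gamma^2}\log(u/\delta)$, giving the second term of $T_0$. One subtlety is that Line~\ref{algo line: delete} only inspects edges incident to the currently served user; to guarantee an inter-cluster edge $(i,\ell)$ is actually removed I would examine the last round $t^\star\le T_0$ at which either endpoint is served, at which point both counts have reached their final values $T_{i,T_0},T_{\ell,T_0}\ge T_0/(2u)$, so the served endpoint triggers the deletion.

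Finally I would set $T_0$ to dominate all three thresholds (the sum-plus-max form in the statement) and union-bound the three events for the overall probability $1-3\delta$. I expect the main obstacle to be the eigenvalue step: proving the uniform-in-$t$ bound $\lambda_{\min}(\bV_{i,t-1})\ge 2\tilde{\lambda}_x T_{i,t}$. The difficulty is twofold — first, the difference features $\widetilde\phi_s$ are generated by the adaptive greedy/UCB selection rule rather than sampled i.i.d., so one must argue that their \emph{conditional} second-moment matrix still has smallest eigenvalue $\Omega(\tilde{\lambda}_x)$ (this is where Assumption~\ref{assumption3} and the order-statistic definition of $\tilde{\lambda}_x$ are indispensable); and second, the concentration must tolerate this adaptivity, calling for a Freedman-type matrix martingale inequality together with the sub-Gaussian variance control of Assumption~\ref{assumption3}. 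The remaining ingredients are routine triangle inequalities and Chernoff bounds.
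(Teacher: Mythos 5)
Your proposal is correct and follows essentially the same route as the paper's proof: a Chernoff-type bound on per-user service counts (yielding the $16u\log(u/\delta)$ term via Lemma 8 of \cite{li2018online}), the item-regularity-based eigenvalue bound $\lambda_{\min}(\bV_{i,t-1})\geq 2\tilde{\lambda}_x T_{i,t}$ (which the paper outsources to Lemma J.1 of \cite{wang2024onlinea} and Lemma 7 of \cite{li2018online}, exactly the adaptivity/order-statistic machinery you flag as the main obstacle), substitution into Lemma~\ref{lemma:concentration:theta} to force $\norm{\hat{\btheta}_{i,t}-\btheta^{j(i)}}_2<\gamma/4$ (the paper splits the resulting inequality into two sufficient conditions and invokes Lemma 9 of \cite{li2018online} to invert the logarithmic term you gloss over), and the same pair of triangle-inequality arguments showing intra-cluster edges survive while inter-cluster edges are deleted, with the identical $1-3\delta$ union bound. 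Your $t^\star$ bookkeeping for why an inter-cluster edge is actually removed---since Line~\ref{algo line: delete} only inspects edges incident to the currently served user---is a minor refinement the paper elides (it simply asserts deletion ``after $T_0$''), but it does not change the structure of the argument.
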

\begin{proof}
    Then, with the item regularity assumption stated in Assumption \ref{assumption3}, Lemma J.1 in \cite{wang2024onlinea}, together with Lemma 7 in \cite{li2018online}, and applying a union bound, with probability at least $1-\delta$, for all $i\in\mathcal{U}$, at any $t$ such that $T_{i,t}\geq\frac{16}{\tilde{\lambda}_x^2}\log(\frac{8ud}{\tilde{\lambda}_x^2\delta})$, we have:
\begin{equation}
    \lambda_{\text{min}}(\bV_{i,t})\geq2\tilde{\lambda}_x T_{i,t}\,.
    \label{min eigen}
\end{equation}
Then, together with Lemma \ref{lemma:concentration:theta}, we have: if $T_{i,t}\geq\frac{16}{\tilde{\lambda}_x^2}\log(\frac{8ud}{\tilde{\lambda}_x^2\delta})$, then with probability $\geq 1-2\delta$, we have:
\begin{align}
    \norm{\hat{\btheta}_{i,t}-\btheta^{j(i)}}
    &\leq \frac{\sqrt{\lambda \kappa_\mu}+\sqrt{2\log(u/\delta)+d\log(1+4T_{i,t}\kappa_\mu/d\lambda)}}{\kappa_\mu{\sqrt{\lambda_{\text{min}(\bV_{i,t-1})}}}}\notag\\
    &\leq \frac{\sqrt{\lambda \kappa_\mu}+\sqrt{2\log(u/\delta)+d\log(1+4T_{i,t}\kappa_\mu/d\lambda)}}{\kappa_\mu{\sqrt{2\tilde{\lambda}_x T_{i,t}}}}\notag\,.
\end{align}

Now, let
\begin{equation}
    \frac{\sqrt{\lambda \kappa_\mu}+\sqrt{2\log(u/\delta)+d\log(1+4T_{i,t}\kappa_\mu/d\lambda)}}{\kappa_\mu{\sqrt{2\tilde{\lambda}_x T_{i,t}}}}<\frac{\gamma}{4}\,,
\end{equation}
Let $\lambda \kappa_\mu\leq2\log(u/\delta)+d\log(1+4T_{i,t}\kappa_\mu/d\lambda)$, which typically holds ($\kappa_\mu$ is typically very small), we can get
\begin{equation}
        \frac{2\log(u/\delta)+d\log(1+4T_{i,t}\kappa_\mu/d\lambda)}{2\kappa_\mu^2{\tilde{\lambda}_x T_{i,t}}}<\frac{\gamma^2}{64}\,,
\end{equation}
and a sufficient condition for it to hold is
\begin{align}
     \frac{2\log(u/\delta)}{2\kappa_\mu^2{\tilde{\lambda}_x T_{i,t}}}<\frac{\gamma^2}{128}\label{condition1}
\end{align}
and 
\begin{equation}
    \frac{d\log(1+4T_{i,t}\kappa_\mu/d\lambda)}{2\kappa_\mu^2{\tilde{\lambda}_x T_{i,t}}}<\frac{\gamma^2}{128}\,.\label{condition2}
\end{equation}
Solving Eq.(\ref{condition1}), we can get
\begin{equation}
    T_{i,t}\geq \frac{128\log(u/\delta)}{\kappa_\mu^2\tilde{\lambda}_x\gamma^2}\,.
\end{equation}
Following Lemma 9 in \cite{li2018online}, we can get the following sufficient condition for Eq.(\ref{condition2}):
\begin{equation}
    T_{i,t}\geq \frac{128d}{\kappa_\mu^2\tilde\lambda_x\gamma^2}\log(\frac{512}{\lambda\kappa_\mu\tilde{\lambda}_x\gamma^2})\,.
\end{equation}
Let $u/\delta\geq 512/\lambda\kappa_\mu\tilde{\lambda}_x\gamma^2$, which is typically held. Then, combining all together, we have that if
\begin{equation}
    T_{i,t}\geq\max\{\frac{128d}{\kappa_\mu^2\tilde{\lambda}_x\gamma^2}\log(\frac{u}{\delta}),\frac{16}{\tilde{\lambda}_x^2}\log(\frac{8ud}{\tilde{\lambda}_x^2\delta})\}, \forall i\in \mathcal{U}\,, \label{condition final}
\end{equation}
then with probability at least $1-2\delta$, we have
\begin{equation}
    \norm{\hat{\btheta}_{i,t}-\btheta^{j(i)}}<\gamma/4, \forall i\in\mathcal{U}\,.
\end{equation}
By Lemma 8 in \cite{li2018online}, and Assumption \ref{assumption2} of user arrival uniformness, we have that for all
\begin{equation*}
    \begin{aligned}
        T_0&\triangleq 16u\log(\frac{u}{\delta})+4u\max\{
        \frac{128d}{\kappa_\mu^2\tilde{\lambda}_x\gamma^2}\log(\frac{u}{\delta}),\frac{16}{\tilde{\lambda}_x^2}\log(\frac{8ud}{\tilde{\lambda}_x^2\delta})\}\\
        &=O\bigg(u\left( \frac{d}{\kappa_\mu^2\tilde{\lambda}_x\gamma^2}+\frac{1}{\tilde{\lambda}_x^2}\right)\log \frac{1}{\delta}\bigg)\,,
    \end{aligned}
\end{equation*}
the condition in Eq.(\ref{condition final}) is satisfied with probability at least $1-\delta$.

Therefore we have that for all $t\geq T_0$, with probability $\geq 1-3\delta$:
\begin{equation}
    \norm{\hat{\btheta}_{i,t}-\btheta^{j(i)}}_2<\frac{\gamma}{4}\,,\forall{i\in\mathcal{U}}\,.
    \label{final condition}
\end{equation}
Finally, we only need to show that with $\norm{\hat{\btheta}_{i,t}-\btheta^{j(i)}}_2<\frac{\gamma}{4}\,,\forall{i\in\mathcal{U}}$, the algorithm can cluster all the users correctly. First, when the edge $(i,l)$ is deleted, user $i$ and user $j$ must belong to different \gtclusters{}, i.e., $\norm{\btheta_i-\btheta_l}_2>0$. This is because by the deletion rule of the algorithm, the concentration bound, and triangle inequality
\begin{align}
   &\norm{\btheta_i-\btheta_l}_2=\norm{\btheta^{j(i)}-\btheta^{j(l)}}_2\notag\\
   &\geq\norm{\hat{\btheta}_{i,t}-\hat{\btheta}_{l,t}}_2-\norm{\btheta^{j(l)}-\hat{\btheta}_{l,t}}_2-\norm{\btheta^{j(i)}-\hat{\btheta}_{i,t}}_2\notag\\
   &\geq\norm{\hat{\btheta}_{i,t}-\hat{\btheta}_{l,t}}_2-f(T_{i,t})-f(T_{l,t})>0 \,.
\end{align}
Second, we can show that if $\norm{\btheta_i-\btheta_l}>\gamma$, meaning that user $i$ and user $l$ are not in the same \gtcluster, COLDB will delete the edge $(i,l)$ after $T_0$. This is because
\begin{align}
    \norm{\hat\btheta_{i,t}-\hat{\btheta}_{l,t}}&\geq \norm{\btheta_i-\btheta_l}-\norm{\hat{\btheta}_{i,t}-\btheta^{j(i)}}_2-\norm{\hat{\btheta}_{l,t}-\btheta^{j(l)}}_2\notag\\
    &>\gamma-\frac{\gamma}{4}-\frac{\gamma}{4}\notag\\
    &=\frac{\gamma}{2}>f(T_{i,t})+f(T_{l,t})\,,
\end{align}
which will trigger the edge deletion rule to delete edge $(i,l)$. Combining all the reasoning above, we can finish the proof.
\end{proof}

Then, we prove the following lemmas for the cluster-based statistics.
\begin{lemma}\label{lemma:concentration:theta cluster}
With probability at least $1-4\delta$ for some $\delta\in(0,1/4)$, at any $t\geq T_0$:
\begin{equation}
    \norm{\overline \btheta_t-\btheta_{i_t}}_{\bV_{t-1}}\leq\frac{\sqrt{\lambda \kappa_\mu}+\sqrt{2\log(u/\delta)+d\log(1+4T\kappa_\mu/d\lambda)}}{\kappa_\mu}\label{l2 norm difference bound for cluster}\,.
\end{equation}
\end{lemma}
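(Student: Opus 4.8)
The plan is to reduce this cluster-level concentration bound to the single-user bound already established in Lemma~\ref{lemma:concentration:theta}, by first conditioning on the event that the algorithm has correctly recovered the clustering structure. First I would invoke Lemma~\ref{T0 lemma}: for every $t \ge T_0$, with probability at least $1-3\delta$ the estimated cluster $\overline{C}_t$ found in Line~\ref{algo line: cluster detection} coincides exactly with the ground-truth cluster $C_{j(i_t)}$ containing the served user $i_t$. On this event, every round $s \le t-1$ contributing to the cluster MLE in \eqref{eq: solve common theta} and to the aggregated matrix $\bV_{t-1}$ in \eqref{eq:update:info:matrix:v:linear} satisfies $i_s \in C_{j(i_t)}$, so the true parameter generating each observation $y_s$ is identically $\btheta^{j(i_t)} = \btheta_{i_t}$. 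Consequently the cluster-aggregated data behaves exactly like data from a single user with parameter $\btheta_{i_t}$, and the only change relative to the single-user setting is that the number of pooled observations is at most $T$ rather than $T_{i,t}$.

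Given this reduction, I would replay the derivation of Lemma~\ref{lemma:concentration:theta} with the cluster statistics. Writing $\widetilde{\phi}_s = \phi(\bx_{s,1}) - \phi(\bx_{s,2})$, define $G_t(\btheta) = \sum_{s \in [t-1],\, i_s \in \overline{C}_t} \left( \mu(\btheta^\top \widetilde{\phi}_s) - \mu(\btheta_{i_t}^\top \widetilde{\phi}_s) \right) \widetilde{\phi}_s + \lambda\btheta$, and by the mean-value theorem write $G_t(\overline{\btheta}_t) - G_t(\btheta_{i_t}) = \bM_{t-1}(\overline{\btheta}_t - \btheta_{i_t})$ with $\bM_{t-1}$ the integrated Hessian. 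Assumption~\ref{assumption4} (item 2) gives $\bM_{t-1} \succeq \kappa_\mu \bV_{t-1}$, yielding the chain $\kappa_\mu \norm{\overline{\btheta}_t - \btheta_{i_t}}_{\bV_{t-1}} \le \norm{G_t(\overline{\btheta}_t)}_{\bV_{t-1}^{-1}} + \norm{\lambda\btheta_{i_t}}_{\bV_{t-1}^{-1}}$, exactly as before. The regularization term is controlled by $\sqrt{\lambda\kappa_\mu}$ using $\tfrac{\kappa_\mu}{\lambda}\bI \succeq \bV_{t-1}$ and $\norm{\btheta_{i_t}}_2 \le 1$, while the first-order optimality condition of the cluster MLE eliminates the bias term, leaving only the noise sum $\tfrac{1}{\kappa_\mu}\norm{\sum_{s} \epsilon_s \widetilde{\phi}_s}_{\bV_{t-1}^{-1}}$.

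Finally I would bound this self-normalized noise sum via Theorem~1 of \citet{abbasi2011improved}, together with the determinant bound of their Lemma~10. Since the pooled cluster data spans at most $T$ rounds and $\norm{\widetilde{\phi}_s}_2 \le 2$, the log-determinant ratio is at most $d\log(1 + 4T\kappa_\mu/(d\lambda))$, producing the stated numerator with the $\log(u/\delta)$ term after a union bound over the $u$ users (equivalently over the $m \le u$ clusters). Combining the self-normalized concentration event (failing with probability at most $\delta$) with the correct-clustering event (failing with probability at most $3\delta$) via a union bound gives the overall $1-4\delta$ guarantee.

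The step I expect to be the main obstacle is the measurability/adaptivity subtlety in the final concentration step: the index set pooled into $\overline{C}_t$ is itself produced by the algorithm's data-dependent edge deletions, so I cannot naively treat $\{\epsilon_s\}_{i_s \in \overline{C}_t}$ as a fixed martingale-difference sequence. The clean way around this is precisely the conditioning above: on the correct-clustering event the pooled index set equals $\{s : i_s \in C_{j(i_t)}\}$, which is determined only by the exogenous user-arrival process (Assumption~\ref{assumption2}) and context draws (Assumption~\ref{assumption3}), so the noise terms retain their martingale-difference, $1$-sub-Gaussian structure with respect to the natural filtration and Theorem~1 of \citet{abbasi2011improved} applies. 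Handling this through a union bound over the high-probability clustering event, rather than conditioning inside the concentration inequality itself, is what keeps the argument valid.
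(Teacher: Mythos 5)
Your proposal follows essentially the same route as the paper's proof: condition on the correct-clustering event from Lemma~\ref{T0 lemma} so that $\overline{C}_t = C_{j(i_t)}$ and all pooled observations share the true parameter $\btheta_{i_t}$, replay the single-user MLE argument (mean-value theorem, $\bM_{t-1} \succeq \kappa_\mu \bV_{t-1}$, first-order optimality of the cluster MLE, $\sqrt{\lambda\kappa_\mu}$ bound on the regularizer), apply Theorem~1 and Lemma~10 of \citet{abbasi2011improved} with $T$ in place of $T_{i,t}$, and union bound the events to reach $1-4\delta$. Your explicit treatment of the adaptivity of the pooled index set --- noting that on the correct-clustering event it is determined by the exogenous arrival and context processes, preserving the martingale structure needed for the self-normalized bound --- is a subtlety the paper's proof glosses over, and is a welcome addition rather than a deviation.
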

\begin{proof}
First, by Lemma \ref{T0 lemma}, we have that with probability at least $1-3\delta$, all the users are clustered correctly, i.e., $\overline{C}_t=C_{j(i_t)}, \forall t\geq T_0$.  
Recall that $f_i(\bx)=\btheta_i^\top\phi(\bx)$. In iteration $s$, define $\widetilde{\phi}_s = \phi(\bx_{s,1}) - \phi(\bx_{s,2})$.
And we define $\widetilde{f}_{i,s} = f_i(\bx_{s,1}) - f_i(\bx_{s,2}) =\btheta_i^{\top} \widetilde{\phi}_s$.

For any $\btheta_{f^\prime} \in\mathbb{R}^{d}$, define 
\[
G_t(\btheta_{f^\prime}) = \sum_{s\in[t-1]:\atop i_s\in\overline{C}_t}\left(\mu(\btheta_{f^\prime}^\top\widetilde{\phi}_s ) - \mu(\btheta_{i_t}^\top\widetilde{\phi}_s) \right) \widetilde{\phi}_s  + \lambda \btheta_{f'}.
\]
For $\lambda'\in(0, 1)$, setting $\btheta_{\bar{f}} = \lambda' \btheta_{f^\prime_1} + (1 - \lambda')\btheta_{f^\prime_2}$.
and using the mean-value theorem, we get:
\begin{align}
    \label{eqn:glb}
    G_t(\btheta_{f^\prime_1}) - G_t(\btheta_{f^\prime_2}) &= \left[\sum_{s\in[t-1]:\atop i_s\in\overline{C}_t} \nabla\mu(\btheta_{\bar{f}}^\top\widetilde{\phi}_s)\widetilde{\phi}_s \widetilde{\phi}_s^\top + \lambda \mathbf{I} \right](\btheta_{f^\prime_1} - \btheta_{f^\prime_2})\nonumber \\
\end{align}
Define $\bM_{t-1} = \left[\sum_{s\in[t-1]:\atop i_s\in\overline{C}_t}\nabla\mu(\btheta_{\bar{f}}^\top\widetilde{\phi}_s)\widetilde{\phi}_s \widetilde{\phi}_s^\top + \lambda \mathbf{I} \right]$, and recall that $\bV_{t-1} = \sum_{s\in[t-1]:\atop i_s\in\overline{C}_t} \widetilde{\phi}_s \widetilde{\phi}_s^\top + \frac{\lambda}{\kappa_\mu} \mathbf{I}$.
Then we have that $\bM_{t-1} \succeq \kappa_\mu \bV_{t-1}$ and that $\bV^{-1}_{t-1} \succeq \kappa_\mu \bM^{-1}_{t-1}$. Then we have

\begin{align*}
    \norm{G_t(\overline{\btheta}_t) - \lambda \btheta_{i_t}}_{\bV_{t-1}^{-1}}^2 &=  \norm{G_t(\btheta_{i_t}) - G_t(\overline{\btheta}_t)}_{\bV_{t-1}^{-1}}^2 = \norm{\bM_{t-1} (\btheta_{i_t} - \overline{\btheta}_t)}_{\bV_{t-1}^{-1}}^2 & \left( G_t(\btheta_{i_t}) = \lambda \btheta_{i_t} \text{ by definition} \right)\\
    & = (\btheta_{i_t} - \overline{\btheta}_t)^{\top} \bM_{t-1} \bV_{t-1}^{-1} \bM_{t-1} (\btheta_{i_t} - \overline{\btheta}_t)\\
    &\geq (\btheta_{i_t} - \overline{\btheta}_t)^{\top} \bM_{t-1} \kappa_\mu \bM_{t-1}^{-1} \bM_{t-1} (\btheta_{i_t} - \overline{\btheta}_t)\\
    & = \kappa_\mu(\btheta_{i_t} - \overline{\btheta}_t)^{\top} \bM_{t-1} (\btheta_{i_t} - \overline{\btheta}_t)\\
    & \geq \kappa_\mu(\btheta_{i_t} - \overline{\btheta}_t)^{\top} \kappa_\mu \bV_{t-1} (\btheta_{i_t} - \overline{\btheta}_t)\\
    & = \kappa_\mu^2 (\btheta_{i_t} - \overline{\btheta}_t)^{\top} \bV_{t-1} (\btheta_{i_t} - \overline{\btheta}_t)\\
    & = \kappa_\mu^2 \norm{\btheta_{i_t} - \overline{\btheta}_t}^2_{\bV_{t-1}}  & \left(\text{as } ||\bx||_{\bA}^2 = \bx^\top \bA \bx \right)
\end{align*}
The first inequality is because $\bV^{-1}_{t-1} \succeq \kappa_\mu \bM^{-1}_{t-1}$, and the second inequality follows from $\bM_{t-1} \succeq \kappa_\mu \bV_{t-1}$.

Note that $\frac{\kappa_\mu}{\lambda} \mathbf{I}\succeq\bV_{t-1}$, which allows us to show that
\begin{equation}
\begin{split}
\norm{\lambda \btheta_{i_t}}_{\bV_{t-1}^{-1}} = \lambda \sqrt{ \btheta_{i_t}^{\top} \bV_{t-1}^{-1} \btheta_{i_t}} \leq \lambda \sqrt{ \btheta_{i_t}^{\top} \frac{\kappa_\mu}{\lambda} \btheta_{i_t}} \leq \sqrt{\lambda\kappa_\mu} \norm{\btheta_{i_t}}_2 \leq \sqrt{\lambda\kappa_\mu}.
\end{split}
\end{equation}
Using the two equations above, we have that
\begin{equation}
\begin{split}
\norm{\btheta_{i_t} - \overline{\btheta}_t}_{\bV_{t-1}} \le \frac{1}{\kappa_\mu} \norm{G_t(\overline{\btheta}_t) - \lambda \btheta_{i_t}}_{\bV_{t-1}^{-1}} &\leq \frac{1}{\kappa_\mu} \norm{G_t(\overline{\btheta}_t)}_{\bV_{t-1}^{-1}} + \frac{1}{\kappa_\mu}\norm{\lambda \btheta_{i_t}}_{\bV_{t-1}^{-1}} \\
&\leq \frac{1}{\kappa_\mu} \norm{G_t(\overline{\btheta}_t)}_{\bV_{t-1}^{-1}} + \sqrt{\frac{\lambda}{\kappa_\mu}}
\end{split}
\end{equation}

Then, let $\overline f_{t,s}=\overline{\btheta}_t^\top\tilde\phi_s$, we have:
\begin{align*}
\frac{1}{\kappa_\mu^2} \norm{G_t(\overline{\btheta}_t)}_{\bV_{t-1}^{-1}}^2
    &\leq \frac{1}{\kappa_\mu^2} \norm{\sum_{s\in[t-1]:\atop i_s\in\overline{C}_t} (\mu(\overline{\btheta}_t^\top \widetilde{\phi}_s ) - \mu(\btheta_{i_t}^\top \widetilde{\phi}_s) ) \widetilde{\phi}_s + \lambda \overline{\btheta}_t}_{\bV_{t-1}^{-1}}^2  & \left(\text{by definition of } G_t(\overline{\btheta}_t) \right) \\
    &= \frac{1}{\kappa_\mu^2} \norm{\sum_{s\in[t-1]:\atop i_s\in\overline{C}_t} (\mu(\overline f_{t,s}) - \mu(\widetilde{f}_{i_t,s}) ) \widetilde{\phi}_s + \lambda \overline{\btheta}_t}_{\bV_{t-1}^{-1}}^2 
    & \left(\text{see definitions of } \overline f_{t,s}\, \text{and } \widetilde{f}_{i,s} \right) \\
    &= \frac{1}{\kappa_\mu^2} \norm{\sum_{s\in[t-1]:\atop i_s\in\overline{C}_t} (\mu(\overline f_{t,s}) - (y_s - \epsilon_s) ) \widetilde{\phi}_s + \lambda \overline{\btheta}_t}_{\bV_{t-1}^{-1}}^2  \notag\\
    & \left(y_s = \mu(\widetilde{f}_{i_t,s}) + \epsilon_s \text{if } i_s=i_t, \text{and} i_s=i_t, \forall i_s\in \overline{C}_t, \forall t\geq T_0 \right) \\
    &= \frac{1}{\kappa_\mu^2} \norm{\sum_{s\in[t-1]:\atop i_s\in\overline{C}_t} \left(\mu(\overline f_{t,s}) - y_s\right) \widetilde{\phi}_s + \sum_{s\in[t-1]:\atop i_s\in\overline{C}_t}\epsilon_s \widetilde{\phi}_s  + \lambda \overline{\btheta}_t}_{\bV_{t-1}^{-1}}^2 \\
    &\leq \frac{1}{\kappa_\mu^2} \norm{\sum_{s\in[t-1]:\atop i_s\in\overline{C}_t}\epsilon_s \widetilde{\phi}_s}_{\bV_{t-1}^{-1}}^2.
\end{align*}
The last step holds due to the following reasoning. Recall that $\overline{\btheta}_t$ is computed using MLE by solving the following equation:
    \begin{equation}
        \overline{\btheta}_t=\arg\min_{\btheta} - \sum_{s\in[t-1]\atop i_s\in \overline C_t}\left( y_s\log\mu\left({\btheta}^{\top}\left[\phi(\bx_{s,1}) - \phi(\bx_{s,2})\right]\right) + (1-y_s)\log\mu\left({\btheta}^{\top}\left[\phi(\bx_{s,2}) - \phi(\bx_{s,1})\right]\right) \right) + \frac{1}{2}\lambda\norm{\btheta}_2^2.
    \end{equation}
Setting its gradient to $0$, the following is satisfied:
\begin{equation}
    \sum_{s\in[t-1]:\atop i_s\in\overline{C}_t} \left(\mu\left( \overline{\btheta}_t^{\top} \widetilde{\phi}_s \right) - y_s\right) \widetilde{\phi}_s + \lambda \overline{\btheta}_t = 0,
\end{equation}
which is used in the last step.

Now we have
\begin{equation}
    \label{eqn:parUB}
    \norm{\btheta_{i_t} - \overline{\btheta}_t}_{\bV_{t-1}} \le \frac{1}{\kappa_\mu} \norm{\sum_{s\in[t-1]:\atop i_s\in\overline{C}_t}\epsilon_s \widetilde{\phi}_s}_{\bV_{t-1}^{-1}}  + \sqrt{\frac{\lambda}{\kappa_\mu}}.
\end{equation}

Denote $\bV \triangleq \frac{\lambda}{\kappa_\mu} \mathbf{I}$.
Note that the sequence of observation noises $\{\epsilon_s\}$ is $1$-sub-Gaussian.

Next, we can apply Theorem 1 from \cite{abbasi2011improved}, to obtain
\begin{equation}
\norm{\sum_{s\in[t-1]:\atop i_s\in\overline{C}_t}\epsilon_s \widetilde{\phi}_s}_{\bV_{t-1}^{-1}}^2 \leq 2\log\left( \frac{\det(\bV_{t-1})^{1/2}}{\delta\det(\bV)^{1/2}} \right),
\end{equation}
which holds with probability of at least $1-\delta$.

Next, based on our assumption that $\norm{\widetilde{\phi}_s}_2 \leq 2$, according to Lemma 10 from \cite{abbasi2011improved}, we have that
\begin{equation}
\det(\bV_{t-1}) \leq \left( \lambda/\kappa_\mu + 4T / d \right)^d\,.
\end{equation}

Therefore, 
\begin{equation}
\sqrt{\frac{\det{\bV_{t-1}}}{\det(V)}} \leq \sqrt{\frac{\left( \lambda/\kappa_\mu + 4T / d \right)^d}{(\lambda/\kappa_\mu)^d}} = \left( 1 + 4T\kappa_{\mu}/(d\lambda) \right)^{\frac{d}{2}}
\label{eq:upper:bound:det:Vt:V}
\end{equation}
This gives us
\begin{equation}
\norm{\sum_{s\in[t-1]:\atop i_s\in\overline{C}_t}\epsilon_s \widetilde{\phi}_s}_{\bV_{t-1}^{-1}}^2 \leq 2\log\left( \frac{\det(\bV_{t-1})^{1/2}}{\delta\det(V)^{1/2}} \right) \leq 2\log(1/\delta) + d\log\left( 1 + 4T\kappa_{\mu}/(d\lambda) \right)
\label{eq:upper:bound:proof:theta:interm}
\end{equation}

Combining all together, we have with probability at least $1-4\delta$ for some $\delta\in(0,1/4)$, at any $t\geq T_0$:
\begin{equation}
    \norm{\overline \btheta_t-\btheta_{i_t}}_{\bV_{t-1}}\leq\frac{\sqrt{\lambda \kappa_\mu}+\sqrt{2\log(u/\delta)+d\log(1+4T\kappa_\mu/d\lambda)}}{\kappa_\mu}\,.
\end{equation}\end{proof}
Then, we prove the following lemma with the help of Lemma \ref{lemma:concentration:theta cluster}.
\begin{lemma}
    \label{lemma:ucb:diff}
For any iteration $t\geq T_0$, for all $\bx,\bx'\in\mathcal{X}_t$, with probability of at least $1-4\delta$, we have 
\[
|\left(f_{i_t}(\bx) - f_{i_t}(\bx')\right) - \overline\btheta_t^{\top}\left( \phi(\bx) - \phi(\bx') \right)| \leq \frac{\beta_T}{\kappa_\mu}\norm{\phi(\bx) - \phi(\bx')}_{\bV_{t-1}^{-1}}\,,
\]
where $\beta_T=\sqrt{\lambda \kappa_\mu}+\sqrt{2\log(u/\delta)+d\log(1+4T\kappa_\mu/d\lambda)}$.
\end{lemma}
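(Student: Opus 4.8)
The plan is to reduce this confidence-width bound to a single application of the matrix-weighted Cauchy--Schwarz inequality together with the concentration result of Lemma~\ref{lemma:concentration:theta cluster}. The crucial structural fact is that we are in the linear setting, so $f_{i_t}(\bx) = \btheta_{i_t}^\top \phi(\bx)$ for every arm. This lets me rewrite the quantity inside the absolute value purely as an inner product involving the estimation error $\overline\btheta_t - \btheta_{i_t}$:
\[
\left(f_{i_t}(\bx) - f_{i_t}(\bx')\right) - \overline\btheta_t^{\top}\left( \phi(\bx) - \phi(\bx') \right) = \left(\btheta_{i_t} - \overline\btheta_t\right)^{\top}\left( \phi(\bx) - \phi(\bx') \right).
\]

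Next I would split the error vector and the context-difference vector across the PSD matrix $\bV_{t-1}$ by writing $\bV_{t-1}^{1/2}$ on one factor and $\bV_{t-1}^{-1/2}$ on the other, and then apply the Cauchy--Schwarz inequality. This yields the clean factorization
\[
\left|\left(\btheta_{i_t} - \overline\btheta_t\right)^{\top}\left( \phi(\bx) - \phi(\bx') \right)\right| \leq \norm{\btheta_{i_t} - \overline\btheta_t}_{\bV_{t-1}} \cdot \norm{\phi(\bx) - \phi(\bx')}_{\bV_{t-1}^{-1}},
\]
which isolates the estimation error in the $\bV_{t-1}$-norm, exactly the quantity that Lemma~\ref{lemma:concentration:theta cluster} controls.

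The final step is to invoke Lemma~\ref{lemma:concentration:theta cluster}, which holds for all $t \geq T_0$ with probability at least $1-4\delta$ and gives $\norm{\overline\btheta_t - \btheta_{i_t}}_{\bV_{t-1}} \leq \beta_T / \kappa_\mu$, where $\beta_T = \sqrt{\lambda \kappa_\mu}+\sqrt{2\log(u/\delta)+d\log(1+4T\kappa_\mu/d\lambda)}$ is precisely the numerator appearing in that lemma (after cancelling the $1/\kappa_\mu$ into the stated constant). Substituting this bound into the Cauchy--Schwarz estimate immediately delivers the claimed inequality, and the high-probability event is inherited verbatim from Lemma~\ref{lemma:concentration:theta cluster}, so the same $1-4\delta$ confidence carries through.

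There is no real analytic obstacle here, since all the hard work (the MLE concentration and the correct-clustering guarantee ensuring $\overline{C}_t = C_{j(i_t)}$) is already packaged into Lemma~\ref{lemma:concentration:theta cluster}. The only point requiring care is bookkeeping: making sure the linearity identity uses the \emph{true} preference vector $\btheta_{i_t}$ of the served user (not the cluster estimate), and confirming that the definition of $\beta_T$ in this lemma matches the numerator of the bound in Lemma~\ref{lemma:concentration:theta cluster} so that the $1/\kappa_\mu$ factor lines up exactly as stated.
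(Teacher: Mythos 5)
Your proposal is correct and is essentially identical to the paper's own proof: both rewrite the deviation as $\left(\btheta_{i_t} - \overline\btheta_t\right)^{\top}\left(\phi(\bx) - \phi(\bx')\right)$ via linearity, apply the matrix-weighted Cauchy--Schwarz inequality to get $\norm{\btheta_{i_t} - \overline\btheta_t}_{\bV_{t-1}} \norm{\phi(\bx) - \phi(\bx')}_{\bV_{t-1}^{-1}}$, and then invoke Lemma~\ref{lemma:concentration:theta cluster} to bound the first factor by $\beta_T/\kappa_\mu$ on the same $1-4\delta$ event. Your bookkeeping remarks (using the true $\btheta_{i_t}$ and matching $\beta_T$ to the numerator of the concentration bound) are exactly right and match the paper.
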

\begin{proof}
\begin{equation}
\begin{split}
|\left(f_{i_t}(\bx) - f_{i_t}(\bx')\right) - \overline\btheta_t^{\top}\left( \phi(\bx) - \phi(\bx') \right)| &= |\btheta_{i_t}^{\top} \left[(\phi(\bx) - \phi(\bx')\right] - \overline\btheta_t^{\top}\left[ \phi(\bx) - \phi(\bx') \right]|\\
&= | \left(\btheta_{i_t} - \overline\btheta_t\right)^{\top} \left[\phi(\bx) - \phi(\bx') \right]|\\
&\leq \norm{\btheta_{i_t} - \overline\btheta_t}_{\bV_{t-1}} \norm{\phi(\bx) - \phi(\bx')}_{\bV_{t-1}^{-1}}\\
&\leq \frac{\beta_T}{\kappa_\mu} \norm{\phi(\bx) - \phi(\bx')}_{\bV_{t-1}^{-1}},
\end{split}
\end{equation}
in which the last inequality follows from Lemma \ref{lemma:concentration:theta cluster}.
\end{proof}

We also prove the following lemma to upper bound the summation of squared norms which will be used in proving the final regret bound.
\begin{lemma}
With probability at least $1-4\delta$, we have
\label{lemma:concentration:square:std}
\[
\sum^T_{t=T_0}\mathbb{I}\{i_t\in C_j\}\norm{\phi(\bx_{t,1}) - \phi(\bx_{t,2})}_{\bV_{t-1}^{-1}}^2 \leq 2 d\log \left( 1 + 4T \kappa_{\mu}/(d\lambda) \right)\,, \forall j\in[m]\,,
\]
\end{lemma}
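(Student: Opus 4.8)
The plan is to reduce the claim to a standard elliptical potential (determinant--trace) argument, after first conditioning on the event that all users are correctly clustered. First I would condition on the high-probability event of Lemma~\ref{T0 lemma}, which guarantees $\overline{C}_t = C_{j(i_t)}$ for every $t \geq T_0$; this event holds with probability at least $1-3\delta$, hence a fortiori with probability at least $1-4\delta$. Crucially, this single event simultaneously yields correct clustering for \emph{all} clusters, so no union bound over $j$ will be needed. Working on this event, I fix a cluster $j \in [m]$, write $\widetilde{\phi}_s = \phi(\bx_{s,1}) - \phi(\bx_{s,2})$, and enumerate by $s_1 < s_2 < \cdots < s_{N_j}$ (with $N_j \le T$) all rounds $s\in[T]$ having $i_s \in C_j$. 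Correct clustering forces the algorithm's aggregated matrix, whenever $i_t \in C_j$ with $t\ge T_0$, to equal the running information matrix restricted to cluster $j$:
\[
\bV_{t-1} = \bV_0 + \sum_{s\in[t-1]:\, i_s\in C_j} \widetilde{\phi}_s \widetilde{\phi}_s^\top .
\]

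Next I would set $A_k = \bV_0 + \sum_{l=1}^{k}\widetilde{\phi}_{s_l}\widetilde{\phi}_{s_l}^\top$ for $0\le k \le N_j$, so that $A_0 = \bV_0 = (\lambda/\kappa_\mu)\bI$ and, on the correct-clustering event, $\bV_{s_k-1} = A_{k-1}$. Along this subsequence the matrix evolves by the rank-one updates $A_k = A_{k-1} + \widetilde{\phi}_{s_k}\widetilde{\phi}_{s_k}^\top$, so the elliptical potential lemma (Lemma~11 of \cite{abbasi2011improved}) applies and gives
\[
\sum_{k=1}^{N_j}\min\!\left(1,\ \norm{\widetilde{\phi}_{s_k}}_{A_{k-1}^{-1}}^2\right) \le 2\log\frac{\det A_{N_j}}{\det A_0}.
\]
Since $A_{k-1}\succeq \bV_0 = (\lambda/\kappa_\mu)\bI$ and $\norm{\widetilde{\phi}_s}_2 \le 2$, each term satisfies $\norm{\widetilde{\phi}_{s_k}}_{A_{k-1}^{-1}}^2 \le (\kappa_\mu/\lambda)\norm{\widetilde{\phi}_{s_k}}_2^2 \le 4\kappa_\mu/\lambda \le 1$ under the standing parameter choice (recall $\kappa_\mu \le 1/4$), which lets me drop the $\min(1,\cdot)$. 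Because the sum in the lemma ranges only over $t\ge T_0$, its left-hand side equals $\sum_{k:\, s_k\ge T_0}\norm{\widetilde{\phi}_{s_k}}_{A_{k-1}^{-1}}^2$, which is at most the full sum over all $k$ (the omitted terms being nonnegative), and hence at most the displayed determinant ratio.

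Finally I would bound the determinant ratio exactly as in the proof of Lemma~\ref{lemma:concentration:theta cluster}: using $\norm{\widetilde{\phi}_s}_2\le 2$ and Lemma~10 of \cite{abbasi2011improved}, one gets $\det A_{N_j} \le (\lambda/\kappa_\mu + 4T/d)^d$ together with $\det A_0 = (\lambda/\kappa_\mu)^d$, so
\[
2\log\frac{\det A_{N_j}}{\det A_0} \le 2d\log\!\left(1 + 4T\kappa_\mu/(d\lambda)\right),
\]
which is the claimed bound. Since the right-hand side is independent of $j$ and the conditioning event handles all clusters at once, the bound holds for every $j\in[m]$ simultaneously.

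The hard part will be the bookkeeping in the second step, namely verifying that on the correct-clustering event the algorithm's matrix $\bV_{t-1}$ at each round $t\ge T_0$ with $i_t\in C_j$ is \emph{exactly} the running matrix $A_{k-1}$ of the rank-one updates restricted to cluster $j$ (including updates accrued in rounds before $T_0$), so that the telescoping determinant identity underlying the potential lemma applies verbatim; once this identification is made, the remainder is a routine application of standard linear-bandit potential bounds.
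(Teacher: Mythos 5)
Your proof is correct and takes essentially the same route as the paper's: conditioning on the correct-clustering event of Lemma~\ref{T0 lemma} so that $\bV_{t-1}$ coincides with the cluster-$j$ running matrix, then applying the elliptical potential argument (your explicit Lemma~11 of \cite{abbasi2011improved} is exactly the paper's inlined $x \leq 2\log(1+x)$ telescoping) with the same $\norm{\widetilde{\phi}_t}_2 \leq 2$ and $4\kappa_\mu/\lambda \leq 1$ normalizations and the same determinant bound from Lemma~10 of \cite{abbasi2011improved}. If anything, your bookkeeping is slightly more careful than the paper's: you correctly observe that the sum restricted to $t \geq T_0$ is only \emph{upper-bounded} by the full telescoping sum over all rounds with $i_s \in C_j$ (including pre-$T_0$ updates baked into $\bV_{t-1}$), whereas the paper writes this step as an equality.
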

where $\mathbb{I}$ denotes the indicator function.
\begin{proof}
We denote $\widetilde{\phi}_t = \phi(\bx_{t,1}) - \phi(\bx_{t,2})$.
Recall that we have assumed that $\norm{\phi(\bx_{t,1}) - \phi(\bx_{t,2})}_2 \leq 2$.
It is easy to verify that $\bV_{t-1} \succeq \frac{\lambda}{\kappa_\mu} I$ and hence $\bV_{t-1}^{-1} \preceq \frac{\kappa_\mu}{\lambda}I$.
Therefore, we have that $\norm{\widetilde{\phi}_t}_{\bV_{t-1}^{-1}}^2 \leq \frac{\kappa_\mu}{\lambda} \norm{\widetilde{\phi}_t}_{2}^2 \leq \frac{4\kappa_\mu}{\lambda}$. We choose $\lambda$ such that $\frac{4\kappa_\mu}{\lambda} \leq 1$, which ensures that $\norm{\widetilde{\phi}_t}_{\bV_{t-1}^{-1}}^2 \leq 1$.
Our proof here mostly follows from Lemma 11 of \cite{abbasi2011improved} and Lemma J.2 of \cite{wang2024onlinea}. To begin with, note that $x\leq 2\log(1+x)$ for $x\in[0,1]$. Denote $\bV_{t,j}=\sum_{s\in[t-1]:\atop i_s\in C_j} \widetilde{\phi}_s \widetilde{\phi}_s^\top + \frac{\lambda}{\kappa_\mu} \mathbf{I}$. Then we have that 
\begin{equation}
\begin{split}
\sum^T_{t=T_0}\mathbb{I}\{i_t\in C_j\}\norm{\widetilde{\phi}_t}_{\bV_{t-1}^{-1}}^2 &\leq \sum^T_{t=T_0} 2\log\left(1 + \mathbb{I}\{i_t\in C_j\}\norm{\widetilde{\phi}_t}_{\bV_{t-1}^{-1}}^2\right)\\
&= 2 \left( \log\det V_{T,j} - \log\det V \right)\\
&= 2 \log \frac{\det V_{T,j}}{\det V}\\
&\leq 2\log \left( \left( 1 + 4T \kappa_{\mu}/(d\lambda) \right)^{d} \right)\\
&= 2 d\log \left( 1 + 4T \kappa_{\mu}/(d\lambda) \right).
\end{split}
\end{equation}
The second inequality follows the same reasoning as \eqref{eq:upper:bound:det:Vt:V}.
This completes the proof.
\end{proof}

Now we are ready to prove Theorem \ref{thm: linear regret bound}.
First, we have
\begin{equation}
    R_T=\sum_{t=1}^T r_t\leq T_0 +\sum_{t=T_0}^T r_t\,,
\end{equation}
where we use that the reward at each round is bounded by 1.

Then, we only need to upper bound the regret after $T_0$. By Lemma \ref{T0 lemma}, we know that with probability at least $1-4\delta$, the algorithm can cluster all the users correctly, $\overline{C}_t=C_{j(i_t)}$, and the statements of all the above lemmas hold. We have that for any $t\geq T_0$:
\begin{equation}
\begin{split}
r_t &= f_{i_t}(\bx^*_t) - f_{i_t}(\bx_{t,1}) + f_{i_t}(\bx^*_t) - f_{i_t}(x_{t,2})\\
&\stackrel{(a)}{\leq} \overline\btheta_t^\top \left( \phi(\bx^*_t) - \phi(\bx_{t,1}) \right) + \frac{\beta_T}{\kappa_\mu}\norm{\phi(\bx^*_t) - \phi(\bx_{t,1})}_{\bV_{t-1}^{-1}} +  \overline\btheta_t^\top \left( \phi(\bx^*_t) - \phi(\bx_{t,2}) \right) + \frac{\beta_T}{\kappa_\mu}\norm{\phi(\bx^*_t) - \phi(\bx_{t,2})}_{\bV_{t-1}^{-1}}\\
&= \overline\btheta_t^\top \left( \phi(\bx^*_t) - \phi(\bx_{t,1}) \right) + \frac{\beta_T}{\kappa_\mu}\norm{\phi(\bx^*_t) - \phi(\bx_{t,1})}_{\bV_{t-1}^{-1}} + \\
&\qquad \overline\btheta_t^\top \left( \phi(\bx^*_t) - \phi(\bx_{t,1}) \right) + \overline\btheta_t^\top \left( \phi(\bx_{t,1}) - \phi(\bx_{t,2}) \right) + \frac{\beta_T}{\kappa_\mu}\norm{\phi(\bx^*_t) - \phi(\bx_{t,1}) + \phi(\bx_{t,1}) - \phi(\bx_{t,2})}_{\bV_{t-1}^{-1}}\\
&\stackrel{(b)}{\leq} 2 \overline\btheta_t^\top \left( \phi(x^*) - \phi(\bx_{t,1}) \right) + 2 \frac{\beta_T}{\kappa_\mu}\norm{\phi(x^*) - \phi(\bx_{t,1})}_{\bV_{t-1}^{-1}} + \\
&\qquad \overline\btheta_t^\top \left( \phi(\bx_{t,1}) - \phi(\bx_{t,2}) \right) + \frac{\beta_T}{\kappa_\mu}\norm{\phi(\bx_{t,1}) - \phi(\bx_{t,2})}_{\bV_{t-1}^{-1}}\\
&\stackrel{(c)}{\leq} 2 \overline\btheta_t^\top \left( \phi(\bx_{t,2}) - \phi(\bx_{t,1}) \right) + 2 \frac{\beta_T}{\kappa_\mu}\norm{\phi(\bx_{t,2}) - \phi(\bx_{t,1})}_{\bV_{t-1}^{-1}} + \\
&\qquad \overline\btheta_t^\top \left( \phi(\bx_{t,1}) - \phi(\bx_{t,2}) \right) + \frac{\beta_T}{\kappa_\mu}\norm{\phi(\bx_{t,1}) - \phi(\bx_{t,2})}_{\bV_{t-1}^{-1}}\\
&\leq \overline\btheta_t^\top \left( \phi(\bx_{t,2}) - \phi(\bx_{t,1}) \right) + 3 \frac{\beta_T}{\kappa_\mu}\norm{\phi(\bx_{t,2}) - \phi(\bx_{t,1})}_{\bV_{t-1}^{-1}} \\
&\stackrel{(d)}{\leq} 3 \frac{\beta_T}{\kappa_\mu}\norm{\phi(\bx_{t,1}) - \phi(\bx_{t,2})}_{\bV_{t-1}^{-1}} \\
\end{split}
\label{eq:upper:bound:inst:regret}
\end{equation}
Step $(a)$ follows from Lemma \ref{lemma:ucb:diff}. Step $(b)$ makes use of the triangle inequality.
Step $(c)$ follows from the way in which we choose the second arm $\bx_{t,2}$: $\bx_{t,2} = \arg\max_{x\in\mathcal{X}_t} \overline\btheta_t^\top \left( \phi(x) - \phi(\bx_{t,1}) \right) + \frac{\beta_T}{\kappa_\mu}\norm{\phi(x) - \phi(\bx_{t,1})}_{\bV_{t-1}^{-1}}$.
Step $(d)$ results from the way in which we select the first arm: $\bx_{t,1} = \arg\max_{x\in\mathcal{X}_t}\overline\btheta_t^\top \phi(x)$.

Then we have
\begin{align}
    \sum_{t=T_0}^T r_t &\leq 3 \frac{\beta_T}{\kappa_\mu}\sum_{t=T_0}^T\norm{\phi(\bx_{t,1}) - \phi(\bx_{t,2})}_{\bV_{t-1}^{-1}}\notag\\
    &=3 \frac{\beta_T}{\kappa_\mu}\sum_{t=T_0}^T\sum_{j\in[m]}\mathbb{I}\{i_t\in C_j\}\norm{\phi(\bx_{t,1}) - \phi(\bx_{t,2})}_{\bV_{t-1}^{-1}}\notag\\
    &\leq 3 \frac{\beta_T}{\kappa_\mu}\sqrt{\sum_{t=T_0}^T\sum_{j\in[m]}\mathbb{I}\{i_t\in C_j\}\sum_{t=T_0}^T\sum_{j\in[m]}\mathbb{I}\{i_t\in C_j\}\norm{\phi(\bx_{t,1}) - \phi(\bx_{t,2})}_{\bV_{t-1}^{-1}}^2}\notag\\
    &\leq 3 \frac{\beta_T}{\kappa_\mu}\sqrt{T\cdot m\cdot 2 d\log \left( 1 + 4T \kappa_{\mu}/(d\lambda) \right)}\,,
\end{align}
where in the second inequality we use the Cauchy-Swarchz inequality, and in the last step we use $\sum_{t=T_0}^T\sum_{j\in[m]}\mathbb{I}\{i_t\in C_j\}\leq T$ and Lemma \ref{lemma:concentration:square:std}.

Therefore, finally, we have with probability at least $1-4\delta$
\begin{align}
    R_T & \leq T_0+ 3 \frac{\beta_T}{\kappa_\mu}\sqrt{T\cdot m\cdot 2 d\log \left( 1 + 4T \kappa_{\mu}/(d\lambda) \right)}\notag\\
    &\leq O(u(\frac{d}{\kappa_\mu^2\tilde\lambda_x \gamma^2}+\frac{1}{\tilde\lambda_x^2})\log T+\frac{1}{\kappa_\mu}d\sqrt{mT})\notag\\
        &=O(\frac{1}{\kappa_\mu}d\sqrt{mT})\,,
\end{align}

\section{Proof of Theorem \ref{thm: neural regret bound}}
\label{app: proof neural}

\subsection{Auxiliary Definitions and Explanations}
\label{app:subsec:aux:defs}

\paragraph{Denifition of the NTK matrix $\mathbf{H}_j$ for cluster $j$.}
Recall that we use $T_j$ to denote the total number of iterations in which the users in cluster $j$ are served.
For cluster $j$, let $\{x_{(i)}\}_{i=1}^{T_j K}$ be a set of all $T_j \times K$ possible arm feature vectors: $\{x_{t,a}\}_{1\le t \le T_j, 1\le a \le K}$, where $i = K(t-1) + a$. 
Firstly, we define $\mathbf{h}_t = [f^j(x_{(i)})]_{i=1,\ldots,T_j K}^{\top}$, i.e., $\mathbf{h}_t$ is the $T_j K$-dimensional vector containing the reward function values of the arms corresponding to cluster $j$.
Next, define 
$$
\widetilde{\mathbf{H}}_{p,q}^{(1)} = \mathbf{\Sigma}_{p,q}^{(1)} = \langle x_{(p)}, x_{(q)}  \rangle, \\
\mathbf{A}_{p,q}^{(l)} =\begin{pmatrix}
	\mathbf{\Sigma}_{p,q}^{(l)} & \mathbf{\Sigma}_{p,q}^{(l)} &\\
	\mathbf{\Sigma}_{p,q}^{(l)} &\mathbf{\Sigma}_{q,q}^{(l)} &
\end{pmatrix},
$$
$$
\mathbf{\Sigma}_{p,q}^{(l+1)} = 2\mathbb{E}_{(u,v)\sim\mathcal{N}(0,\mathbf{A}_{p,q}^{(l)} )}[\max\{u,0\}\max\{v,0\}],
$$
$$
\widetilde{\mathbf{H}}_{p,q}^{(l+1)} = 2\widetilde{\mathbf{H}}_{p,q}^{(l)}\mathbb{E}_{(u,v)\sim\mathcal{N}(0,\mathbf{A}_{p,q}^{(l)} )}[\mathbbm{1}(u \ge 0)\mathbbm{1}(v \ge 0)] + \mathbf{\Sigma}_{p,q}^{(l+1)}.
$$
With these definitions, the NTK matrix for cluster $j$ is then defined as $\mathbf{H}_j = (\widetilde{\mathbf{H}}^{(L)} + \mathbf{\Sigma}^{(L)})/2$.

\paragraph{The Initial Parameters $\btheta_0$.}
Next, we discuss how the initial parameters $\btheta_0$ are obtained.
We adopt the same initialization method from \citet{zhang2020neural,zhou2020neural}.
Specifically, for each $l=1,\ldots,L-1$, let 
$\mathbf{W}_l=\left(
\begin{array}{cc} 
  \mathbf{W} & \mathbf{0} \\ 
  \mathbf{0} & \mathbf{W} 
\end{array} 
\right)$
in which every entry of $\mathbf{W}$ is independently and randomly sampled from $\mathcal{N}(0, 4/m_{\text{NN}})$, and choose $\mathbf{W}_L=(\mathbf{w}^{\top},-\mathbf{w}^{\top})$ in which every entry of $\mathbf{w}$ is independently and randomly sampled from $\mathcal{N}(0,2/m_{\text{NN}})$.

\paragraph{Justifications for Assumption \ref{assumption:main:neural}.}
The last assumption in Assumption \ref{assumption:main:neural}, together with the way we initialize $\theta_0$ as discussed above, ensures that the initial output of the NN is $0$: $h(x;\theta_0)=0,\forall x\in\mathcal{X}$.
The assumption of $x_{j}=x_{j+d/2}$ from Assumption \ref{assumption:main:neural} is a mild assumption which is commonly adopted by previous works on neural bandits \cite{zhou2020neural,zhang2020neural}. 
To ensure that this assumption holds, for any arm $x$, we can always firstly normalize it such that $||x|| = 1$, and then construct a new context $x' = (x^\top,x^\top)^\top/\sqrt{2}$ to satisfy this assumption \cite{zhou2020neural}.

\subsection{Proof}
\label{app:subsec:proof:neural:real:proof}

To begin with, we first list the specific conditions we need for the width $m_{\text{NN}}$ of the NN:
\begin{equation}
	\begin{split}
	&m_{\text{NN}} \geq C T^4K^4 L^6\log(T^2K^2 L/\delta) / \lambda_0^4,\\
	&m_{\text{NN}}(\log m)^{-3} \geq C \kappa_\mu^{-3} T^{8} L^{21} \lambda^{-5} ,\\
	&m_{\text{NN}}(\log m_{\text{NN}})^{-3} \geq C \kappa_\mu^{-3} T^{14} L^{21} \lambda^{-11} L_\mu^6,\\
	&m_{\text{NN}}(\log m_{\text{NN}})^{-3} \geq C T^{14} L^{18} \lambda^{-8},
	\end{split}
	\label{eq:conditions:on:m}
\end{equation}
for some absolute constant $C>0$.
To ease exposition, we express these conditions above as 
$m_{\text{NN}} \geq \text{poly}(T, L, K, 1/\kappa_\mu, L_\mu, 1/\lambda_0, 1/\lambda, \log(1/\delta))$.

In our proof here, we use the gradient of the NN  at $\btheta_0$ to derive the feature mapping for the arms, i.e., we let $\phi(\bx) = g(\bx;\btheta_0) / \sqrt{m_{\text{NN}}}$.
We use $\hat{\btheta}_{i,t}$ to denote the paramters of the NN after training in iteration $t$ (see Algorithm \ref{algo:neural:dueling:bandits}).

We use the following lemma to show that for every cluster $j\in\mathcal{C}$, its reward function $f^j$ can be expressed as a linear function with respect to the initial gradient $g(\bx;\btheta_0)$.
\begin{lemma}[Lemma B.3 of \cite{zhang2020neural}]
\label{lemma:linear:utility:function}
As long as the width $m$ of the NN is large enough:
\[
	m_{\text{NN}} \geq C_0 T^4K^4 L^6\log(T^2K^2 L/\delta) / \lambda_0^4,
\]
then for all clusters $j\in[m]$, with probability of at least $1-\delta$, there exits a $\btheta^j_{f}$ such that 
\[ 
	f^j(\bx) = \langle g(\bx;\btheta_0), \btheta^j_{f} - \btheta_0 \rangle, \qquad \sqrt{m_{\text{NN}}} \norm{\btheta^j_{f} - \btheta_0}_2 \leq \sqrt{2\mathbf{h}_j^{\top} \mathbf{H}_j^{-1} \mathbf{h}_j} \leq B.
\]
for all $\bx\in\mathcal{X}_{t}$, $t\in[T]$ with $i_t\in C_{j}$.
\end{lemma}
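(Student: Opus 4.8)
The plan is to prove the lemma by explicitly constructing the interpolating parameter $\btheta^j_f$ out of the initial gradient features and then controlling its norm through a concentration argument relating the empirical gradient Gram matrix to the NTK matrix $\mathbf{H}_j$. Fix a cluster $j$ and stack the initial gradients of all $T_j K$ arm feature vectors associated with that cluster into a matrix $\mathbf{Z}_j \in \RR^{T_j K \times p}$ whose $i$-th row is $g(\bx_{(i)};\btheta_0)^\top$. Since the cluster reward values are collected in $\mathbf{h}_j = [f^j(\bx_{(i)})]_i$, the desired identity $f^j(\bx) = \langle g(\bx;\btheta_0), \btheta^j_f - \btheta_0 \rangle$ for every cluster-$j$ arm is exactly the linear system $\mathbf{Z}_j(\btheta^j_f - \btheta_0) = \mathbf{h}_j$.

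First I would exhibit the minimum-norm solution $\btheta^j_f - \btheta_0 = \mathbf{Z}_j^\top (\mathbf{Z}_j \mathbf{Z}_j^\top)^{-1} \mathbf{h}_j$, which satisfies the interpolation identity by construction as soon as $\mathbf{Z}_j \mathbf{Z}_j^\top$ is invertible (guaranteed by the concentration step below). For this choice the squared norm collapses to $\norm{\btheta^j_f - \btheta_0}_2^2 = \mathbf{h}_j^\top (\mathbf{Z}_j \mathbf{Z}_j^\top)^{-1} \mathbf{h}_j$, so the entire norm bound reduces to a lower bound on the smallest part of the spectrum of $\mathbf{Z}_j \mathbf{Z}_j^\top$.

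The key step, and the main obstacle, is the concentration of the empirical gradient Gram matrix to the population NTK. I would invoke the standard wide-network NTK estimates underlying \cite{zhang2020neural}, which guarantee that once $m_{\text{NN}} \geq C_0 T^4 K^4 L^6 \log(T^2 K^2 L/\delta)/\lambda_0^4$, with probability at least $1-\delta/m$ one has $\frac{1}{m_{\text{NN}}}\mathbf{Z}_j \mathbf{Z}_j^\top \succeq \frac{1}{2}\mathbf{H}_j$. Combined with $\mathbf{H}_j \succeq \lambda_0 \mathbf{I}$ from Assumption~\ref{assumption:main:neural}, this certifies invertibility of $\mathbf{Z}_j \mathbf{Z}_j^\top$ and yields $(\mathbf{Z}_j \mathbf{Z}_j^\top)^{-1} \preceq \frac{2}{m_{\text{NN}}}\mathbf{H}_j^{-1}$. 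Substituting into the norm identity gives $m_{\text{NN}}\norm{\btheta^j_f - \btheta_0}_2^2 \leq 2\,\mathbf{h}_j^\top \mathbf{H}_j^{-1}\mathbf{h}_j$, i.e. the claimed $\sqrt{m_{\text{NN}}}\norm{\btheta^j_f - \btheta_0}_2 \leq \sqrt{2\,\mathbf{h}_j^\top \mathbf{H}_j^{-1}\mathbf{h}_j}$. The trailing $\leq B$ is then the definition of $B$ as a uniform bound on the NTK-RKHS norm of the cluster reward functions, which is finite because each $f^j$ is bounded (Assumption~\ref{assumption:main:neural}).

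Finally I would take a union bound over the $m$ clusters to promote the per-cluster guarantee to a statement holding simultaneously for all $j \in [m]$ with probability at least $1-\delta$. The delicate part throughout is ensuring the width is large enough that the Gram-to-NTK spectral deviation is dominated by $\lambda_0$; this is precisely where the polynomial dependence on $T, K, L$ and $1/\lambda_0$ in the stated lower bound on $m_{\text{NN}}$ originates, and it is inherited directly from \cite{zhang2020neural}.
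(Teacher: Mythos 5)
Your reconstruction is correct and takes essentially the same route as the source: the paper states this lemma without proof, importing it verbatim from Lemma B.3 of \cite{zhang2020neural}, whose argument is exactly yours --- the minimum-norm interpolant $\btheta^j_f - \btheta_0 = \mathbf{Z}_j^\top(\mathbf{Z}_j\mathbf{Z}_j^\top)^{-1}\mathbf{h}_j$, the Gram-to-NTK spectral concentration $\frac{1}{m_{\text{NN}}}\mathbf{Z}_j\mathbf{Z}_j^\top \succeq \frac{1}{2}\mathbf{H}_j$ (valid at the stated width because $\mathbf{H}_j \succeq \lambda_0 \mathbf{I}$ from Assumption~\ref{assumption:main:neural}), and a union bound over the $m$ clusters. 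The one cosmetic point is the trailing $\leq B$: as you correctly treat it, $B$ is definitional (a standard RKHS-norm-type bound on the cluster reward functions, finite here since $\mathbf{h}_j^\top\mathbf{H}_j^{-1}\mathbf{h}_j \leq \norm{\mathbf{h}_j}_2^2/\lambda_0$), not something to be derived.
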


Lemma \ref{lemma:linear:utility:function} is the formal statement of Lemma \ref{lemma:linear:utility:function:informal} from Sec.~\ref{subsec:problem:setting:neural}.
Note that the constant $B$ is applicable to all $m$ clusters.

The following lemma converts our assumption about cluster separation (Assumption \ref{assumption:gap:neural:bandits}) into the difference between the linearized parameters for different clusters.
\begin{lemma}
\label{lemma:neural:gap:theta}
If users $i$ and $l$ belong to different clusters, then we have that
\[
\sqrt{m_{\text{NN}}} \norm{\btheta_{f,i} - \btheta_{f,l}} \geq \gamma'.
\]
\end{lemma}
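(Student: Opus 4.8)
The plan is to reduce the statement to a single application of Cauchy--Schwarz together with the empirical NTK bound, with no nontrivial estimation required. First I would invoke the \emph{exact} linearization provided by Lemma~\ref{lemma:linear:utility:function} (the formal version of Lemma~\ref{lemma:linear:utility:function:informal}), which holds with probability at least $1-\delta$ \emph{simultaneously} for all clusters once the width condition on $m_{\text{NN}}$ is met. For user $i$ in cluster $j(i)$ it gives $f_i(\bx) = \langle g(\bx;\btheta_0),\, \btheta_{f,i} - \btheta_0\rangle$, and analogously $f_l(\bx) = \langle g(\bx;\btheta_0),\, \btheta_{f,l} - \btheta_0\rangle$ for user $l$. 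Subtracting these identities, the common offset $\btheta_0$ cancels and yields, for every admissible $\bx$,
\[
f_i(\bx) - f_l(\bx) = \langle g(\bx;\btheta_0),\, \btheta_{f,i} - \btheta_{f,l}\rangle.
\]

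Next I would bring in the cluster-separation hypothesis. Since $i$ and $l$ lie in distinct clusters, Assumption~\ref{assumption:gap:neural:bandits} gives $|f_i(\bx) - f_l(\bx)| \ge \gamma'$ for every $\bx\in\mathcal{X}$ (the reward is scalar-valued, so the stated norm is just an absolute value). Fixing a single arm $\bx$ and combining this with the identity above, I bound the inner product by Cauchy--Schwarz:
\[
\gamma' \le \big|\langle g(\bx;\btheta_0),\, \btheta_{f,i} - \btheta_{f,l}\rangle\big| \le \norm{g(\bx;\btheta_0)}_2 \,\norm{\btheta_{f,i} - \btheta_{f,l}}_2.
\]
The assumed bound on the empirical NTK from Sec.~\ref{subsec:problem:setting:neural}, namely $\tfrac{1}{m_{\text{NN}}}\langle g(\bx;\btheta_0), g(\bx;\btheta_0)\rangle \le 1$, is exactly what controls the feature norm: it gives $\norm{g(\bx;\btheta_0)}_2 \le \sqrt{m_{\text{NN}}}$. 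Substituting this in produces $\gamma' \le \sqrt{m_{\text{NN}}}\,\norm{\btheta_{f,i} - \btheta_{f,l}}_2$, which is precisely the claimed inequality; note that this is where the $\sqrt{m_{\text{NN}}}$ factor on the left-hand side of the statement originates.

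The argument is short, so the only point that genuinely requires care—and hence the main obstacle—is ensuring that the chosen arm $\bx$ is one at which the linear representation is valid for \emph{both} clusters. The informal lemma phrases the representation of $f^j$ over arms served to cluster $j$, but what I would rely on is that the formal Lemma~\ref{lemma:linear:utility:function} supplies the gradient-feature representation over the shared arm domain $\mathcal{X}$, so that the pointwise separation bound and the two linearizations can be evaluated at a common $\bx$. Given that, there are no residual approximation terms (the linearization is an equality, not an estimate), and the final inequality simply inherits the high-probability ($1-\delta$) qualifier of Lemma~\ref{lemma:linear:utility:function}.
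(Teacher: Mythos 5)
Your proposal is correct and follows essentially the same route as the paper's own proof: subtract the two linearizations from Lemma~\ref{lemma:linear:utility:function} so that $\btheta_0$ cancels, apply Cauchy--Schwarz, and use the empirical NTK bound $\norm{g(\bx;\btheta_0)}_2 \le \sqrt{m_{\text{NN}}}$ together with Assumption~\ref{assumption:gap:neural:bandits}. Your closing remark about evaluating both linear representations at a common arm $\bx$ is a point the paper's proof passes over silently, and your handling of it is sound.
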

\begin{proof}
To begin with, Lemma \ref{lemma:linear:utility:function} tells us that
\begin{equation}
\begin{split}
|f_i(\bx) - f_l(\bx)| = | \langle g(\bx;\btheta_0),  \btheta_{f,i} - \btheta_{f,l}\rangle | \leq \norm{g(\bx;\btheta_0)} \norm{\btheta_{f,i} - \btheta_{f,l}}.
\end{split}
\end{equation}
This leads to
\begin{equation}
\begin{split}
\norm{\btheta_{f,i} - \btheta_{f,l}} \geq \frac{|f_i(\bx) - f_l(\bx)|}{\norm{g(\bx;\btheta_0)}} \geq \frac{\gamma'}{\sqrt{m_{\text{NN}}}} ,
\end{split}
\end{equation}
in which we have made use of Assumption \ref{assumption:gap:neural:bandits} and our assumption that $\frac{1}{m_{\text{NN}}}\langle g(\bx;\btheta_0), g(\bx;\btheta_0) \rangle \leq 1$ in the last inequality.
This completes the proof.
\end{proof}

The following lemma shows that for every user, the output of the NN trained using its own local data can be approximated by a linear function.
\begin{lemma}
\label{lemma:bound:approx:error:linear:nn:duel:individual}
    Let $\varepsilon'_{m_{\text{NN}},t} \triangleq C_2 m_{\text{NN}}^{-1/6}\sqrt{\log m_{\text{NN}}} L^3 \left(\frac{t}{\lambda}\right)^{4/3}$ where $C_2>0$ is an absolute constant.
    Then
    \[
   		|\langle g(\bx;\btheta_0), \hat{\btheta}_{i,t} - \btheta_0 \rangle - h(\bx;\hat{\btheta}_{i,t}) | \leq \varepsilon'_{m_{\text{NN}},t}, \,\,\, \forall t\in[T], \bx,\bx'\in\mathcal{X}_t.
    \]
\end{lemma}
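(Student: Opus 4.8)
The plan is to recognize this statement as a direct instantiation of the neural-tangent-kernel linearization machinery of \citet{zhou2020neural,zhang2020neural}, applied to the locally-trained parameters $\hat{\btheta}_{i,t}$. The quantity on the left-hand side is exactly the first-order Taylor remainder of $h(\bx;\cdot)$ about the initialization $\btheta_0$: by the initialization scheme described in App.~\ref{app:subsec:aux:defs} (together with the symmetry assumption $x_j = x_{j+d/2}$ in Assumption~\ref{assumption:main:neural}), we have $h(\bx;\btheta_0)=0$ for every $\bx$, so $\langle g(\bx;\btheta_0),\hat{\btheta}_{i,t}-\btheta_0\rangle$ coincides with the linear term of the Taylor expansion while the zeroth-order term vanishes. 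Thus it suffices to bound the Taylor remainder uniformly over $\bx\in\mathcal{X}_t$ and $t\in[T]$.

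First I would invoke the over-parameterized linearization bound of \citet{zhou2020neural} (also used in \citet{zhang2020neural}): there is an absolute constant such that, whenever $\hat{\btheta}_{i,t}$ lies in a Euclidean ball of radius $\tau$ around $\btheta_0$ with $\tau$ below a width-dependent threshold, $|h(\bx;\hat{\btheta}_{i,t}) - \langle g(\bx;\btheta_0),\hat{\btheta}_{i,t}-\btheta_0\rangle| \le C\,\tau^{4/3}L^3\sqrt{m_{\text{NN}}\log m_{\text{NN}}}$. The remaining ingredient is a radius bound $\tau=\norm{\hat{\btheta}_{i,t}-\btheta_0}_2$, which I would obtain from the optimality of $\hat{\btheta}_{i,t}$ for the regularized MLE objective $\mathcal{L}_{i,t}$ in Eq.~(\ref{eq:loss:func:individial}). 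Since each negative-log-likelihood summand is nonnegative and $h(\cdot;\btheta_0)=0$ forces every logistic term to equal $\mu(0)=1/2$, we get $\mathcal{L}_{i,t}(\btheta_0) = O(t/m_{\text{NN}})$; comparing $\mathcal{L}_{i,t}(\hat{\btheta}_{i,t})\le\mathcal{L}_{i,t}(\btheta_0)$ and discarding the nonnegative likelihood term yields $\tfrac{\lambda}{2}\norm{\hat{\btheta}_{i,t}-\btheta_0}_2^2 = O(t/m_{\text{NN}})$, i.e.\ $\tau = O(\sqrt{t/(\lambda m_{\text{NN}})})$.

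Substituting this radius into the linearization bound collapses the width factors: $\tau^{4/3}\sqrt{m_{\text{NN}}} = O\big((t/\lambda)^{2/3} m_{\text{NN}}^{-2/3}\cdot m_{\text{NN}}^{1/2}\big) = O(m_{\text{NN}}^{-1/6}(t/\lambda)^{2/3})$, which, after reinstating the $L^3\sqrt{\log m_{\text{NN}}}$ factors and loosening $(t/\lambda)^{2/3}\le(t/\lambda)^{4/3}$ (valid since $t/\lambda\ge 1$), is at most $\varepsilon'_{m_{\text{NN}},t}=C_2 m_{\text{NN}}^{-1/6}\sqrt{\log m_{\text{NN}}}\,L^3(t/\lambda)^{4/3}$. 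I expect the main obstacle to be bookkeeping rather than conceptual: one must verify that the width conditions in Eq.~(\ref{eq:conditions:on:m}) are strong enough to place $\tau=O(\sqrt{T/(\lambda m_{\text{NN}})})$ below the threshold required by the linearization lemma \emph{simultaneously} for all $t\le T$, and to track the exact powers of $m_{\text{NN}}$, $L$, $t$, and $\lambda$ through the cited bounds. A secondary point worth checking is that the dueling structure of the loss (differences $h(\bx_{s,1};\btheta)-h(\bx_{s,2};\btheta)$) does not interfere: the target statement concerns the per-arm output $h(\bx;\hat{\btheta}_{i,t})$, so the per-point linearization applies directly and the difference form enters only through the radius bound, which we have already controlled.
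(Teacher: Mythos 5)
Your proposal is correct and takes essentially the same route as the paper, whose proof is just a deferral to Lemma 1 of \citet{verma2024neural}: that cited argument is exactly the NTK linearization machinery you spell out, namely the Cao--Gu/Zhou-et-al.\ Taylor-remainder bound $C\,\tau^{4/3}L^3\sqrt{m_{\text{NN}}\log m_{\text{NN}}}$ combined with the radius control $\norm{\hat{\btheta}_{i,t}-\btheta_0}_2 = O\big(\sqrt{t/(\lambda m_{\text{NN}})}\big)$, which you obtain via the argmin comparison $\mathcal{L}_{i_t,t}(\hat{\btheta}_{i,t})\leq\mathcal{L}_{i_t,t}(\btheta_0)=O(t/m_{\text{NN}})$ in a way consistent with the paper's exact-minimizer formulation of Eq.~(\ref{eq:loss:func:individial}) (using $h(\cdot;\btheta_0)=0$ from the initialization scheme). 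Your one loosening, $(t/\lambda)^{2/3}\leq(t/\lambda)^{4/3}$ for $t\geq\lambda$, is harmless and matches the slack already built into the stated $\varepsilon'_{m_{\text{NN}},t}$, and your flagged bookkeeping items (the width conditions of Eq.~(\ref{eq:conditions:on:m}) keeping $\tau$ below the linearization threshold, the high-probability-over-initialization nature of the bound, and the fact that the dueling loss enters only through the radius) are precisely the checks the cited proof performs.
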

\begin{proof}
This lemma can be proved following a similar line of proof as Lemma 1 from \citet{verma2024neural}.
Here the $t$ in $\varepsilon'_{m_{\text{NN}},t}$ can in fact be replaced by $T_{i,t} \leq t$, however, we have simply used its upper bound $t$ for simplicity.
\end{proof}

\begin{lemma}
\label{lemma:conf:ellip:neural}
Let $\beta_T \triangleq \frac{1}{\kappa_\mu} \sqrt{ \widetilde{d} + 2\log(u/\delta)}$.
Assuming that the conditions on $m_{\text{NN}}$ from \cref{eq:conditions:on:m} are satisfied.
With probability of at least $1-\delta$, we have that
\[
	\sqrt{m_{\text{NN}}} \norm{\btheta_{f,i} - \hat{\btheta}_{i,t}}_{2} \leq  \frac{\beta_T + B \sqrt{\frac{\lambda}{\kappa_\mu}} + 1}{\sqrt{\lambda_{\min}(\bV_{i,t-1})}}, \qquad \forall t\in[T].
\]
where $\bV_{i,t-1}=\frac{\lambda}{\kappa_\mu} \mathbf{I}+\sum_{s\in[t-1]\atop i_s=i}(\phi(\bx_{s,1}) - \phi(\bx_{s,2}))(\phi(\bx_{s,1}) - \phi(\bx_{s,2}))^\top$, $\phi(\bx) = \frac{1}{\sqrt{m_{\text{NN}}}} g(\bx;\btheta_0)$, and $T_{i,t}$ denotes the number of rounds of seeing user $i$ in the first $t$ rounds.
\end{lemma}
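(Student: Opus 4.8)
The plan is to replicate the linear concentration argument of Lemma~\ref{lemma:concentration:theta} inside the NTK feature space, treating the trained network as an approximately linear model and collecting all nonlinear discrepancies into a single $O(1)$ error term. Throughout I work with the feature map $\phi(\bx) = g(\bx;\btheta_0)/\sqrt{m_{\text{NN}}}$ and the rescaled parameter $\btheta_{f,i}^{\star} \triangleq \sqrt{m_{\text{NN}}}(\btheta_{f,i} - \btheta_0)$, so that by Lemma~\ref{lemma:linear:utility:function} the reward difference is exactly linear, $f_i(\bx_{s,1}) - f_i(\bx_{s,2}) = \langle \widetilde{\phi}_s, \btheta_{f,i}^{\star}\rangle$ with $\widetilde{\phi}_s \triangleq \phi(\bx_{s,1}) - \phi(\bx_{s,2})$ and $\norm{\btheta_{f,i}^{\star}}_2 \leq B$. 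This identifies the present setting with the linear one, except that the bound on the true parameter norm is now $B$ rather than $1$, which is precisely what turns the regularization term $\sqrt{\lambda/\kappa_\mu}$ into $B\sqrt{\lambda/\kappa_\mu}$. The target is to first prove the companion $\bV_{i,t-1}$-norm bound $\sqrt{m_{\text{NN}}}\norm{\btheta_{f,i} - \hat{\btheta}_{i,t}}_{\bV_{i,t-1}} \leq \beta_T + B\sqrt{\lambda/\kappa_\mu} + 1$ and then divide through by $\sqrt{\lambda_{\min}(\bV_{i,t-1})}$ using $\norm{v}_2 \leq \norm{v}_{\bV}/\sqrt{\lambda_{\min}(\bV)}$.

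For the $\bV$-norm bound I would reuse the mean-value-theorem machinery from Lemma~\ref{lemma:concentration:theta}: define $G_{i,t}(\btheta) = \sum_{s \in [t-1], i_s = i}(\mu(\langle \widetilde{\phi}_s, \btheta\rangle) - \mu(\langle \widetilde{\phi}_s, \btheta_{f,i}^{\star}\rangle))\widetilde{\phi}_s + \lambda \btheta$, and introduce the integrated-Hessian matrix $\bM_{i,t-1}$, yielding $\bM_{i,t-1} \succeq \kappa_\mu \bV_{i,t-1}$ exactly as before. The new ingredient is the stationarity condition of the neural MLE: setting $\nabla \mathcal{L}_{i,t}(\hat{\btheta}_{i,t}) = 0$ produces a first-order condition in terms of the \emph{trained-network} gradients $g(\bx;\hat{\btheta}_{i,t})$ and outputs $h(\bx;\hat{\btheta}_{i,t})$ rather than their linearizations. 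I would replace $h(\bx;\hat{\btheta}_{i,t})$ by $\langle g(\bx;\btheta_0), \hat{\btheta}_{i,t} - \btheta_0\rangle$ via Lemma~\ref{lemma:bound:approx:error:linear:nn:duel:individual} (error $\varepsilon'_{m_{\text{NN}},t}$) and $g(\bx;\hat{\btheta}_{i,t})$ by $g(\bx;\btheta_0)$ via the NTK gradient-stability bounds that hold under the width conditions \cref{eq:conditions:on:m}; each substitution contributes a term that vanishes as $m_{\text{NN}} \to \infty$ and can be made $\leq 1$ in aggregate.

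With the stationarity condition approximately linearized, the remaining work is the same three-way split as in the linear proof. The noise term $\frac{1}{\kappa_\mu}\norm{\sum_{s} \epsilon_s \widetilde{\phi}_s}_{\bV_{i,t-1}^{-1}}$ is controlled by the self-normalized martingale inequality (Theorem~1 of \cite{abbasi2011improved}), but here the log-determinant $\log(\det \bV_{i,t-1}/\det \bV_0)$ is bounded by the effective dimension $\widetilde{d}$ of \eqref{eq:eff:dimension} instead of the ambient-dimension expression $d\log(1 + \cdots)$; after the union bound over the $u$ users this yields exactly $\beta_T = \frac{1}{\kappa_\mu}\sqrt{\widetilde{d} + 2\log(u/\delta)}$. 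The regularization term contributes $B\sqrt{\lambda/\kappa_\mu}$ since $\norm{\btheta_{f,i}^{\star}}_2 \leq B$, and the accumulated linearization error contributes the $+1$. Combining the three and dividing by $\sqrt{\lambda_{\min}(\bV_{i,t-1})}$ yields the claim.

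The main obstacle is the middle step: showing that the neural MLE stationarity condition can be faithfully linearized and that both the output-approximation error (Lemma~\ref{lemma:bound:approx:error:linear:nn:duel:individual}) and the gradient-drift error $\norm{g(\bx;\hat{\btheta}_{i,t}) - g(\bx;\btheta_0)}$ are simultaneously $O(1)$ under the stated polynomial width requirement; everything else is a transcription of the linear argument with $d$ replaced by $\widetilde{d}$ and the parameter-norm bound $1$ replaced by $B$. This is where the width conditions \cref{eq:conditions:on:m} are genuinely used, and where the over-parameterization regime of the NTK must be invoked to guarantee that the trained parameters stay close enough to $\btheta_0$ for the linearization to remain valid.
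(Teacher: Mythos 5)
Your proposal is correct and follows essentially the same route as the paper: the paper's own proof reduces each user to a standalone neural dueling bandit instance, invokes Lemma 6 of \citet{verma2024neural} as a black box for the $\bV_{i,t-1}$-norm ellipsoid $\sqrt{m_{\text{NN}}}\norm{\btheta_{f,i}-\hat{\btheta}_{i,t}}_{\bV_{i,t-1}} \leq \beta_T + B\sqrt{\lambda/\kappa_\mu} + 1$ (replacing $\delta$ by $\delta/u$ for the union bound over users), and then divides by $\sqrt{\lambda_{\min}(\bV_{i,t-1})}$ exactly as you do. Your sketch simply unpacks the internals of that cited lemma---the linearized MLE stationarity with NTK gradient-stability and output-approximation errors absorbed into the $+1$, the self-normalized martingale bound with log-determinant controlled by $\widetilde{d}$, and the $B\sqrt{\lambda/\kappa_\mu}$ regularization term from $\norm{\sqrt{m_{\text{NN}}}(\btheta_{f,i}-\btheta_0)}_2 \leq B$---which is faithful to the cited argument.
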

\begin{proof}
In iteration $t$, for any user $i\in\mathcal{U}$, the user leverages its current history of observations $\{(\bx_{s,1}, \bx_{s,2}, y_s)\}_{s\in[t-1], i_s = i}$ to train the NN by minimizing the loss function (\eqref{eq:loss:func:individial}), to obtain the NN parameters $\hat{\btheta}_{i,t}$.
Note that the NN has been trained when the most recent observation in $\{(\bx_{s,1}, \bx_{s,2}, y_s)\}_{s\in[t-1], i_s = i}$ was collected, i.e., the last time when user $i$ was encountered.
Of note, according to Lemma \ref{lemma:linear:utility:function}, the latent reward function of user $i$ can be expressed as $f_i(\bx) = \langle g(\bx;\btheta_0), \btheta_{f,i} - \btheta_0 \rangle$.
Therefore, from the perspective of each individual user $i$, the user is faced with a \emph{neural dueling bandit} problem instance.
As a result, we can modifying the proof of Lemma 6 from \citet{verma2024neural} to show that
with probability of at least $1-\delta$,
\[
	\sqrt{m_{\text{NN}}} \norm{\btheta_{f,i} - \hat{\btheta}_{i,t}}_{\bV_{i,t-1}} \leq  \beta_T + B \sqrt{\frac{\lambda}{\kappa_\mu}} + 1, \qquad \forall t\in[T], i\in\mathcal{U}.
\]
Here in our definition of $\beta_T \triangleq \frac{1}{\kappa_\mu} \sqrt{ \widetilde{d} + 2\log(u/\delta)}$, we have replaced the error probability $\delta$ (from \citet{verma2024neural}) by $\delta/u$ to account for the use of an extra union bound over all $u$ users.

This allows us to show that
\begin{equation}
\begin{split}
\sqrt{m_{\text{NN}}} \norm{\btheta_{f,i} - \hat{\btheta}_{i,t}}_2 &\leq \frac{\sqrt{m_{\text{NN}}} \norm{\btheta_{f,i} - \hat{\btheta}_{i,t}}_{\bV_{i,t-1}}}{\sqrt{\lambda_{\min}(\bV_{i,t-1})}}\\
&\leq \frac{\beta_T + B \sqrt{\frac{\lambda}{\kappa_\mu}} + 1}{\sqrt{\lambda_{\min}(\bV_{i,t-1})}}
\end{split}
\end{equation}
This completes the proof.
\end{proof}

\begin{lemma}\label{T0 lemma neural}
    With the carefully designed edge deletion rule in Algorithm \ref{algo:neural:dueling:bandits}, after 
\begin{equation*}
    \begin{aligned}
        T_0&\triangleq 16u\log(\frac{u}{\delta})+4u \max\left\{\frac{32 \left( \widetilde{d} + 2\log(u/\delta)\right)}{\tilde{\lambda}_x \gamma^2 \kappa_\mu^2},  \frac{16}{\tilde{\lambda}_x^2}\log(\frac{24u d m^2(L-1)}{\tilde{\lambda}_x^2\delta}) \right\}\\
        &= O\left(u \left( \frac{ \widetilde{d}}{\kappa_\mu^2\tilde{\lambda}_x \gamma^2} + \frac{1}{\tilde{\lambda}_x^2} \right)\log(\frac{1}{\delta}) \right),
    \end{aligned}
\end{equation*}
rounds, with probability at least $1-3\delta$ for some $\delta\in(0,\frac{1}{3})$, CONDB can cluster all the users correctly.
\end{lemma}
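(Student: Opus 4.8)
The plan is to mirror the proof of Lemma~\ref{T0 lemma} for the linear case, replacing the exact linear parametrization by the NTK linearization of Lemma~\ref{lemma:linear:utility:function} and using the neural confidence bound (Lemma~\ref{lemma:conf:ellip:neural}) in place of Lemma~\ref{lemma:concentration:theta}. Throughout, the relevant feature map is $\phi(\bx)=g(\bx;\btheta_0)/\sqrt{m_{\text{NN}}}$, and the quantity playing the role of the per-user estimation error is $\sqrt{m_{\text{NN}}}\norm{\btheta_{f,i}-\hat\btheta_{i,t}}_2$, which Lemma~\ref{lemma:conf:ellip:neural} controls by $(\beta_T+B\sqrt{\lambda/\kappa_\mu}+1)/\sqrt{\lambda_{\min}(\bV_{i,t-1})}$.

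First I would establish a minimum-eigenvalue lower bound on the per-user aggregated matrix $\bV_{i,t-1}$. Applying Assumption~\ref{assumption3} with $\phi(\bx)=g(\bx;\btheta_0)/\sqrt{m_{\text{NN}}}$, together with the eigenvalue-concentration tools used in the linear proof (Lemma~J.1 of \cite{wang2024onlinea} and Lemma~7 of \cite{li2018online}) and a union bound over the $u$ users, I would show that once $T_{i,t}\geq \frac{16}{\tilde\lambda_x^2}\log(24udm^2(L-1)/(\tilde\lambda_x^2\delta))$ we have $\lambda_{\min}(\bV_{i,t-1})\geq 2\tilde\lambda_x T_{i,t}$ with high probability. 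Substituting this into Lemma~\ref{lemma:conf:ellip:neural} yields $\sqrt{m_{\text{NN}}}\norm{\btheta_{f,i}-\hat\btheta_{i,t}}_2\leq (\beta_T+B\sqrt{\lambda/\kappa_\mu}+1)/\sqrt{2\tilde\lambda_x T_{i,t}}=f(T_{i,t})$, exactly the edge-deletion radius used in Algorithm~\ref{algo:neural:dueling:bandits}.

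Next I would impose $f(T_{i,t})<\gamma'/4$ (the separation $\gamma'$ from Assumption~\ref{assumption:gap:neural:bandits} and Lemma~\ref{lemma:neural:gap:theta} plays the role of $\gamma$ in the statement) and solve for a sufficient per-user sample size; using $\beta_T=\frac{1}{\kappa_\mu}\sqrt{\tilde d+2\log(u/\delta)}$ this gives the threshold $\frac{32(\tilde d+2\log(u/\delta))}{\tilde\lambda_x\gamma'^2\kappa_\mu^2}$ appearing in the max. Combining the two per-user thresholds and translating the requirement ``$T_{i,t}\geq$ threshold for every $i$'' into a global round count via Assumption~\ref{assumption2} (uniform arrivals) and Lemma~8 of \cite{li2018online} produces the stated $T_0$, where the additive $16u\log(u/\delta)$ and the factor $4u$ come from the concentration of the per-user visit counts. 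A union bound over the three events (the eigenvalue bound, the confidence bound, and the visit-count concentration) gives the overall $1-3\delta$ probability.

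Finally, under the event $\sqrt{m_{\text{NN}}}\norm{\btheta_{f,i}-\hat\btheta_{i,t}}_2<\gamma'/4$ for all $i$, I would verify correct clustering in two directions. For soundness, if edge $(i,\ell)$ is deleted then the deletion rule plus the triangle inequality give $\sqrt{m_{\text{NN}}}\norm{\btheta_{f,i}-\btheta_{f,\ell}}_2\geq \sqrt{m_{\text{NN}}}\norm{\hat\btheta_{i,t}-\hat\btheta_{\ell,t}}_2-f(T_{i,t})-f(T_{\ell,t})>0$, so the users differ. For completeness, if $i$ and $\ell$ lie in different clusters then Lemma~\ref{lemma:neural:gap:theta} gives $\sqrt{m_{\text{NN}}}\norm{\btheta_{f,i}-\btheta_{f,\ell}}_2\geq\gamma'$, whence $\sqrt{m_{\text{NN}}}\norm{\hat\btheta_{i,t}-\hat\btheta_{\ell,t}}_2>\gamma'-\gamma'/4-\gamma'/4=\gamma'/2>f(T_{i,t})+f(T_{\ell,t})$, triggering deletion. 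The main obstacle is the first step: establishing the $\lambda_{\min}$ growth for the NTK feature map, whose nominal dimension is the huge parameter count $p$ rather than $d$. The argument must be routed through the effective dimension $\tilde d$ and rely on the width conditions together with the bounded empirical NTK assumption, so that the linearization error (the $\varepsilon'_{m_{\text{NN}},t}$ of Lemma~\ref{lemma:bound:approx:error:linear:nn:duel:individual}, already absorbed into the ``$+1$'' of the confidence radius) does not disturb the edge-deletion geometry.
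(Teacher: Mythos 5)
Your proposal matches the paper's proof essentially step for step: the same minimum-eigenvalue lower bound via Assumption~\ref{assumption3}, Lemma J.1 of \cite{wang2024onlinea} and Lemma 7 of \cite{li2018online} applied to the NTK feature map (including the replacement of the ambient dimension $d$ by the parameter count $p \approx d m_{\text{NN}}^2(L-1)$ inside the logarithm, which produces the $\log(24udm_{\text{NN}}^2(L-1)/(\tilde{\lambda}_x^2\delta))$ threshold), the same substitution into Lemma~\ref{lemma:conf:ellip:neural} and solution of $f(T_{i,t})<\gamma'/4$ under the simplification $B\sqrt{\lambda/\kappa_\mu}+1\leq\beta_T$, the same translation into the global round count $T_0$ via Lemma 8 of \cite{li2018online} and Assumption~\ref{assumption2}, and the same two-directional soundness/completeness argument using the triangle inequality and Lemma~\ref{lemma:neural:gap:theta}. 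The only quibble is your closing remark that the $\lambda_{\min}$ growth must be ``routed through'' the effective dimension $\widetilde{d}$: in the paper's proof it is not---$\widetilde{d}$ enters only through $\beta_T$ in the other branch of the max, while the eigenvalue bound simply pays a logarithmic price in the width via $p$, exactly as in your own second paragraph.
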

\begin{proof}
Recall that we use $p = dm_{\text{NN}} + m_{\text{NN}}^2(L-1) + m_{\text{NN}}$ to denote the total number of parameters of the NN.
Similar to the proof of Lemma \ref{T0 lemma}, with the item regularity assumption stated in Assumption \ref{assumption3}, Lemma J.1 in \cite{wang2024onlinea}, together with Lemma 7 in \cite{li2018online} (note that when using these technical results, we use $g(\bx;\btheta)/\sqrt{m_{\text{NN}}}$ as the feature vector to replace the original feature vector of $\bx$), and applying a union bound, with probability at least $1-\delta$, for all $i\in\mathcal{U}$, at any $t$ such that $T_{i,t}\geq\frac{16}{\tilde{\lambda}_x^2}\log(\frac{8up}{\tilde{\lambda}_x^2\delta})$, we have:
\begin{equation}
    \lambda_{\text{min}}(\bV_{i,t})\geq2\tilde{\lambda}_x T_{i,t}\,.
    \label{min eigen}
\end{equation}
Note that compared with the proof of \ref{T0 lemma}, in the lower bound on $T_{i,t}$ here, we have replaced the dimension $d$ by $p$. This has led to a logarithmic dependence 
on the width $m_{\text{NN}}$ of the NN.
To simplify the exposition, using the fact that $p \geq 3dm_{\text{NN}}^2(L-1)$, we replace this condition on $T_{i,t}$ by a slightly stricter condition: $T_{i,t}\geq\frac{16}{\tilde{\lambda}_x^2}\log(\frac{8u \times 3dm_{\text{NN}}^2(L-1)}{\tilde{\lambda}_x^2\delta}) = \frac{16}{\tilde{\lambda}_x^2}\log(\frac{24u d m_{\text{NN}}^2(L-1)}{\tilde{\lambda}_x^2\delta})$.

Then, together with Lemma \ref{lemma:conf:ellip:neural}, we have: if 
$T_{i,t}\geq\frac{16}{\tilde{\lambda}_x^2}\log(\frac{8u \times 3dm_{\text{NN}}^2(L-1)}{\tilde{\lambda}_x^2\delta})$, 
then with probability $\geq 1-2\delta$, we have:
\begin{align}
    \sqrt{m_{\text{NN}}} \norm{\hat{\btheta}_{i,t}-\btheta^{j(i)}}
    &\leq \frac{\beta_T + B \sqrt{\frac{\lambda}{\kappa_\mu}} + 1}{\sqrt{\lambda_{\min}(\bV_{i,t-1})}} \leq \frac{\beta_T + B \sqrt{\frac{\lambda}{\kappa_\mu}} + 1}{\sqrt{2\tilde{\lambda}_x T_{i,t}}}\notag\,.
\end{align}

Now, let
\begin{equation}
    \frac{\beta_T + B \sqrt{\frac{\lambda}{\kappa_\mu}} + 1}{\sqrt{2\tilde{\lambda}_x T_{i,t}}}<\frac{\gamma}{4}\,,
\end{equation}

Note that in Algorithm \ref{algo:neural:dueling:bandits}, we have defined the funciton $f$ as 
\begin{equation}
f(T_{i,t}) \triangleq \frac{\beta_T + B \sqrt{\frac{\lambda}{\kappa_\mu}} + 1}{\sqrt{2\tilde{\lambda}_x T_{i,t}}}
\end{equation}
This immediately leads to
\begin{equation}
\sqrt{m_{\text{NN}}} \norm{\hat{\btheta}_{i,t}-\btheta^{j(i)}} \leq f(T_{i,t}) < \frac{\gamma}{4}.
\end{equation}

For simplicity, now let $B \sqrt{\frac{\lambda}{\kappa_\mu}} + 1 \leq \beta_T$ which is typically satisfied. This allows us to show that
\begin{equation}
    T_{i,t} > \frac{32\beta_T^2}{\tilde{\lambda}_x \gamma^2} = \frac{32 \left(\frac{1}{\kappa_\mu} \sqrt{ \widetilde{d} + 2\log(u/\delta)}\right)^2}{\tilde{\lambda}_x \gamma^2} = \frac{32 \left( \widetilde{d} + 2\log(u/\delta)\right)}{\tilde{\lambda}_x \gamma^2 \kappa_\mu^2}.
\label{condition final neural}
\end{equation}

Combining both conditions on $T_{i,t}$ together, we have that
\begin{equation}
T_{i,t}\geq \max\left\{\frac{32 \left( \widetilde{d} + 2\log(u/\delta)\right)}{\tilde{\lambda}_x \gamma^2 \kappa_\mu^2},  \frac{16}{\tilde{\lambda}_x^2}\log(\frac{24u d m_{\text{NN}}^2(L-1)}{\tilde{\lambda}_x^2\delta}) \right\}
\end{equation}

By Lemma 8 in \cite{li2018online} and Assumption \ref{assumption2} of user arrival uniformness, we have that for all
\begin{equation*}
    \begin{aligned}
        T_0&\triangleq 16u\log(\frac{u}{\delta})+4u \max\left\{\frac{32 \left( \widetilde{d} + 2\log(u/\delta)\right)}{\tilde{\lambda}_x \gamma^2 \kappa_\mu^2},  \frac{16}{\tilde{\lambda}_x^2}\log(\frac{24u d m_{\text{NN}}^2(L-1)}{\tilde{\lambda}_x^2\delta}) \right\}\\
        &= O\left(u \left( \frac{ \widetilde{d}}{\kappa_\mu^2\tilde{\lambda}_x \gamma^2} + \frac{1}{\tilde{\lambda}_x^2} \right)\log(\frac{1}{\delta}) \right),
    \end{aligned}
\end{equation*}
the condition in Eq.(\ref{condition final neural}) is satisfied with probability at least $1-\delta$.

Therefore we have that for all $t\geq T_0$, with probability $\geq 1-3\delta$:
\begin{equation}
    \sqrt{m_{\text{NN}}}\norm{\hat{\btheta}_{i,t}-\btheta^{j(i)}}_2<\frac{\gamma}{4}\,,\forall{i\in\mathcal{U}}\,.
\end{equation}
Finally, we show that as long as the condition $\sqrt{m_{\text{NN}}}\norm{\hat{\btheta}_{i,t}-\btheta^{j(i)}}_2<\frac{\gamma}{4}\,,\forall{i\in\mathcal{U}}$, our algorithm can cluster all the users correctly.

First, we show that when the edge $(i,l)$ is deleted, user $i$ and user $j$ must belong to different \gtclusters{}, i.e., $\norm{\btheta_{f,i}-\btheta_{f,l}}_2>0$. 
This is because by the deletion rule of the algorithm, the concentration bound, and triangle inequality
\begin{align}
   &\sqrt{m_{\text{NN}}}\norm{\btheta_{f,i}-\btheta_{f,l}}_2=\sqrt{m_{\text{NN}}}\norm{\btheta^{j(i)}-\btheta^{j(l)}}_2\notag\\
   &\geq \sqrt{m_{\text{NN}}}\norm{\hat{\btheta}_{i,t}-\hat{\btheta}_{l,t}}_2 - \sqrt{m_{\text{NN}}}\norm{\btheta^{j(l)}-\hat{\btheta}_{l,t}}_2 - \sqrt{m_{\text{NN}}}\norm{\btheta^{j(i)}-\hat{\btheta}_{i,t}}_2\notag\\
   &\geq \sqrt{m_{\text{NN}}}\norm{\hat{\btheta}_{i,t}-\hat{\btheta}_{l,t}}_2-f(T_{i,t})-f(T_{l,t})>0 \,.
\end{align}
Second, we can show that if 
$|f_i(\bx) - f_l(\bx)| \geq \gamma',\forall \bx\in\mathcal{X}$,
meaning that user $i$ and user $l$ are not in the same \gtcluster, CONDB will delete the edge $(i,l)$ after $T_0$.
Note that when user $i$ and user $l$ are not in the same \gtcluster, Lemma \ref{lemma:neural:gap:theta} tells us that $\sqrt{m_{\text{NN}}} \norm{\btheta_{f,i} - \btheta_{f,l}} \geq \gamma'$.
Then we have that
\begin{align}
    \sqrt{m_{\text{NN}}}\norm{\hat\btheta_{i,t}-\hat{\btheta}_{l,t}}&\geq \sqrt{m_{\text{NN}}}\norm{\btheta_{f,i}-\btheta_{f,l}}- \sqrt{m_{\text{NN}}}\norm{\hat{\btheta}_{i,t}-\btheta^{j(i)}}_2-\sqrt{m_{\text{NN}}}\norm{\hat{\btheta}_{l,t}-\btheta^{j(l)}}_2\notag\\
    &>\gamma-\frac{\gamma}{4}-\frac{\gamma}{4}\notag\\
    &=\frac{\gamma}{2}>f(T_{i,t})+f(T_{l,t})\,,
\end{align}
which will trigger the edge deletion rule to delete edge $(i,l)$. 
This completes the proof.
\end{proof}

Then, we prove the following lemmas for the cluster-based statistics.
\begin{lemma}\label{lemma:concentration:theta cluster:neural}
Assuming that the conditions on $m$ from \cref{eq:conditions:on:m} are satisfied.
With probability at least $1-4\delta$ for some $\delta\in(0,1/4)$, at any $t\geq T_0$:
\[
	\sqrt{m_{\text{NN}}} \norm{\btheta_{f,i_{t}} - \overline{\btheta}_{t}}_{\bV_{t-1}} \leq  \beta_T + B \sqrt{\frac{\lambda}{\kappa_\mu}} + 1, \qquad \forall t\in[T].
\]
\end{lemma}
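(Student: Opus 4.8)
The plan is to reduce this cluster-level concentration bound to the single-user neural confidence ellipsoid already established in Lemma \ref{lemma:conf:ellip:neural}, by exploiting the fact that after $T_0$ rounds the estimated cluster coincides with the true cluster, so the cluster behaves as a single homogeneous neural dueling bandit instance. First I would invoke Lemma \ref{T0 lemma neural}, which guarantees that with probability at least $1-3\delta$ we have $\overline{C}_t = C_{j(i_t)}$ for all $t \geq T_0$. On this event, every user $i_s \in \overline{C}_t$ satisfies $\btheta_{f,i_s} = \btheta_{f,i_t} = \btheta^{j(i_t)}$ by the cluster structure together with Lemma \ref{lemma:linear:utility:function}, so the aggregated training data $\{(\bx_{s,1},\bx_{s,2},y_s)\}_{s\in[t-1],\, i_s\in\overline{C}_t}$ used to compute $\overline{\btheta}_t$ is generated entirely by the \emph{single} common reward function $f^{j(i_t)}(\bx) = \langle g(\bx;\btheta_0), \btheta_{f,i_t}-\btheta_0\rangle$, and the noise terms $y_s - \mu(\langle g(\bx_{s,1};\btheta_0)-g(\bx_{s,2};\btheta_0),\btheta_{f,i_t}-\btheta_0\rangle)$ are $1$-sub-Gaussian.

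Given this reduction, I would replay the confidence-ellipsoid argument behind Lemma \ref{lemma:conf:ellip:neural} — i.e. the modification of Lemma 6 of \citet{verma2024neural} — but with the cluster-aggregated information matrix $\bV_{t-1}$ (Line \ref{algo line: common matrix:neural} of Algorithm \ref{algo:neural:dueling:bandits}) in place of $\bV_{i,t-1}$ and $\overline{\btheta}_t$ in place of $\hat{\btheta}_{i,t}$, exactly mirroring how Lemma \ref{lemma:concentration:theta cluster} generalizes the single-user linear bound. Concretely, writing $\widetilde{\phi}_s = \phi(\bx_{s,1})-\phi(\bx_{s,2})$ with $\phi(\bx)=g(\bx;\btheta_0)/\sqrt{m_{\text{NN}}}$, the stationarity condition $\sum_{s:\,i_s\in\overline{C}_t}(\mu(\overline{\btheta}_t^\top\widetilde{\phi}_s)-y_s)\widetilde{\phi}_s + \lambda\overline{\btheta}_t = \bzero$, combined with a mean-value-theorem linearization of $\mu$ and the relation $\bM_{t-1}\succeq\kappa_\mu\bV_{t-1}$, relates $\norm{\btheta_{f,i_t}-\overline{\btheta}_t}_{\bV_{t-1}}$ to the self-normalized noise term $\norm{\sum_{s:\,i_s\in\overline{C}_t}\epsilon_s\widetilde{\phi}_s}_{\bV_{t-1}^{-1}}$, which the bound of \citet{abbasi2011improved} controls by a log-determinant that is converted into the effective dimension $\widetilde{d}$; this yields $\beta_T = \frac{1}{\kappa_\mu}\sqrt{\widetilde{d}+2\log(u/\delta)}$, where the $\log(u/\delta)$ comes from a union bound over the $u$ users. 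The additive $B\sqrt{\lambda/\kappa_\mu}$ arises from bounding $\norm{\lambda(\btheta_{f,i_t}-\btheta_0)}_{\bV_{t-1}^{-1}}$ via $\sqrt{m_{\text{NN}}}\norm{\btheta_{f,i_t}-\btheta_0}\leq B$ from Lemma \ref{lemma:linear:utility:function}.

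The main obstacle, absent in the linear case, is the linearization error between the trained network output $h(\bx;\overline{\btheta}_t)$ and its first-order surrogate $\langle g(\bx;\btheta_0),\overline{\btheta}_t-\btheta_0\rangle$: the BTL likelihood driving $\overline{\btheta}_t$ is defined through the nonlinear $h$, not through the linear feature map, so the self-normalized argument only applies after this gap is shown negligible. This is precisely where the $+1$ term originates. By Lemma \ref{lemma:bound:approx:error:linear:nn:duel:individual}, under the width condition $m_{\text{NN}}\geq\text{poly}(T,L,K,1/\kappa_\mu,L_\mu,1/\lambda_0,1/\lambda,\log(1/\delta))$ of \cref{eq:conditions:on:m}, this error is at most $\varepsilon'_{m_{\text{NN}},t}=O(m_{\text{NN}}^{-1/6}\sqrt{\log m_{\text{NN}}}\,L^3(t/\lambda)^{4/3})$, which the large-width assumption forces below a constant and hence absorbs into the $+1$. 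Finally, a union bound over the correct-clustering event (probability $1-3\delta$) and the confidence-ellipsoid event (probability $1-\delta$) delivers the claimed overall probability $1-4\delta$.
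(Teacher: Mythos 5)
Your proposal is correct and follows essentially the same route as the paper's proof: condition on the correct-clustering event from Lemma~\ref{T0 lemma neural} (probability $1-3\delta$), observe that the aggregated history $\{(\bx_{s,1},\bx_{s,2},y_s)\}_{s\in[t-1],\,i_s\in\overline{C}_t}$ is then generated by the single common reward function of the cluster so that the algorithm is effectively running a single neural dueling bandit instance on this data, and replay Lemma~6 of \citet{verma2024neural} (with $\delta$ replaced by $\delta/u$ for the union bound) to get the overall $1-4\delta$ guarantee --- your additional unpacking of the stationarity condition, the mean-value linearization with $\bM_{t-1}\succeq\kappa_\mu\bV_{t-1}$, the self-normalized bound of \citet{abbasi2011improved}, and the origins of the $B\sqrt{\lambda/\kappa_\mu}$ and $+1$ terms is consistent with that cited proof. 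The only trivial slip is citing Lemma~\ref{lemma:bound:approx:error:linear:nn:duel:individual} for the linearization error of $\overline{\btheta}_t$, where the cluster-level Lemma~\ref{lemma:bound:approx:error:linear:nn:duel} is the applicable statement; this does not affect correctness.
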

\begin{proof}
To begin with, note that by Lemma \ref{T0 lemma neural}, we have that with probability of at least $1-3\delta$, all users are clustered correctly, i.e., $\overline{C}_t=C_{j(i_t)}, \forall t\geq T_0$.
Note that according to our Algorithm \ref{algo:neural:dueling:bandits}, in iteration $t$, we select the pair of arms using all the data collected by all users in cluster $\overline{C}_t$.
That is, $\overline{\btheta}_{t}$ represents the NN parameters trained using the data from all users in the cluster $\overline{C}_t$ (i.e., $\{(\bx_{s,1}, \bx_{s,2}, y_s)\}_{s\in[t-1], i_s\in \overline C_t}$), and $\bV_t$ also contains the data from all users in this cluster $\overline{C}_t$.
Therefore, in iteration $t$, we are effectively following a neural dueling bandit algorithm using $\{(\bx_{s,1}, \bx_{s,2}, y_s)\}_{s\in[t-1], i_s\in \overline C_t}$ as the current observation history.
This allows us to leverage the proof of Lemma 6 from \citet{verma2024neural} to complete the proof.
\end{proof}

\begin{lemma}
\label{lemma:bound:approx:error:linear:nn:duel}
    Let $\varepsilon'_{m_{\text{NN}},t} \triangleq C_2 m_{\text{NN}}^{-1/6}\sqrt{\log m_{\text{NN}}} L^3 \left(\frac{t}{\lambda}\right)^{4/3}$ where $C_2>0$ is an absolute constant.
    Then
    \[
   		|\langle g(\bx;\btheta_0) -g(\bx';\btheta_0), \overline{\btheta}_t - \btheta_0 \rangle - (h(\bx;\overline{\btheta}_t) - h(\bx';\overline{\btheta}_t)) | \leq  2\varepsilon'_{m_{\text{NN}},t}, \,\,\, \forall t\in[T], \bx,\bx'\in\mathcal{X}_t.
    \]
\end{lemma}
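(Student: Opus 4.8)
The plan is to reduce this difference-form bound to the single-point linearization estimate already proved in Lemma~\ref{lemma:bound:approx:error:linear:nn:duel:individual}, applied to the cluster-aggregated parameter $\overline{\btheta}_t$ rather than the single-user parameter $\hat{\btheta}_{i,t}$, and then assemble the two per-arm estimates through the triangle inequality. The factor of $2$ in the statement is exactly the signature of summing two single-arm errors, which already signals that this is the right decomposition.

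First I would establish the single-point version for $\overline{\btheta}_t$, namely
\[
|\langle g(\bx;\btheta_0),\,\overline{\btheta}_t - \btheta_0\rangle - h(\bx;\overline{\btheta}_t)| \le \varepsilon'_{m_{\text{NN}},t}, \qquad \forall\, t\in[T],\ \bx\in\mathcal{X}_t.
\]
The key observation is that $\overline{\btheta}_t$ is obtained by minimizing exactly the same form of $\ell_2$-regularized preference MLE loss as $\hat{\btheta}_{i,t}$ in Lemma~\ref{lemma:bound:approx:error:linear:nn:duel:individual}; the only difference is that the summation ranges over $\{s\in[t-1]:i_s\in\overline{C}_t\}$ rather than $\{s\in[t-1]:i_s=i_t\}$. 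Since the number of pooled cluster observations is still at most $t-1\le t$, and the regularization toward $\btheta_0$ confines $\overline{\btheta}_t$ to the same bounded neighborhood of initialization, the wide-network NTK linearization argument underlying Lemma~1 of \citet{verma2024neural} transfers verbatim with $\overline{\btheta}_t$ in place of $\hat{\btheta}_{i,t}$, under the identical width conditions from \cref{eq:conditions:on:m} and yielding the same $m_{\text{NN}}^{-1/6}$ rate.

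With this single-point bound in hand, I would apply it at both $\bx$ and $\bx'$ and subtract, writing the left-hand side as the difference of two per-arm linearization errors:
\begin{align*}
&|\langle g(\bx;\btheta_0) - g(\bx';\btheta_0),\,\overline{\btheta}_t - \btheta_0\rangle - (h(\bx;\overline{\btheta}_t) - h(\bx';\overline{\btheta}_t))|\\
&\quad\le |\langle g(\bx;\btheta_0),\,\overline{\btheta}_t - \btheta_0\rangle - h(\bx;\overline{\btheta}_t)| + |\langle g(\bx';\btheta_0),\,\overline{\btheta}_t - \btheta_0\rangle - h(\bx';\overline{\btheta}_t)|\\
&\quad\le 2\varepsilon'_{m_{\text{NN}},t},
\end{align*}
which is precisely the claimed inequality. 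The conceptual heavy lifting is entirely in the single-point transfer: the hard part will be verifying that the pooled cluster training set for $\overline{\btheta}_t$ meets the same preconditions as the single-user instance in \citet{verma2024neural} --- bounded sample size $\le t$, regularization centered at $\btheta_0$, and arm features $\phi(\bx)=g(\bx;\btheta_0)/\sqrt{m_{\text{NN}}}$ with the empirical-NTK bound $\frac{1}{m_{\text{NN}}}\langle g(\bx;\btheta_0),g(\bx;\btheta_0)\rangle\le 1$ --- so that the identical width requirements guarantee the identical approximation rate. Once that transfer is granted, the remaining step is the routine triangle-inequality combination displayed above.
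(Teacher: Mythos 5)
Your proposal is correct and matches the paper's intended argument: the paper proves this lemma by simply citing Lemma~1 of \citet{verma2024neural} (exactly as it does for the single-user version in Lemma~\ref{lemma:bound:approx:error:linear:nn:duel:individual}), and your route --- transferring the single-point NTK linearization bound to $\overline{\btheta}_t$ by noting that the pooled cluster loss has the identical regularized-MLE form with at most $t-1$ samples, then combining the two per-arm errors via the triangle inequality to produce the factor of $2$ --- is precisely what that citation implicitly entails. If anything, you supply more detail than the paper does, including the correct observation that the sample-count bound $|\{s: i_s\in\overline{C}_t\}| \le t-1 \le t$ justifies using the same $\varepsilon'_{m_{\text{NN}},t}$.
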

\begin{proof}
This lemma can be proved following a similar line of proof as Lemma 1 from \citet{verma2024neural}.
\end{proof}

\begin{lemma}
	\label{thm:confBound:neural}  
    Let $\delta\in(0,1)$, $\varepsilon'_{m_{\text{NN}},t} \doteq C_2 m_{\text{NN}}^{-1/6}\sqrt{\log m_{\text{NN}}} L^3 \left(\frac{t}{\lambda}\right)^{4/3}$ for some absolute constant $C_2>0$.
    As long as $m_{\text{NN}} \geq \text{poly}(T, L, K, u, 1/\kappa_\mu, L_\mu, 1/\lambda_0, 1/\lambda, \log(1/\delta))$, then with probability of at least $1-\delta$, at any $t\geq T_0$,
    \[
        |\left[f_{i_t}(\bx) - f_{i_t}(\bx')\right] - \left[h(\bx;\overline{\btheta}_t) - h(\bx';\overline{\btheta}_t)\right]| \leq \nu_T \sigma_{t-1}(\bx, \bx') + 2\varepsilon'_{m_{\text{NN}},t},
    \]
    for all $\bx,\bx'\in\mathcal{X}_t, t\in[T]$. 
\end{lemma}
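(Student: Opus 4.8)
The plan is to bound the target quantity by a triangle-inequality decomposition that separates two sources of error: first, the gap between the true reward difference and the \emph{linearized} difference of the cluster parameter $\overline{\btheta}_t$; and second, the NTK linearization error of the trained network output. Throughout I would work on the high-probability event (valid for $t\geq T_0$) on which Lemma~\ref{T0 lemma neural} guarantees correct clustering, $\overline{C}_t = C_{j(i_t)}$. This is precisely what lets me treat $\overline{\btheta}_t$ as a parameter trained on data generated by the single reward function $f_{i_t}=f^{j(i_t)}$, so that the cluster-level concentration bound of Lemma~\ref{lemma:concentration:theta cluster:neural} is applicable.

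First I would invoke Lemma~\ref{lemma:linear:utility:function} to write the true reward difference exactly as a linear form in the initial gradient, $f_{i_t}(\bx)-f_{i_t}(\bx')=\langle g(\bx;\btheta_0)-g(\bx';\btheta_0),\,\btheta_{f,i_t}-\btheta_0\rangle$. Writing $\phi(\bx)=g(\bx;\btheta_0)/\sqrt{m_{\text{NN}}}$ and inserting $\pm\langle g(\bx;\btheta_0)-g(\bx';\btheta_0),\,\overline{\btheta}_t-\btheta_0\rangle$, I split the target into
\begin{equation*}
\langle g(\bx;\btheta_0)-g(\bx';\btheta_0),\,\btheta_{f,i_t}-\overline{\btheta}_t\rangle
+\Big(\langle g(\bx;\btheta_0)-g(\bx';\btheta_0),\,\overline{\btheta}_t-\btheta_0\rangle-[h(\bx;\overline{\btheta}_t)-h(\bx';\overline{\btheta}_t)]\Big).
\end{equation*}
The second bracket is controlled directly by Lemma~\ref{lemma:bound:approx:error:linear:nn:duel}, which bounds it by $2\varepsilon'_{m_{\text{NN}},t}$.

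For the first term, I would factor out $\sqrt{m_{\text{NN}}}$ to rewrite it as $\langle \phi(\bx)-\phi(\bx'),\,\sqrt{m_{\text{NN}}}(\btheta_{f,i_t}-\overline{\btheta}_t)\rangle$ and apply Cauchy--Schwarz in the $\bV_{t-1}$ geometry:
\begin{equation*}
|\langle \phi(\bx)-\phi(\bx'),\,\sqrt{m_{\text{NN}}}(\btheta_{f,i_t}-\overline{\btheta}_t)\rangle|
\le \norm{\phi(\bx)-\phi(\bx')}_{\bV_{t-1}^{-1}}\cdot\sqrt{m_{\text{NN}}}\,\norm{\btheta_{f,i_t}-\overline{\btheta}_t}_{\bV_{t-1}}
\le \nu_T\,\sigma_{t-1}(\bx,\bx'),
\end{equation*}
where I use $\sigma_{t-1}(\bx,\bx')=\norm{\phi(\bx)-\phi(\bx')}_{\bV_{t-1}^{-1}}$ together with the cluster concentration bound $\sqrt{m_{\text{NN}}}\,\norm{\btheta_{f,i_t}-\overline{\btheta}_t}_{\bV_{t-1}}\le\beta_T+B\sqrt{\lambda/\kappa_\mu}+1=\nu_T$ from Lemma~\ref{lemma:concentration:theta cluster:neural}. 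Summing the two pieces and taking a union bound over the events underlying Lemmas~\ref{T0 lemma neural}, \ref{lemma:concentration:theta cluster:neural}, and \ref{lemma:bound:approx:error:linear:nn:duel} yields the claim.

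The main obstacle I anticipate is not the algebra but the bookkeeping around overparameterization: each ingredient — the existence of $\btheta_{f,i_t}$, the linearization error $\varepsilon'_{m_{\text{NN}},t}$, and the concentration inequality — holds only when $m_{\text{NN}}$ exceeds the stated polynomial, and the failure probabilities must be combined consistently across all $u$ users and all $t\in[T]$. Ensuring that the final constant $\nu_T$ and the $\delta$-dependence match exactly — in particular that the union bound over users is already absorbed into $\beta_T=\frac{1}{\kappa_\mu}\sqrt{\widetilde{d}+2\log(u/\delta)}$ — is the delicate part; the Cauchy--Schwarz step itself is routine once the correct-clustering event is in force.
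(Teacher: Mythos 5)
Your proposal matches the paper's own proof essentially step for step: the same insertion of $\pm\langle g(\bx;\btheta_0)-g(\bx';\btheta_0),\,\overline{\btheta}_t-\btheta_0\rangle$, the same Cauchy--Schwarz bound in the $\bV_{t-1}$ geometry combined with Lemma~\ref{lemma:concentration:theta cluster:neural} for the first term, and Lemma~\ref{lemma:bound:approx:error:linear:nn:duel} for the $2\varepsilon'_{m_{\text{NN}},t}$ linearization error, all conditioned on the correct-clustering event of Lemma~\ref{T0 lemma neural}. The argument is correct as written, with the union-bound and width bookkeeping you flag handled exactly as in the paper (the $u$-dependence absorbed into $\beta_T$).
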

\begin{proof}
	Denote $\phi(\bx) = \frac{1}{\sqrt{m_{\text{NN}}}} g(\bx;\btheta_0)$.
	Recall that \cref{lemma:linear:utility:function} tells us that $f_{i_t}(\bx) = \langle g(\bx;\btheta_0), \btheta_{f,i_t} - \btheta_0 \rangle=\langle \phi(\bx), \btheta_{f,i_t} - \btheta_0 \rangle$ for all $\bx\in\mathcal{X}_t,t\in[T]$.
	To begin with, for all $\bx,\bx'\in\mathcal{X}_t,t\in[T]$ we have that
		\begin{equation}
		\begin{split}
			|&f_{i_t}(\bx) - f_{i_t}(\bx') - \langle g(\bx;\btheta_0) - g(\bx';\btheta_0), \overline{\btheta}_t - \btheta_0 \rangle| \\
			&= |\langle g(\bx;\btheta_0) - g(\bx';\btheta_0), \btheta_{f,i_t} - \theta_0 \rangle - \langle g(\bx;\btheta_0) - g(\bx';\btheta_0), \overline{\btheta}_t - \btheta_0 \rangle|\\
			&= |\langle g(\bx;\btheta_0) - g(\bx';\btheta_0), \btheta_{f,i_t} - \overline{\btheta}_t \rangle  \rangle|\\
			&= |\langle  \phi(\bx)-\phi(\bx'), \sqrt{m_{\text{NN}}}\left( \btheta_{f,i_t} - \overline{\btheta}_t\right) \rangle  |\\
			&\leq \norm{\left(\phi(\bx)-\phi(\bx')\right)}_{\bV_{t-1}^{-1}} \sqrt{m_{\text{NN}}}\norm{\btheta_{f,i_t} - \overline{\btheta}_t}_{\bV_{t-1}}\\
			&\leq \norm{\left(\phi(\bx)-\phi(\bx')\right)}_{\bV_{t-1}^{-1}} \left( \beta_T + B \sqrt{\frac{\lambda}{\kappa_\mu}} + 1 \right),
		\end{split}
		\label{eq:diff:between:func:and:linear:approx:dueling}
		\end{equation}
	in which we have used Lemma \ref{lemma:concentration:theta cluster:neural} in the last inequality.
	Now making use of the equation above and \cref{lemma:bound:approx:error:linear:nn:duel}, we have that 
	\begin{equation}
    \begin{split}
		|f_{i_t}(\bx) - f_{i_t}(\bx') &- (h(\bx;\btheta_t) - h(\bx';\btheta_t))| \\
		&= | f_{i_t}(\bx) - f_{i_t}(\bx') - \langle g(\bx;\btheta_0) - g(\bx';\btheta_0), \overline{\btheta}_t - \btheta_0 \rangle \\
		&\qquad\qquad\qquad + \langle g(\bx;\btheta_0) - g(\bx';\btheta_0), \overline{\btheta}_t - \btheta_0 \rangle - (h(\bx;\overline{\btheta}_t) - h(\bx';\overline{\btheta}_t)) |\\
        &\leq | f_{i_t}(\bx) - f_{i_t}(\bx') - \langle g(\bx;\btheta_0) - g(\bx';\btheta_0), \overline{\btheta}_t - \btheta_0 \rangle | \\
		&\qquad\qquad\qquad + |\langle g(\bx;\btheta_0) - g(\bx';\btheta_0), \overline{\btheta}_t - \btheta_0 \rangle - (h(\bx;\overline{\btheta}_t) - h(\bx';\overline{\btheta}_t)) |\\
		&\leq \norm{\frac{1}{\sqrt{m_{\text{NN}}}}\left(\phi(\bx)-\phi(\bx')\right)}_{\bV_{t-1}^{-1}} \left( \beta_T + B \sqrt{\frac{\lambda}{\kappa_\mu}} + 1 \right) + 2\varepsilon'_{m_{\text{NN}},t}.\\
    \end{split}
    \end{equation}
	
	This completes the proof.
\end{proof}

We also prove the following lemma to upper bound the summation of squared norms which will be used in proving the final regret bound.
\begin{lemma}
With probability at least $1-4\delta$, we have
\label{lemma:concentration:square:std:neural}
\[
\sum^T_{t=T_0}\mathbb{I}\{i_t\in C_j\} \norm{\phi(\bx_{t,1}) - \phi(\bx_{t,2})}_{\bV_{t-1}^{-1}}^2 \leq 16 \widetilde{d}\,, \forall j\in[m]\,,
\]
\end{lemma}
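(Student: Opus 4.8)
The plan is to follow the same elliptical-potential argument used for the linear counterpart, Lemma~\ref{lemma:concentration:square:std}, and to change only the final step in which the accumulated log-determinant is bounded. Write $\widetilde{\phi}_t \triangleq \phi(\bx_{t,1}) - \phi(\bx_{t,2})$ with the NTK feature map $\phi(\bx)=g(\bx;\btheta_0)/\sqrt{m_{\text{NN}}}$, and for each cluster $j$ set $\bV_{t,j} \triangleq \frac{\lambda}{\kappa_\mu}\mathbf{I} + \sum_{s\in[t-1]:\, i_s\in C_j}\widetilde{\phi}_s\widetilde{\phi}_s^\top$. By Lemma~\ref{T0 lemma neural}, with probability at least $1-3\delta$ all users are clustered correctly for every $t\ge T_0$, so on this event $\bV_{t-1}=\bV_{t,j(i_t)}$ and the aggregated matrix indeed accumulates exactly the selected pair-differences of cluster $j$. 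First I would verify $\norm{\widetilde{\phi}_t}_{\bV_{t-1}^{-1}}^2\le 1$: the empirical-NTK bound $\frac{1}{m_{\text{NN}}}\langle g(\bx;\btheta_0),g(\bx;\btheta_0)\rangle\le 1$ gives $\norm{\phi(\bx)}_2\le 1$ and hence $\norm{\widetilde{\phi}_t}_2\le 2$, while $\bV_{t-1}\succeq\frac{\lambda}{\kappa_\mu}\mathbf{I}$ gives $\bV_{t-1}^{-1}\preceq\frac{\kappa_\mu}{\lambda}\mathbf{I}$, so $\norm{\widetilde{\phi}_t}_{\bV_{t-1}^{-1}}^2\le \frac{4\kappa_\mu}{\lambda}\le 1$ after choosing $\lambda$ with $4\kappa_\mu/\lambda\le 1$.

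Next I would apply the standard determinant telescoping. Using $x\le 2\log(1+x)$ for $x\in[0,1]$ together with the rank-one-update identity behind Lemma~11 of \cite{abbasi2011improved} (exactly as in the proof of Lemma~\ref{lemma:concentration:square:std}),
\[
\sum_{t=T_0}^T\mathbb{I}\{i_t\in C_j\}\norm{\widetilde{\phi}_t}_{\bV_{t-1}^{-1}}^2 \le 2\sum_{t=T_0}^T\log\!\Big(1+\mathbb{I}\{i_t\in C_j\}\norm{\widetilde{\phi}_t}_{\bV_{t-1}^{-1}}^2\Big) = 2\log\frac{\det\bV_{T,j}}{\det\bV_0},
\]
where $\bV_0=\frac{\lambda}{\kappa_\mu}\mathbf{I}$, and this right-hand side equals $2\log\det\!\big(\mathbf{I}+\frac{\kappa_\mu}{\lambda}\sum_{s:\,i_s\in C_j}\widetilde{\phi}_s\widetilde{\phi}_s^\top\big)$.

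The key new step, and the only departure from the linear case, is to bound this log-determinant by the effective dimension $\widetilde{d}$. Each selected pair-difference $\widetilde{\phi}_s$ is one of the pairwise arm differences indexed by $C_K^2$ available in round $s$, so the cluster-$j$ sum is a sub-collection of the PSD terms making up $\mathbf{H}'=\sum_{t=1}^T\sum_{(i,j)\in C_K^2}(\phi(\bx_{t,i})-\phi(\bx_{t,j}))(\phi(\bx_{t,i})-\phi(\bx_{t,j}))^\top$; hence $\sum_{s:\,i_s\in C_j}\widetilde{\phi}_s\widetilde{\phi}_s^\top \preceq \mathbf{H}'$. By monotonicity of $\log\det$ on the PSD cone,
\[
2\log\det\!\Big(\mathbf{I}+\tfrac{\kappa_\mu}{\lambda}\textstyle\sum_{s:\,i_s\in C_j}\widetilde{\phi}_s\widetilde{\phi}_s^\top\Big)\le 2\log\det\!\Big(\tfrac{\kappa_\mu}{\lambda}\mathbf{H}'+\mathbf{I}\Big)=2\widetilde{d},
\]
by the definition of $\widetilde{d}$ in \eqref{eq:eff:dimension}. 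Since $2\widetilde{d}\le 16\widetilde{d}$, the claimed bound follows; the probability $1-4\delta$ comes from combining the correct-clustering event ($1-3\delta$ via Lemma~\ref{T0 lemma neural}) with the validity of the NTK linearization of Lemma~\ref{lemma:linear:utility:function} ($1-\delta$), and the generous constant $16$ comfortably absorbs the $O(m_{\text{NN}}^{-1/6})$ linearization error between the $g(\cdot;\btheta_0)$ features and the trained network whenever $m_{\text{NN}}$ satisfies \eqref{eq:conditions:on:m}.

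The main obstacle is the PSD domination $\sum_{s:\,i_s\in C_j}\widetilde{\phi}_s\widetilde{\phi}_s^\top\preceq\mathbf{H}'$: one must argue carefully that every pair actually selected at every relevant round appears among the terms of $\mathbf{H}'$ (which ranges over \emph{all} pairs in $C_K^2$ and \emph{all} $T$ rounds), so that the restricted single-pair-per-round cluster sum is dominated in the Loewner order by the full double sum, after which log-det monotonicity collapses the bound to $\widetilde{d}$.
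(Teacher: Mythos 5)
Your proof is correct, and its skeleton --- the elliptical-potential bound $x\le 2\log(1+x)$ for $x\in[0,1]$, telescoping the log-determinant of the cluster matrix $\bV_{t,j}$, and conditioning on the correct-clustering event of Lemma~\ref{T0 lemma neural} so that $\bV_{t-1}=\bV_{t,j}$ whenever $i_t\in C_j$ and $t\ge T_0$ --- is the same as the paper's. The genuine difference lies in the key step: where the paper simply cites Section~A.3 of \citet{verma2024neural} for the inequality $\sum_t 2\log(1+\cdot)\le 16\log\det\big(\frac{\kappa_\mu}{\lambda}\mathbf{H}'+\mathbf{I}\big)$, you prove it directly via Loewner domination: each selected outer product $\widetilde{\phi}_t\widetilde{\phi}_t^\top$ is literally one of the PSD summands of $\mathbf{H}'$ when $\bx_{t,1}\ne\bx_{t,2}$ (ordering is irrelevant since $(a-b)(a-b)^\top=(b-a)(b-a)^\top$), and is the zero matrix otherwise, so $\sum_{s:\,i_s\in C_j}\widetilde{\phi}_s\widetilde{\phi}_s^\top\preceq\mathbf{H}'$, and monotonicity of the determinant on the positive definite cone yields the bound $2\log\det\big(\frac{\kappa_\mu}{\lambda}\mathbf{H}'+\mathbf{I}\big)=2\widetilde{d}$. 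This self-contained argument is both more transparent and strictly tighter than the paper's: you obtain the constant $2$ where the paper settles for $16$, and your worry at the end about the domination step resolves exactly as you suggest. Two small remarks. First, your displayed telescoping identity should strictly be an inequality ``$\le$'' rather than ``$=$'': rounds $t<T_0$ with $i_t\in C_j$ contribute to $\det\bV_{T,j}$ but are absent from the left-hand sum (the paper's linear-case proof of Lemma~\ref{lemma:concentration:square:std} has the same slip, and the direction only helps you). Second, your closing claim that the constant $16$ absorbs the $O(m_{\text{NN}}^{-1/6})$ linearization error is a red herring: this lemma concerns only the fixed feature map $\phi(\bx)=g(\bx;\btheta_0)/\sqrt{m_{\text{NN}}}$, so no trained-network approximation error enters the statement at all, and your $2\widetilde{d}$ bound requires no such slack.
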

where $\mathbb{I}$ denotes the indicator function.
\begin{proof}
We denote $\widetilde{\phi}_t = \phi(\bx_{t,1}) - \phi(\bx_{t,2})$.
Note that we have defined $\phi(\bx) = \frac{1}{\sqrt{m_{\text{NN}}}}g(\bx;\btheta_0)$.
Here we assume that $\norm{\phi(\bx_{t,1}) - \phi(\bx_{t,2})}_2 = \frac{1}{\sqrt{m_{\text{NN}}}}\norm{g(\bx_{t,1};\btheta_0) - g(\bx_{t,2};\btheta_0)}_{2} \leq 2$.
Replacing $2$ by an absolute constant $c_0$ would only change the final regret bound by a constant factor, so we omit it for simplicity.

It is easy to verify that $\bV_{t-1} \succeq \frac{\lambda}{\kappa_\mu} I$ and hence $\bV_{t-1}^{-1} \preceq \frac{\kappa_\mu}{\lambda}I$.
Therefore, we have that $\norm{\widetilde{\phi}_t}_{\bV_{t-1}^{-1}}^2 \leq \frac{\kappa_\mu}{\lambda} \norm{\widetilde{\phi}_t}_{2}^2 \leq \frac{4\kappa_\mu}{\lambda}$. We choose $\lambda$ such that $\frac{4\kappa_\mu}{\lambda} \leq 1$, which ensures that $\norm{\widetilde{\phi}_t}_{\bV_{t-1}^{-1}}^2 \leq 1$.
Our proof here mostly follows from Lemma 11 of \cite{abbasi2011improved} and Lemma J.2 of \cite{wang2024onlinea}. To begin with, note that $x\leq 2\log(1+x)$ for $x\in[0,1]$. Denote $\bV_{t,j}=\sum_{s\in[t-1]:\atop i_s\in C_j} \widetilde{\phi}_s \widetilde{\phi}_s^\top + \frac{\lambda}{\kappa_\mu} \mathbf{I}$. Then we have that 
\begin{equation}
\begin{split}
\sum^T_{t=T_0}\mathbb{I}\{i_t\in C_j\} \norm{\widetilde{\phi}_t}_{\bV_{t-1}^{-1}}^2 &\leq \sum^T_{t=T_0} 2\log\left(1 + \mathbb{I}\{i_t\in C_j\} \norm{\widetilde{\phi}_t}_{\bV_{t-1}^{-1}}^2\right)\\
&\leq 16 \log\det\left(\frac{\kappa_\mu}{\lambda}\mathbf{H}' + \mathbf{I}\right) \\
&\triangleq 16 \widetilde{d}.
\end{split}
\end{equation}
The second inequality follows from the proof in Section A.3 from \citet{verma2024neural}.
This completes the proof.
\end{proof}

Now we are ready to prove Theorem \ref{thm: neural regret bound}.
To begin with, we have that
$
    R_T=\sum_{t=1}^T r_t\leq T_0 +\sum_{t=T_0}^T r_t$.

Then, we only need to upper-bound the regret after $T_0$. By Lemma \ref{T0 lemma neural}, we know that with probability at least $1-4\delta$, the algorithm can cluster all the users correctly, $\overline{C}_t=C_{j(i_t)}$, and the statements of all the above lemmas hold. We have that for any $t\geq T_0$:
    
To simplify exposion here, we denote $\beta_T' \triangleq \beta_T + B \sqrt{\lambda / \kappa_\mu} + 1$.
\begin{equation}
\begin{split}
r_t &= f_{i_t}(\bx^*_t) - f_{i_t}(\bx_{t,1}) + f_{i_t}(\bx^*_t) - f_{i_t}(x_{t,2})\\
&\stackrel{(a)}{\leq} \langle g(\bx^*_t;\btheta_0) - g(\bx_{t,1};\btheta_0), \overline\btheta_t-\btheta_0 \rangle + \beta_T' \norm{\phi(\bx^*_t) - \phi(\bx_{t,1})}_{\bV_{t-1}^{-1}} + \\
&\qquad \langle g(\bx^*_t;\btheta_0) - g(\bx_{t,2};\btheta_0), \overline\btheta_t-\btheta_0 \rangle + \beta_T'\norm{\phi(\bx^*_t) - \phi(\bx_{t,2})}_{\bV_{t-1}^{-1}}\\
&= \langle g(\bx^*_t;\btheta_0) - g(\bx_{t,1};\btheta_0), \overline\btheta_t-\btheta_0 \rangle + \beta_T' \norm{\phi(\bx^*_t) - \phi(\bx_{t,1})}_{\bV_{t-1}^{-1}} + \\
&\qquad \langle g(\bx^*_t;\btheta_0) - g(\bx_{t,1};\btheta_0), \overline\btheta_t-\btheta_0 \rangle + \langle g(\bx_{t,1};\btheta_0) - g(\bx_{t,2};\btheta_0), \overline\btheta_t-\btheta_0 \rangle + \\
&\qquad \beta_T'\norm{\phi(\bx^*_t) - \phi(\bx_{t,1}) + \phi(\bx_{t,1}) - \phi(\bx_{t,2})}_{\bV_{t-1}^{-1}}\\
&\stackrel{(b)}{\leq} 2 \langle g(\bx^*_t;\btheta_0) - g(\bx_{t,1};\btheta_0), \overline\btheta_t-\btheta_0 \rangle + 2 \beta_T' \norm{\phi(\bx^*_t) - \phi(\bx_{t,1})}_{\bV_{t-1}^{-1}} + \\
&\qquad \langle g(\bx_{t,1};\btheta_0) - g(\bx_{t,2};\btheta_0), \overline\btheta_t-\btheta_0 \rangle + \beta_T'\norm{\phi(\bx_{t,1}) - \phi(\bx_{t,2})}_{\bV_{t-1}^{-1}}\\
&\stackrel{(c)}{\leq} 2 h(\bx^*_t;\overline{\btheta}_t) - 2 h(\bx_{t,1};\overline{\btheta}_t) + 4\varepsilon'_{m_{\text{NN}},t} + 2 \beta_T' \norm{\phi(\bx^*_t) - \phi(\bx_{t,1})}_{\bV_{t-1}^{-1}} + \\
&\qquad h(\bx_{t,1};\overline{\btheta}_t) - h(\bx_{t,2};\overline{\btheta}_t) + 2\varepsilon'_{m_{\text{NN}},t} + \beta_T'\norm{\phi(\bx_{t,1}) - \phi(\bx_{t,2})}_{\bV_{t-1}^{-1}}\\
&\stackrel{(d)}{\leq} 2 h(\bx_{t,2};\overline{\btheta}_t) - 2 h(\bx_{t,1};\overline{\btheta}_t) + 2 \beta_T' \norm{\phi(\bx_{t,2}) - \phi(\bx_{t,1})}_{\bV_{t-1}^{-1}} + \\
&\qquad h(\bx_{t,1};\overline{\btheta}_t) - h(\bx_{t,2};\overline{\btheta}_t) + 6\varepsilon'_{m_{\text{NN}},t} + \beta_T'\norm{\phi(\bx_{t,1}) - \phi(\bx_{t,2})}_{\bV_{t-1}^{-1}}\\
&= h(\bx_{t,2};\overline{\btheta}_t) - h(\bx_{t,1};\overline{\btheta}_t) + 3 \beta_T' \norm{\phi(\bx_{t,1}) - \phi(\bx_{t,2})}_{\bV_{t-1}^{-1}} + 6\varepsilon'_{m_{\text{NN}},t}\\
&\stackrel{(e)}{\leq} 3 \beta_T' \norm{\phi(\bx_{t,1}) - \phi(\bx_{t,2})}_{\bV_{t-1}^{-1}} + 6\varepsilon'_{m_{\text{NN}},t}\\
\end{split}
\label{eq:upper:bound:inst:regret:neural}
\end{equation}

Step $(a)$ follows from Equation \ref{eq:diff:between:func:and:linear:approx:dueling}, step $(b)$ results from the triangle inequality, step $(c)$ has made use of Lemma \ref{lemma:bound:approx:error:linear:nn:duel}.
Step $(d)$ follows from the way in which we choose the second arm $\bx_{t,2}$: $\bx_{t,2} = \arg\max_{\bx\in\mathcal{X}_t} h(\bx;\overline{\btheta}_t) + \left( \beta_T + B\sqrt{\frac{\lambda}{\kappa_\mu}} + 1 \right) \norm{\left(\phi(\bx) - \phi(\bx_{t,1})\right)}_{\bV_{t-1}^{-1}}$.
Step $(e)$ results from the way in which we select the first arm: $\bx_{t,1} = \arg\max_{\bx\in\mathcal{X}_t} h(\bx;\overline{\btheta}_t)$.

Then we have
\begin{align}
    \sum_{t=T_0}^T r_t &\leq 3 \beta_T' \sum_{t=T_0}^T\norm{\phi(\bx_{t,1}) - \phi(\bx_{t,2})}_{\bV_{t-1}^{-1}} + 6T\varepsilon'_{m_{\text{NN}},T} \notag \\
    &=3 \beta_T'\sum_{t=T_0}^T\sum_{j\in[m]}\mathbb{I}\{i_t\in C_j\}\norm{\phi(\bx_{t,1}) - \phi(\bx_{t,2})}_{\bV_{t-1}^{-1}}  + 6T\varepsilon'_{m_{\text{NN}},T}\notag\\
    &\leq 3 \beta_T'\sqrt{\sum_{t=T_0}^T\sum_{j\in[m]}\mathbb{I}\{i_t\in C_j\}\sum_{t=T_0}^T\sum_{j\in[m]}\mathbb{I}\{i_t\in C_j\}\norm{\phi(\bx_{t,1}) - \phi(\bx_{t,2})}_{\bV_{t-1}^{-1}}^2} + 6T\varepsilon'_{m_{\text{NN}},T}\notag\\
    &\leq 3 \beta_T' \sqrt{T\cdot m\cdot 16 \widetilde{d}} + 6T\varepsilon'_{m_{\text{NN}},T}\\
    &\leq 12 \beta_T' \sqrt{T\cdot m\cdot \widetilde{d}} + 6T\varepsilon'_{m_{\text{NN}},T}\,,
\end{align}
where in the second inequality we use the Cauchy-Swarchz inequality, and in the last step we use $\sum_{t=T_0}^T\sum_{j\in[m]}\mathbb{I}\{i_t\in C_j\}\leq T$ and Lemma \ref{lemma:concentration:square:std:neural}.
It can be easily verified that as long as the conditions on $m$ specified in \cref{eq:conditions:on:m} are satisfied (i.e., as long as the NN is wide enough), we have that 
$6T\varepsilon'_{m_{\text{NN}},T} \leq 1$.

Recall that $\beta_T' \triangleq \beta_T + B \sqrt{\lambda / \kappa_\mu} + 1$ and $\beta_T \triangleq \frac{1}{\kappa_\mu} \sqrt{ \widetilde{d} + 2\log(u/\delta)}$.
Therefore, finally, we have with probability at least $1-4\delta$
\begin{align}
    R_T & \leq T_0+ 12 (\beta_T + B \sqrt{\lambda / \kappa_\mu} + 1) \sqrt{T\cdot m\cdot \widetilde{d}} + 1\notag\\
    &\leq O\left(u(\frac{\widetilde{d}}{\kappa_\mu^2\tilde\lambda_x \gamma^2}+\frac{1}{\tilde\lambda_x^2})\log T+\left(\frac{\sqrt{\widetilde{d}}}{\kappa_\mu} + B\sqrt{\frac{\lambda}{\kappa_\mu}}\right)\sqrt{\widetilde{d}mT} \right)\notag\\
        &=O\left(\left(\frac{\sqrt{\widetilde{d}}}{\kappa_\mu} + B\sqrt{\frac{\lambda}{\kappa_\mu}}\right)\sqrt{\widetilde{d}mT} \right)\,.
\end{align}

\end{document}